% !BIB TS-program = biber

\documentclass{article}
\usepackage{times}
\usepackage{subfiles}
\usepackage{fontawesome}
\usepackage[utf8]{inputenc}
\usepackage[english]{babel}
\usepackage{import}
\usepackage[ruled,vlined,linesnumbered]{algorithm2e}
\SetKwInOut{Input}{Input}
\SetKwInOut{Output}{Output}
\usepackage{graphicx}
\usepackage{csquotes}

\usepackage[dvipsnames]{xcolor}

\definecolor{mygreen}{rgb}{0,0.4,0}

\newcommand{\both}{$p\underline{q}$}
\newcommand{\onlyP}{$p \underline{\neg q}$}
\newcommand{\onlyQ}{$\neg p\underline{q}$}
\newcommand{\neither}{$\neg p\underline{\neg q}$}

\usepackage{bm}
\usepackage{float}
\usepackage[margin=3.2cm]{geometry}
\usepackage{subcaption}
\usepackage{amsmath, amsthm,mathtools,amssymb,comment}
\usepackage{stackengine} 

\usepackage{tikz}
    \tikzstyle{dnode}=[inner sep=1pt,outer sep=1pt,draw,circle,minimum width=9pt]
    \tikzstyle{nnode}=[inner sep=1pt,outer sep=1pt,circle,minimum width=9pt]
    \tikzstyle{label-edge}=[midway,fill=white, inner sep=1pt]
    \tikzstyle{w}=[draw,fill,circle,inner sep=1.5pt]

\tikzstyle{dnode}=[inner sep=1pt,outer sep=1pt,draw,circle,minimum width=9pt]
\tikzstyle{nnode}=[inner sep=1pt,outer sep=1pt,circle,minimum width=9pt]

\usetikzlibrary{shapes}
\usetikzlibrary{positioning, fit, shapes.geometric, matrix}

\usepackage{forest}
\forestset{
sn edges/.style={for tree={circle, draw, thick, minimum size=1cm,inner sep=1,edge={->,thick}}}
}

\newcommand{\ldel}{\mathcal{L}_{\text{DEL}}}

\usepackage{graphicx}
    \graphicspath{{./figures/}{./sections/figures/}}

\usepackage[backend=biber]{biblatex}
\bibliography{bib.bib}

\newcommand{\bisim}{{\underline{\leftrightarrow}}}

\newcommand{\post}{\mathsf{post}}

\newcommand{\states}{S}
\newcommand{\obs}{\Omega}
\newcommand{\transmod}{T}
\newcommand{\obsmod}{Obs}

\newcommand{\actions}{A}

\newtheorem{theorem}{Theorem}

\newtheorem{proposition}{Proposition}
\newtheorem{lemma}{Lemma}
\theoremstyle{definition}
\newtheorem{definition}{Definition}
\newtheorem{example}{Example}
\theoremstyle{remark}

%comments
\newcommand{\tobo}[1]{{\bfseries\color{cyan}[Thomas: #1]}}

\title{Learning to Act and Observe \\ in Partially Observable Domains}
\author{Thomas Bolander, Nina Gierasimczuk, Andr\'es Occhipinti Liberman}
\date{}

\begin{document}

\maketitle

\begin{abstract}
We consider a learning agent in a partially observable environment, with which the agent has never interacted before, and about which it learns both what it can observe and how its actions affect the environment. The agent can learn about this domain from experience gathered by taking actions in the domain and observing their results. We present learning algorithms capable of learning as much as possible (in a well-defined sense) both about what is directly observable and about what actions do in the domain, given the learner's observational constraints. We differentiate the level of domain knowledge attained by each algorithm, and characterize the type of observations required to reach it. The algorithms use dynamic epistemic logic (DEL) to represent the learned domain information symbolically. Our work continues that of Bolander and Gierasimczuk (2015), which developed DEL-based learning algorithms based to learn domain information in fully observable domains.
\end{abstract}

\iffalse
\tobo{Unfinished email:
There might be a problem with the indistinguishability transition system in the sense that it might never be learnable. Consider two distinct domains D1 and D2 over P = {p,q,r} defined by:

D1. If q is true, observe truth-value of p. If q is false, observe ~q.
D2. If p is true, observe p. If p and q are false, observe ~q. If p is false and q is true, observe ~p.

The sets of observationally indistinguishable states for D1 and D2 are:

D1. {~pqr, ~pq~r}, {pqr, pq~r}, {p~qr, ~p~qr, p~q~r, ~p~q~r}
D2. {~pqr, ~pq~r}, {pqr, pq~r, p~qr, p~q~r}, {~p~qr, ~p~q~r}

Now note that in D2, Obs(pqr) =  Obs(p~q~r), but not in D1. So the learner doesn’t consider the same states observationally indistinguishable in D1 and D2. However, the learner doesn’t seem to have any way to learn the relation of observational indistinguishability. All the agent learns is that sometimes it observes p, sometimes ~p and sometimes ~q. It could learn from transitions, of course, but assume there is only one action mapping each state to itself. Then it would for both domains only receive observations of the form (~p,~p), (~q,~q) and (p,p). Surely, it can then not distinguish between domains D1 and D1 and hence not between the two different relations of indistinguishability. In other words, the agent can never become behaviourally correct wrt the indistinguishability transition system.     
}	
\fi

\section{Introduction}

This paper explores a learning task which we refer to as \emph{domain learning}. Domain learning involves learning a representation of the dynamics of a domain (sometimes called an environment), from the experience gathered by performing actions in this domain and observing their results. We are concerned with domain learning in what we call unknown, partially observable domains. We introduce the learning task and the main results informally in this section, leaving formal details for later. 

Consider an agent inhabiting an unknown, partially observable domain. By an \emph{unknown} domain, we mean one that the agent hasn't interacted with before, so it doesn't know how its actions may affect the domain. For example, a river would be an unknown domain to an agent that has never seen or been to a swimming pool, the sea, a river, or any other body of water. The agent would indeed not know how its actions (moving the limbs, breathing, etc.) would interact with this domain. By a \emph{partially observable} domain, we mean one in which the agent may get to see only a part of the domain at any point in time. For example, the agent in the river may be able to see underwater only within two meters of distance, because the water is murky. Although the agent starts without any knowledge about how it can act in this domain, and what it can or cannot observe about it, we assume that it has access to experiences of interaction with the domain, gathered by trying out actions and observing what happens as a result. The goal of the agent is to \emph{learn a correct representation of what actions do, and what it can and cannot observe, from such experience}. We present learning algorithms that enable the agent to learn ``as much as possible'' about the domain, given the observational limits, provided that the learner is given a sufficient number of interactions to learn from. Of course, the exact meanings of ``learning as much as possible'', ``sufficient number of interactions'', etc., are important here, and will be made precise in the coming sections.

We are interested in learning representations of domains that rely on \emph{dynamic epistemic logic} (DEL). We therefore introduce DEL in Section \ref{sec:DEL}, where we also motivate our choice of this representation. Section \ref{sec:domains} presents domains and their properties formally. Sections \ref{sec:explicit}-\ref{sec:implicit} contain the main results of the paper. In Section \ref{sec:explicit}, we first introduce a distinction between what we call learning \emph{explicit} domain knowledge and learning \emph{implicit} domain knowledge. We define learning explicit domain knowledge as learning to know what will be directly observed when actions are executed. We contrast this with implicit knowledge, which includes what might additionally be inferred from the history of earlier actions and from the general experience with the domain. Section \ref{sec:explicit} focuses on learning explicit domain knowledge. We motivate this learning goal, formalise it, present a learner that achieves explicit domain knowledge, and we characterise the observations required to do so. Section \ref{sec:extending} extends learning beyond explicit knowledge, illustrating situations in which an agent can learn more about the domain than what can be directly observed in each state. Section \ref{sec:implicit} is focused on learning implicit domain knowledge. We formalise the goal of learning implicit knowledge, present a learner that attains this goal, and characterise the type of observations that are sufficient for doing so. Section \ref{sec:related} presents related work and Section \ref{sec:conclusion} concludes with final remarks and possible avenues for future research.

\section{Dynamic epistemic logic (DEL)\label{sec:DEL}}
Given a finite set $P$ of propositional symbols (atomic propositions), we define the (single-agent) \emph{dynamic epistemic language} $\ldel(P)$ over $P$ by the following BNF: 
\[
\phi ::=  p ~|~ \neg \phi ~|~ \phi \land \phi ~|~ K\phi \mid [\mathcal{E}] \phi,
\]
where $p \in P$ and $\mathcal{E}$ denotes an event model as defined below. We read $K\phi$ as ``it is known that $\phi$'' and $[\mathcal{E}] \phi$ as ``executing the action $\mathcal{E}$ necessarily leads to an epistemic model where $\phi$ holds''. By means of the standard abbreviations we introduce the additional symbols $\to$, $\vee$, $\leftrightarrow$, $\bot$, and $\top$. The modalities $[\mathcal{E}]$ are called \emph{dynamic modalities}. Formulas without dynamic modalities are called \emph{static formulas}. 
\begin{definition} %[Epistemic models and states]
An \emph{epistemic model} over a set of propositional symbols $P$ is $\mathcal{M} = (W,R,V)$, where 
\begin{itemize}
  \item $W$ is a finite set of \emph{worlds}
  \item $R\subseteq W \times W$ is an equivalence relation called the \emph{indistinguishability relation}
  \item $V: W \to 2^P$  is a \emph{valuation function} (or \emph{labelling function}, as it maps worlds to valuations)
  \end{itemize}
%An \emph{epistemic state} is a pointed epistemic model $(\mathcal{M},w)$ consisting of an epistemic model $\mathcal{M} = (W,R,V)$ and a distinguished world $w \in W$ called the \emph{actual world}. 
\end{definition}
%A \emph{propositional state} (or simply \emph{state}) over $P$ is a subset of $P$ (or, equivalently, a propositional valuation $\nu: P \to \{0,1 \}$).
%We identify propositional states and singleton epistemic models via the following canonical isomorphism. A propositional state $s \subseteq P$ is isomorphic to the epistemic model $\m = (\{w\},\{(w,w)\},V)$ where $V(p) = \{w\}$ if $p \in s$ and $V(p) = \emptyset$ otherwise. Truth in epistemic states $(\m,w)$ with $\m = (W,R,V)$ (and hence propositional states) is defined as usual and hence omitted.
Dynamic epistemic logic (DEL) introduces the concept of an event model for modelling the changes to states brought about by executions of actions \cite{baltag2004logics}. We here use a variant that includes (boolean) postconditions~\cite{ditmarsch2008semantic}, which means that actions can have both ontic effects (changing the factual states of affairs) and epistemic effects (changing the beliefs of agents).
\begin{definition} %[Event models] \label{defi:actionmodel}
An \emph{event model} over $P$ is $\mathcal{E} = (E,Q,pre,post)$, where
\begin{itemize}
  \item 
$E$ is a finite set of \emph{events}
  \item 
$Q \subseteq E \times E$ is an equivalence relation called the \emph{indistinguishability relation}
  \item 
$pre: E \to \ldel(P)$ assigns to each event a \emph{precondition}
  \item 
$post: E \to (P \to \ldel(P))$ assigns to each event a \emph{postcondition mapping} mapping each event $e$ into a \emph{postcondition} $post(e)$. In this paper, postconditions are boolean, meaning that for each event $e$ and proposition $p$, $post(e)(p) \in \{ \top, \bot, p \}$ ($p$ is set true, false or unchanged).  
\end{itemize}
%\ $\Dom(\a) = E$ denotes the domain of $\a$. The set of all event models over $P$ is denoted $\actionspace(P)$. 
\end{definition}

Intuitively, events correspond to the ways in which an action changes the epistemic model, and the indistinguishability relation codes (an agent's) inability to recognize the difference between those different ways. %Finally, $pre$ specifies what conditions have to be satisfied for an action to take effect, while $post$ specifies the final outcomes of an action.  
In an event $e$, $pre(e)$ specifies what conditions have to be satisfied for it to take effect, and $post(e)$ specifies its outcome.

\begin{example}\label{exam:hospital1}
Consider the action of tossing a coin. It can be represented by the following event model over $P = \{h \}$, where $h$ means that the coin is facing heads up:
\[
 \mathcal{E} \ =   \quad % \hspace{-2mm}
% \begin{minipage}{0.4\textwidth}
%\begin{array}{l}
\raisebox{-2mm}{
  \begin{tikzpicture}[>=latex,every loop/.style={->},minimum size=1.5mm,every node/.style={auto}]
  %  \small
    \node[label={below:$e_1\!:\langle \top, h \rangle$},w] (w1) at (0,0) {}; % edge [in=65,out=115,loop] node{$A$} () ; 
    \node[label={below:$e_2\!:\langle \top, \neg h \rangle$},w] (w2) at (3,0) {}; %edge [in=65,out=115,loop] node{$S,A$} ();
\end{tikzpicture} 
}
%\end{array}
%\end{minipage}
\]
We label each event $e$ by a pair whose first argument is the event's precondition while the second is its postcondition represented compactly as a sequence $ l_1\cdots~l_n$ of propositional literals given by: if $post(e)(p) = \top$ then $p$ is one of the $l_i$; if $post(e)(p) = \bot$, then $\neg p$ is one of the $l_i$; if $post(e)(p) = p$ then $p$ doesn't occur in $l_1\cdots~l_n$. Hence, formally we have $\mathcal{E} = (E,Q,pre,post)$ with $E = \{e_1,e_2\}$, $Q$ is the identity on $E$, $pre(e_1) = pre(e_2) = \top$, $post(e_1)(h) = \top$ and $post(e_2)(h) = \bot$. %\footnote{$post(e)$ is most often represented as $\{p_1 := \phi_1, \dots, p_n := \phi_n \}$, which stands for: $post(e)(p_i) = \phi_i$ for each $i\in\{1,\ldots, n\}$ and $post(e)(p) = p$ for all $p\notin\{p_1,\dots,p_n\}$.} 
The event model encodes that tossing the coin will either make $h$ true ($e_1$) or $h$ false ($e_2$).

\end{example}

Given an epistemic model $\mathcal{M} = (W,R,V)$ and a world $w \in W$, truth of an epistemic formula $\phi$ in $w$ of $\mathcal{M}$ is defined as follows:
\[
\begin{array}{lp{5mm}cp{5mm}l}
  \mathcal{M},w \models p && \text{iff}  && p \in V(w) \\
  \mathcal{M},w \models \neg \phi &&\text{iff} &&\mathcal{M},w \not\models \phi \\
  \mathcal{M},w \models \phi \wedge \psi &&\text{iff} &&\mathcal{M},w \models \phi \text{ and } \mathcal{M},w \models \psi \\
  \mathcal{M},w \models K \phi &&\text{iff} &&\text{for all $v\in W$, if $w R v$ then  $\mathcal{M},v \models \phi$} \\
  \mathcal{M}, w \models [\mathcal{E}] \phi &&\text{iff} &&\text{for all events $e$ in $\mathcal{E}$,} \\
  && &&\text{\quad if $\mathcal{M}, w \models pre(e)$ then $\mathcal{M} \otimes \mathcal{E}, (w,e) \models \phi$} 
 \end{array}\]
When $\mathcal{M}, w \models \phi$ for all $w \in W$, we write $\mathcal{M} \models \phi$. Our syntax and semantics of the dynamic modality, $[\mathcal{E}]$, is a bit non-standard. Normally, one considers \emph{pointed} event models $(\mathcal{E},e)$, where $e$ is an event of $\mathcal{E}$ called the \emph{actual event}, and then provides a semantics for a dynamic modality of the form $[\mathcal{E},e]$. In our setting, event models are going to be used by agents to represent their uncertainty about the dynamics of an action they are trying to learn. For instance, an agent could use an event model with two events $e_1$ and $e_2$ to represent that it doesn't know whether the execution of a particular action $a$ results in the occurrence of event $e_1$ or event $e_2$. In this case, of course the agent cannot point out an actual event among the two. So it makes sense to only consider non-pointed event models (or, equivalently, multi-pointed event models where all events are designated). Technically speaking, we could alternatively just have introduced the standard syntax and semantics and then introduced the notation $[\mathcal{E}]\phi$ as an abbreviation of $\bigwedge_{e \in E} [\mathcal{E},e] \phi$, for all $\mathcal{E} = (E,Q,pre,post)$ and all formulas $\phi$.

%\tobo{Make sure to fix postconditions in example after Thm 1.}

%\nina{Now I am thinking that maybe we could devise some running example for the whole paper, for instance with a robot in a room with some move, push, press actions each of them with different properties from the list below, like for instance some of them are atomic, some are deterministic, etc. But that could be too much for LORI anyway. In any case the above example is not ideal because it does not intuitively match the action names-action executions setting. Also it is an epistemic example.} \tb{It is not purely epistemic, because the $anxious$ proposition could change truth value. Concerning running example of a Sokoban-like domain: I thought about it. But it will be too much for this paper, as you say. Instead, I have made a couple of different small examples to illustrate different points (and show versatility of our framework): a simple push button, a simple electrical network with two push buttons, a 2-bit binary counter.} 
%The event model can be applied to an epistemic model via the product update, resulting in a new epistemic model.
\begin{definition}[Product update]
Let $\mathcal{M} = (W,R,V)$ be an epistemic model over $P$ and $\mathcal{E} = (E,Q,pre,post)$ an event model over $P$. The \emph{product update} of $\mathcal{M}$ with $\mathcal{E}$ is the epistemic model $\mathcal{M} \otimes \mathcal{E} = (W',R',V')$, where
\begin{itemize}
  \item 
 $W' = \{ (w,e) \in W \times E ~|~ (\mathcal{M}, w) \models pre(e) \}$
  \item 
 $R' = \{ ((w,e),(v,f)) \in W' \times W' ~|~ wRv \text{ and } eQf \}$
  \item 
$V'((w,e)) = \{p  \in P  ~|~ \mathcal{M}, w \models post(e)(p)  \}$
  \end{itemize}
 % For $e \in \Dom(\a)$, we define $\mathcal{M} \otimes e = \mathcal{M} \otimes (\a \upharpoonright \{ e\})$.
\end{definition}
The product update $\mathcal{M} \otimes \mathcal{E}$ represents the result of executing the action $\mathcal{E}$ in the epistemic model represented by $\mathcal{M}$. It is well known that for finite models, two epistemic models are modally equivalent (satisfy the same formulas) iff they are bisimilar~\cite{blackburn2001modal}. For single-agent models as we consider here, bisimilarity of two models $\mathcal{M}$ and $\mathcal{M}'$ reduces to checking whether for each connected component of $\mathcal{M}$ (each equivalence class of worlds wrt $R$ in $\mathcal{M}$) there exists a connected component of $\mathcal{M}'$ containing the same valuations, and vice versa. Each epistemic model can easily be replaced by its minimal bisimilar representation (its \emph{bisimulation contraction}) achieved by
only preserving one world for each set of worlds with identical valuations within a connected component.
%removing all worlds containing duplicate valuations within a connected component. 
In the following, we will systematically assume each epistemic model achieved through a product update to be replaced by its bisimulation contraction, and we will generally identify isomorphic models. In this way, we consider bisimilar models to be identical. 
\begin{example}\label{exam:hospital2}
Continuing Example~\ref{exam:hospital1}, consider a situation of an agent seeing a coin lying heads up. It can be represented by the epistemic model $\mathcal{M} = (\{ w\} , \{(w,w) \} ,V)$ with $V(w) = \{ h \}$. Let us now calculate the result of executing the coin toss in this model;
\[
  \mathcal{M} \otimes \mathcal{E}\ =  \quad  %\hspace{-3mm}
% \begin{minipage}{0.2\textwidth}
%\begin{array}{l}
\raisebox{-2mm}{
  \begin{tikzpicture}[>=latex,every loop/.style={->},minimum size=1.5mm,every node/.style={auto}]
  %  \small
    \node[label={below:$(w_1,e_1)\!:h$},w] (w1) at (0,0) {}; % edge [in=65,out=115,loop] node{$A$} () ; 
    \node[label={below:$(w_1,e_2)\!:\text{ }$},w] (w2) at (3,0) {}; %edge [in=65,out=115,loop] node{$S,A$} ();
\end{tikzpicture} 
}
%\end{array}
\]
Here, each world is labelled by its valuation, i.e.~the atomic propositions true at the world. In $\mathcal{M}$, the agent knows that the coin is facing heads up, that tossing it is neither guaranteed to lead to heads nor tails, but after the coin has been tossed, the agent will know. Those facts are encoded by the following:
\[
  \mathcal{M}, w \models K ( h \wedge \neg  [\mathcal{E}] h \wedge \neg  [\mathcal{E}] \neg h \wedge  [\mathcal{E}] (K h \vee K \neg h) 
  )
\]  
\end{example}
%We say that two event models $\a_1$ and $\a_2$ are \emph{equivalent}, written $\a_1 \equiv \a_2$, if for any epistemic model $\mathcal{M}$, $\mathcal{M} \otimes \a_1 \bisim \mathcal{M} \otimes \a_2$, where $\bisim$ denotes standard bisimulation on epistemic models~\cite{sietsma2013action}. 

In this paper, we only consider single-agent DEL, as we consider a single agent trying to learn the dynamics of an environment. It might at first seem excessive to introduce all the machinery of DEL to only consider the single-agent version. It is well known that for single-agent epistemic logic, it is sufficient to represent epistemic models as sets of propositional states (often called \emph{belief states}). However, framing our results in the general setting of DEL is relevant for (at least) four reasons. The first is that it gives us the ability to use the event models of DEL to provide compact representations of actions (more compact than just describing actions as a set of possible transitions). Of course, other languages for action descriptions exist, in particular languages like STRIPS \cite{fikes1971strips} for describing planning domains, but those languages rarely support compact representations of partially observable actions. The second reason is that DEL integrates dynamic modalities in the logical language, so that our agents/learners can explicitly formulate their knowledge about action consequences, as was illustrated in Example~\ref{exam:hospital2}. The third is that the ultimate goal of the line of research introduced here is to be able to generalise to the multi-agent case, where a learner might learn not only what an action does and from what the agent observes, but also from what other agents observe. The fourth reason is that we also intend to integrate our learning algorithms into epistemic planning robots based on DEL~\cite{dissing2020implementing}. The goal is that the robots can not only do planning based on known actions, but can also learn new actions and new environment dynamics.

 %\tobo{Maybe to be discussed/added somewhere: When using epistemic models, we need to identify models at least up to isomorphism (renaming of worlds). In principle, there might also be situations where we need to identify up to bisimulation, but I don't know if we end up creating situations where this is necessary.}

\section{Transition systems and partially observable domains \label{sec:domains}}
A domain typically consists of a set of states, a set of actions, and a state transition function mapping pairs of states and actions into the possible successor states~\cite{geffner2013concise}. When domains are partially observable, we also need to specify what is observed, e.g., with the use of an observation function. A domain can be deterministic or non-deterministic depending on whether an action can have one or more possible outcomes (one or more possible successor states). In this paper, we are only concerned with deterministic actions. However, a learner might during learning consider several possible outcomes of a given action, and hence we also consider non-deterministic transition systems. The set of states of a domain are typically specified as a subset of $2^P$ for some finite set of propositional symbols $P$. In this paper, our learners only try to learn domains of this type, but they might still represent their knowledge of such domains by domains of a more general type. Hence our definition of a domain will be more general. 
%but to define our learning algorithms we also need more general domains where states might not be of this form. 
 %For simplicity, we will here only consider domains where the set of states is the entire set $2^P$ for some set of propositional symbols $P$. 

Consider a domain where the states are subsets of $P$ for some set of propositional symbols $P$. Observation functions often map into so-called `observation tokens' that can be completely separate from the language $P$ used to describe states. In our setting, however, we will assume that what is being observed in a state is the truth-value of a subset of the propositional symbols, hence directly connecting observations to the state descriptions themselves. For instance, if $l_r$ is a propositional symbol denoting that the light is on in room $r$, and $s$ is a state in which an agent is present in room $r$, the agent would be observing the truth-value of $l_r$ in state $s$. Given these assumptions, we now first define (labelled) transition systems~\cite[Ch~1]{sangiorgi_2011} and then our partially observable domains.
%define our partially observable domains as follows (henceforth mostly simply referred to as `domains'). 
\begin{definition} A \emph{transition system} %over a finite set of propositional symbols $P$ 
is a tuple $\mathcal{T} = (S,\actions,\transmod,s_0)$ where 
\begin{itemize}
    \item $S$ is a finite set of \emph{states}  
    %    \states \subseteq 2^P$ is the set of \emph{states} %, where each state $s\subseteq P$ is represented by the set of propositional symbols that are true in the state
    \item $\actions$ is a finite set of \emph{actions}
    \item $\transmod: S \times \actions \rightarrow 2^S$ is a \emph{transition function}
    \item $s_0 \in S$ is the \emph{initial state}
\end{itemize}
 A transition system is called \emph{deterministic} if for all $s \in S$ and $a \in A$, $| T(s,a) | \leq 1$. An action $a \in A$ is called \emph{universally applicable} if for all $s \in S$, $| T(s,a) | \geq 1$.
 In a deterministic transition system where all actions are universally applicable, we hence have that $T(s,a)$ is a singleton for all $s$ and $a$.
   In that case, we often write $T(s,a) = s'$ instead of $T(s,a) = \{s'\}$, that is, we take $T$ to be a mapping $T: S \times A \to S$.  A transition system over a finite set of propositional symbols $P$ is a transition system where $S \subseteq 2^P$.
 \end{definition}
In this paper, we are concerned with learning actions of transition systems that are deterministic and in which every action is universally applicable. It might not be natural for any action to be applicable in any state, e.g.,\ ``open door'' might not be applicable in a state where the door is locked. However, we can replace any such action $a$ by a ``try $a$'' action $a'$ that is universally applicable by simply letting $T(s,a') = s$ for the states $s$ in which $a$ is not applicable (we can always attempt to open the door even if locked, but then it will simply stay closed). 
 We will also restrict attention to transition systems that are `generated' by their initial state $s_0$, that is, where any state $s \in S$ can be reached from $s_0$ by some action sequence. This simplifies things by making it clear that the learner always starts in the same state, and has the possibility to reach any state of the system.  
 %\tobo{Maybe cite a source of the def below or explain how standard or non-standard it is.}  
\begin{definition}   
   A \emph{(partially observable) domain}  is a tuple $\mathcal{D} = (\mathcal{T},\obs,\obsmod)$ 
    where
    \begin{itemize}
      \item  $\mathcal{T} = (S,A,T,s_0)$ is a transition system in which every action is universally applicable and in which every state can be reached by some action sequence applied to $s_0$
      \item $\obs$ is a set of \emph{observations}
      \item $\obsmod: S \to 2^\Omega$ is an \emph{observation function} mapping each state $s$ into the set of observations that are possible to receive in $s$
  \end{itemize}
       An observation function $\obsmod$ is \emph{deterministic} if $| \obsmod(s) | = 1$.        
       In that case we often write $\obsmod(s) = o$ instead of $\obsmod(s) = \{ o \}$, that is, we take $\obsmod$ to be a mapping $S \to \Omega$. In this case, an agent entering state $s$ will always receive the same observation $\obsmod(s)$. A domain $\mathcal{D} = ((S,A,T,s_0),\obs,\obsmod)$ is \emph{deterministic} if both $T$ and $\obsmod$ are deterministic. In this paper, we are going to assume that the observation function is deterministic.   
   
A domain \emph{over a set of propositional symbols} $P$ is $\mathcal{D} = ((S,A,T,s_0),\obs,\obsmod)$ where $S \subseteq 2^P$ and $\obs =  2^P \times 2^P$. An observation $(o^+,o^-) \in Obs(s)$ is split into a set $o^+$ of the \emph{propositions observed to be true} and $o^-$ of \emph{propositions observed to be false}.  We will assume observation functions to be \emph{noiseless}, that is, every proposition in $o^+$ is true in $s$, and every proposition in $o^-$ is false in $s$. Mostly, our domains over $P$ will be deterministic, and any state $s$ then always produces the same observation $Obs(s) = (o^+,o^-)$. 
%   2^P$ is an \emph{observation function}, mapping each state $s$ into the set of propositional symbols whose truth value is observed in $s$.
\end{definition}
When $\mathcal{D} = ((S,A,T,s_0),\Omega,Obs)$, we will sometimes use $s \in \mathcal{D}$ as an abbreviation for $s \in S$. 

Even though we will only consider learning deterministic domains, such domains could still appear non-deterministic to a learner. For instance, an action $a$ might produce distinct outcomes when applied in two states $s$ and $t$ that are \emph{observationally indistinguishable}, that is, for which $Obs(s) = Obs(t)$. In such cases, a learner might (provisionally) decide to represent what has been learned as a non-deterministic transition system, even if the learner actually knows that the underlying transition system must be deterministic. 

In the following, we will identify a transition function $T: S \times \actions \to 2^S$ with its induced relation on $S \times \actions \times S$, given by: $(s,a,s') \in T$ iff $s' \in T(s,a)$. 
 Given a state $s$ in a transition system over $P$, we define $s^+ := s$ and $s^- := P-s$. In other words, $s^+$ is the set of atomic propositions true in $s$, and $s^-$ is the set of atomic propositions false in $s$.  Given a domain $\mathcal{D} = ((S,A,T,s_0),\obs,\obsmod)$, we define the set of \emph{possible observations} in $\mathcal{D}$ to be $\{ o \in \obsmod(s) \mid s \in S \}$. For deterministic domains over $P$, the set of possible observations is $\{ (x,y) \in 2^P \times 2^P \mid Obs(s) = (x,y) \text{ for some } s \in S \}$. For a possible observation $o = (x,y)$, we often use $o^+$ to denote the set of positively observed propositions $x$ and $o^-$ the set of negatively observed propositions $y$. Similarly, for an observation $Obs(s) = (x,y)$, we often write $Obs^+(s)$ for $x$ and $Obs^-(s)$ for $y$.

  To improve readability, we will often represent states by the sequence of literals true in the state, so e.g.\ if $P = \{p,q,r\}$, the state $s = \{p,r\}$ will be represented as $p\neg q r$. More generally, for any sequence of propositional literals $l_1,\dots,l_n$ containing exactly one occurrence of each propositional symbol in $P$, the state $s$ satisfying all the $l_i$ is denoted $l_1\cdots~l_n$. When the sequence $l_1,\dots,l_n$ contains \emph{at most} one occurrence of each propositional symbol in $P$, we use the term \emph{$l_1\cdots~l_n$-state} to denote any state $s$ satisfying all the $l_i$, i.e.,~any state $s$ with $s \models l_1 \wedge \cdots \wedge l_n$. So for instance if $P = \{p,q,r\}$, then the $pq$-states are $pqr$ and $pq\neg r$.  
  We will also sometimes use this type of notation for observations: Given the observation $(Obs^+(s),Obs^-(s))$ received in $s$, if $Obs^+(s) = \{ p_1,\dots,p_n \}$ and $Obs^-(s) = \{ q_1,\dots, q_m \}$, we will represent the observation compactly as $p_1 \cdots p_n \neg q_1 \cdots \neg q_n$. 

 \newcommand{\rooms}[3]{
 % \tikz{
   \begin{tikzpicture}[scale=0.6]
%      \ifthenelse{\equal{#1}{1}}{\draw[fill=yellow] (1.5,1) rectangle (3,0);
%}{    
     \draw (0,0) rectangle (1.5,1);
 \draw (1.5,1) rectangle (3,0); %}
   %  \draw (0.2,0.15) rectangle (0.6,0.5);
   %  \draw (0.2,0.5) rectangle (0.6,0.85);
     \ifthenelse{\equal{#1}{0}}{\node[scale=1.7] at (2.55,0.5) {\faLightbulbO};}{\node at (2.55,0.5) {\includegraphics[height=0.6cm]{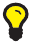}};}
      % \faLightbulbO};
%     \ifthenelse{\equal{#3}{0}}{\draw[fill] (0.2,0.15) rectangle (0.6,0.5); }{\draw[fill] (0.2,0.5) rectangle (0.6,0.85);}
    \ifthenelse{\equal{#3}{0}}{\node[rotate=90,scale=1.4] at (0.4,0.5) {\small \faToggleOff};}{\node[rotate=90,scale=1.4] at (0.4,0.5) {\small \faToggleOn};}
     \ifthenelse{\equal{#2}{0}}{\node[scale=1.4] at (1.1,0.5) {\faOdnoklassniki};}{\node[scale=1.4] at (1.9,0.5) { \faOdnoklassniki}; }
  %   \begin{array}{|c|c|} \hline
%     \ifthenelse{\equal{#2}{0}}{\faOdnoklassniki}{}
 %       &      \ifthenelse{\equal{#2}{1}}{\faOdnoklassniki}{}
 %         \ifthenelse{\equal{#1}{1}}{\lighton}{\lightoff}  \\
 %       \hline
 \node at (-0.2,0) {};
 \node at (3.0,0) {};
   \end{tikzpicture}
 }
\begin{example}\label{example:lightswitch_domain}
Consider an environment with two rooms, the left room containing a toggle switch and the right room containing a lamp, see Figure~\ref{figu:lightswitch_transsys}. The switch controls the lamp, but the status of the light in the right room cannot be observed when being in the left room---and the status of the switch cannot be observed when being in the right room. We can model the environment as a domain over $P = \{l, r, s\}$, where $l$ means that the light is on, $r$ means that the agent is in the right room, and $s$ means that the switch is on. The initial state is $s_0 = \neg l \neg r \neg s$: the light and the switch are both off and the agent is in the left room. There are two actions available to the agent, $\it{flip}$ and $\it{move}$. Executing the $\it{flip}$ action flips the switch, that is, flips the truth value of both $s$ and $l$. Executing the $\it{move}$ action means moving to the other room, that is, flipping the truth value of $r$. Hence $A = \{\it{flip}, \it{move} \}$, and the transition system $(S,A,T,s_0)$ underlying the domain is then the system illustrated in Figure~\ref{figu:lightswitch_transsys}. We have no outgoing edges for the $\it{flip}$ action in $s_2$ and $s_3$, since the switch cannot be operated when being in the right room. To get a transition system in which every action is universally applicable, we simply assume there to be a reflexive loop for the $\it{flip}$ action in $s_2$ and $s_3$ (the $\it{flip}$ action is replaced by a ``try $\it{flip}$'' action). We often leave such reflexive edges implicit. 
%We have omitted the reflexive edges in the figure, and will continue to do so when illustrating transition systems of domains, leaving it implicit that if a state $s$ doesn't have an outgoing edge for an action $a$, then it has a reflexive loop at $s$ (all transition systems considered will have universal applicability of all actions).%
 \begin{figure}
 \[
 \begin{tikzpicture}[auto,align=center]
   \node %[label=below:$s_0: \neg l \neg r \neg s$] 
   (s0) {\rooms{0}{0}{0} \\[-1mm]  $s_0: \neg l \underline{ \neg r \neg s}$};
   \node[below left of=s0,node distance=30mm] (s1) {
   \rooms{1}{0}{1} \\[-1mm] $s_1: l \underline{\neg r s}$};
   \path[latex-latex] (s0) edge node[above left] {$\it{flip}$} (s1);
      \node[below of=s0,node distance=42mm] (s2) {\rooms{1}{1}{1} \\[-1mm] $s_2: \underline{l r} s$};
   \path[latex-latex] (s1) edge node[below left] {$\it{move}$} (s2);
   \node[below right of=s0,node distance=30mm] (s3) {\rooms{0}{1}{0} \\[-1mm] $s_3: \underline{\neg l r} \neg s$};
    \path[latex-latex] (s0) edge node {$\it{move}$} (s3);
    \node[right of=s3,node distance=40mm] {\begin{tabular}{|c|l|} \hline
    \raisebox{-0.5mm}{\scalebox{1.2}{\faLightbulbO}} & light bulb off \\
    \raisebox{-1mm}{\includegraphics[height=0.4cm]{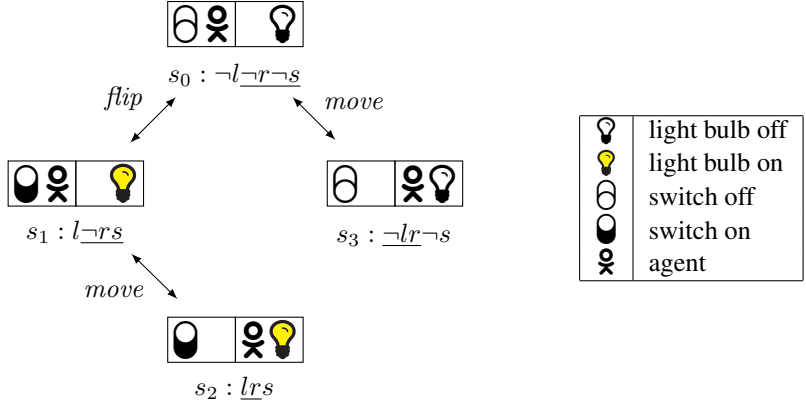}} & light bulb on \\
    \rotatebox[origin=c]{90}{\faToggleOff} & switch off \\
    \rotatebox[origin=c]{90}{\faToggleOn} & switch on \\
    \faOdnoklassniki & agent \\ \hline
    \end{tabular} 
    }; 
 \end{tikzpicture}
 \]
\caption{The transition system for the light switch domain.}\label{figu:lightswitch_transsys}
 \end{figure}
Note that the transition system is deterministic. Letting $S = \{s_0,s_1,s_2,s_3\}$, it is also a transition system in which each state is reachable from the initial state $s_0$. 

The environment can now be described as the domain $\mathcal{D} = ((S,A,T,s_0),\obs,\obsmod)$ over $P$ with $Obs(s_0) = \neg r \neg s$, $Obs(s_1) = \neg r s$, $Obs(s_2) = l r$, and $Obs(s_3) = \neg l r$. Note that we could have equivalently  defined $Obs$ by saying that the truth value of $r$ (location of agent) is observed in any state, that the truth value of $s$ (the position of the switch) is observed only in the left room, and that the truth value of $l$ (the status of the lamp) is only observed in the right room. In Figure~\ref{figu:lightswitch_transsys}, we have underlined the observed literals in each state.
\end{example}

\section{Learning explicit domain knowledge\label{sec:explicit}}
When learning fully observable domains, the learning goal is to learn the full underlying transition system (what each action does). This is possible under some reasonable assumptions~\cite{walsh2008efficient}. However, for partially observable domains, the goal of learning the full underlying transition system will in general not be attainable. An agent that can in no state observe the truth value of $p$, will never learn how its actions affect $p$. So we have to revise the learning goal to be that the learner learns whatever is possible given its observational limitations. The challenge is then generalising the learning from the case of full observability to the case of partial observability. %It turns out to be not completely trivial to define exactly what this means, and this is one of the major challenges of generalising learning from the case of full observability to the case of partial observability.
Indeed, even the earlier claim---that an agent that can never observe the truth value of $p$, can never learn how its actions affect $p$---is not always true. 
\begin{example}\label{example:qq}
Consider a domain over two propositions $p$ and $q$. There is a single action $a$. The truth value of $q$ is always observed and the truth value of $p$ is never observed. Suppose that whenever a sequence of executions of $a$ is performed in the initial state, the learner receives the following sequence of observations: $q, q, \neg q, q, q, \neg q, \dots$, that is, a sequence of two $q$-states followed by a $\neg q$-state, then two $q$-states again, etc. Hence, in some of the $q$-states, the action $a$ produces another $q$-state, and in other $q$-states it produces a $\neg q$-state. 
Since the underlying transition system is assumed to be deterministic, then if we assume our learners to know this (which we are generally going to do), a learner receiving such observation sequences should hence be able to infer that there must be two distinct kinds of $q$-worlds. Assume  furthermore that our learners know the underlying language of the domain, that is, the available propositional symbols (another assumption that we are generally going to make in this paper). Then a learner can infer that the two distinct $q$-worlds must be distinct by assigning different truth values to $p$. In other words, the learner can infer that the $a$ action affects $p$. The learner can never know exactly \emph{how}, i.e.~will never be able in any state to infer the real truth value of $p$. But the learner can still learn something about the relation between the action $a$ and the proposition $p$, e.g.\ that $p$ does not always have the same truth value, but will have the same truth value every third time $a$ is executed.  \end{example}

We can distinguish between two types of knowledge that the learner can achieve, \emph{implicit} and \emph{explicit} knowledge. There is a rich literature in epistemic logic on explicit and implicit knowledge, and with different meanings assigned to the two concepts. In this paper, we will take explicit knowledge to be what is known because it is directly observed in the current state, and implicit knowledge to be whatever might additionally be inferred from the history of earlier actions and general experience with the environment (domain). In Example \ref{example:qq} the only explicit knowledge the learner can gain is that action $a$ sometimes makes a $q$-state into another $q$-state and sometimes into a $\neg q$-state; and that it always makes a $\neg q$-state into a $q$-state. Learning that every second time $a$ is executed in a $q$-state it produces a $\neg q$-state is not something we will consider to be explicit knowledge, as it can never be explicitly known how many times $a$ has been executed before  (it is not directly observed in any state). 

At first, we will only focus on learning explicit knowledge, that is, learners that learn to know what will be directly observed when actions are executed. Even if an agent is able to learn more than this, it is still interesting to have learners that can identify the explicit knowledge resulting from action executions. Consider for instance a person, Agnes, in a room with a switch that controls the light in the room next door, similar to the scenario of Example~\ref{example:lightswitch_domain}. Suppose the door to the other room is closed, so that it is not possible to observe whether the light in there is on or not. Suppose further that initially Agnes doesn't know what the light switch does, since she is in a house that she hasn't been in before, a summer house that she borrowed from a friend. If Agnes decides to explore the environment (the summer house), she might learn that the light switch controls the light in the other room, e.g.,\ by opening the door or peeking through the key hole. Hence, potentially, she might in this case be able to learn the full underlying transition system of the domain consisting of the light switch and the lamp in the other room. In this case, her implicit knowledge about the domain will be complete. However, identifying the implicit knowledge and the full underlying transition system is not the only thing that's relevant. Consider for instance that after the end of her stay, another friend, Bertram, comes to stay in the summer house. Bertram also never stayed there before. When Bertram arrives, the door to the other room might be closed, and the light in there is on. In this case, it would clearly be relevant for Agnes to know that Bertram now is in the same information state as she was initially, and then he will not be able to observe the light in the other room. So she might tell him: ``The light in the other room is on. This switch controls the light. Please make sure to turn off the light before you leave.'' In other words, at least in multi-agent scenarios, it is relevant to learn not only as much as we can possibly come to know about the environment, but also what we are able to directly see and not see in this environment.

\subsection{Compatibility domain}     
To define a learner for explicit knowledge, we first need some additional technical definitions. Given a deterministic domain $\mathcal{D} = ((S,A,T,s_0),\obs, \obsmod)$ over some set $P$, suppose an execution of an action $a \in A$ produces a transition $(s,t)$ (that is, we have $T(s,a) = t$). Then the learner observing the action execution doesn't get to see the state transition from $s$ to $t$ itself, but only observes a transition from the observation $(Obs^+(s),Obs^-(s))$ to the observation $(Obs^+(t),Obs^-(t))$. %Generally, we define $Obs(s,t) = (Obs(s),Obs(t))$, and say that $Obs(s,t)$ is the \emph{observation} of the transition $(s,t)$. 
Generally, given a true state $s$, the learner observes only $(Obs^+(s),Obs^-(s))$. What is the set of states $t$ that the learner thinks could have produced the observation? If the learner has no additional information about the domain, it can only conclude that the underlying state $t$ must be among the states consistent with what has been observed, that is, a state $t$ satisfying $Obs^+(s) \subseteq t^+$  and  $Obs^-(s) \subseteq t^-$. More formally, given a state $s$, we define the states \emph{observationally compatible} with $s$ as the set of states $comp(s) := \{ t \in 2^P \mid Obs^+(s) \subseteq t^+  \text{ and } Obs^-(s) \subseteq t^- \}$. We can extend this notion to observations $(x,y) \in 2^P \times 2^P$ in the obvious way: $comp((x,y)) = \{ t \in 2^P \mid x \subseteq t^+  \text{ and } y \subseteq t^- \}$. We then get that $comp(s) = comp((Obs^+(s),Obs^-(s))$. Note that for any states $s,t \in S$, we have $comp(s) = comp(t)$ iff $Obs(s) = Obs(t)$. 
\begin{definition}
Let $\mathcal{D} = ((S,A,T,s_0),\obs,\obsmod)$ be a deterministic domain over $P$. The \emph{compatibility domain} $((S',A',T',s'_0),\obs',\obsmod')$ induced by $\mathcal{D}$ is given by:
%Let $(S,A,T)$ be a transition system over $P$.
%, and let $f: P \to 2^P$ be a compatibility mapping. The transition system $(S',A',T')$ \emph{induced} by $(S,A,T)$ and $f$ is given by:
\begin{itemize}
  \item $S' = \{ comp(s) \mid s \in S \}$
  \item $A' = A$
  \item $T' = \{ (comp(s),a,comp(t)) \mid (s,a,t) \in T \}$ 
  \item $s'_0 = comp(s_0)$ 
  \item $\Omega' = 2^P \times 2^P$
  \item $Obs'(comp(s)) = Obs(s)$.\footnote{This is well-defined since $comp(s) = comp(t)$ iff $Obs(s) = Obs(t)$.} 
\end{itemize}
\end{definition}
The compatibility domain $\mathcal{D'}$ induced by a domain $\mathcal{D}$ is the image of $\mathcal{D}$ under the compatibility mapping $comp$. It hence encodes the original transition systems as seen through the lens of the observation function. It encodes what is directly observed by the agent when actions are executed in the domain. Note that $\mathcal{D}'$ is not necessarily deterministic even if $\mathcal{D}$ is: a compatibility state $comp(s)$ might contain distinct states for which a given action $a$ produces distinct outcomes, and where these distinct outcomes can be observationally distinguished.  

 \begin{example}
 \begin{figure}
 \[
 \scalebox{0.9}{
 \begin{tikzpicture}[auto,align=center]
  % \matrix (A) [matrix of nodes,align=center] {
   % \node (s01) {\rooms{0}{0}{0} \\[-1mm]  $s_0: \neg l \underline{ \neg r \neg s}$} 
    %\\
  % \node[below of=s01,node distance=12mm] (s02) {\rooms{1}{0}{0} \\[-1mm]  $s'_0: l \underline{ \neg r \neg s}$} 
%  };
  \node[circle,draw,inner sep=-13pt,label={left:$comp(s_0)$ \\ $Obs = \neg r \neg s$}] (s0) at (0,0) {
 \begin{tikzpicture}[auto,align=center,inner sep=0pt] 
   \node (s01) {\rooms{0}{0}{0} \\[-1mm]  $s_0: \neg l \underline{ \neg r \neg s}$};
   \node[below of=s01,node distance=13mm] (s02) {\rooms{1}{0}{0} \\[-1mm]  $s'_0: l \underline{ \neg r \neg s}$}; 
  \end{tikzpicture}
  };
  \node[below left of=s0,node distance=40mm,circle,draw,inner sep=-13pt,label={left:$comp(s_1)$ \\ $Obs = \neg r s$}] (s1) {
  \begin{tikzpicture}[auto,inner sep=0pt]
  \node (s11) {\rooms{1}{0}{1} \\[-1mm] $s_1:  l \underline{\neg r s}$};
  \node[below of=s11,node distance=13mm] (s12) {\rooms{0}{0}{1} \\[-1mm] $s'_1: \neg l \underline{\neg r s}$};
  \end{tikzpicture}
  };
   \path[latex-latex] (s0) edge node[above left] {$\it{flip}$} (s1);
      \node[below of=s0,node distance=58mm,circle, draw, inner sep=-13pt,label={left:$comp(s_2)$ \\ $Obs = lr$}] (s2) {
      \begin{tikzpicture}[inner sep=0pt]
      \node (s21) {\rooms{1}{1}{1} \\[-1mm] $s_2: \underline{l r} s$};
      \node[below of=s21,node distance=13mm] (s22) {\rooms{1}{1}{0} \\[-1mm] $s'_2: \underline{l r} \neg s$};
      \end{tikzpicture}
      };
   \path[latex-latex] (s1) edge node {$\it{move}$} (s2);
    \node[below right of=s0,node distance=40mm,circle,draw,inner sep=-13pt,label={left:$comp(s_3)$ \\ $Obs=\neg l r$}] (s3) {
    \begin{tikzpicture}[inner sep=0pt]
       \node (s21) {\rooms{0}{1}{0} \\[-1mm] $s_3: \underline{\neg l r} \neg s$};
       \node[below of=s21,node distance=12mm] (s22) {\rooms{0}{1}{1} \\[-1mm] $s'_3: \underline{\neg l r} s$};
    \end{tikzpicture}
    };
    \path[latex-latex] (s0) edge node {$\it{move}$} (s3);
 \end{tikzpicture}
 }
 \]
  \caption{The compatibility domain $\mathcal{D}'$ induced by the light switch domain $\mathcal{D}$ of Example~\ref{example:lightswitch_domain}. } \label{figu:lightswitch_compdomain}
  \end{figure}
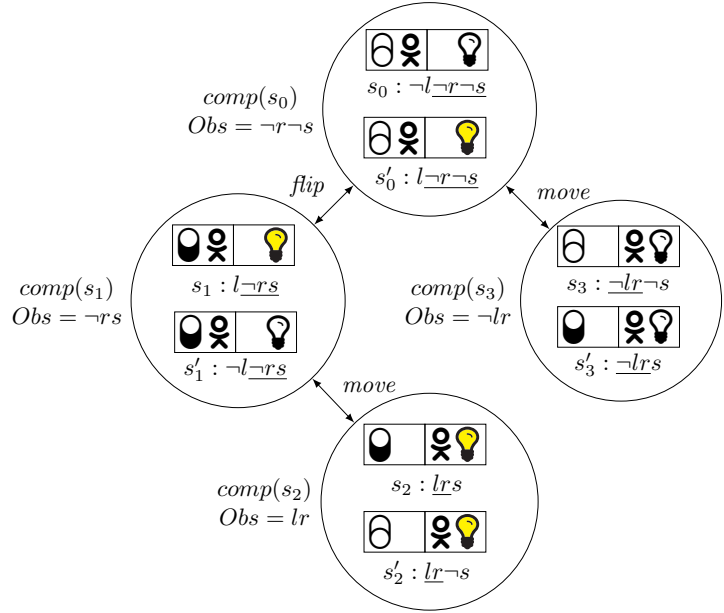
 Consider again the light switch domain $\mathcal{D}$ of Example~\ref{example:lightswitch_domain}. Figure~\ref{figu:lightswitch_compdomain} shows the compatibility domain $\mathcal{D}'$ induced by $\mathcal{D}$. In the figure, each state is marked by which compatibility state, $comp(s_i)$, it is, and additionally by the observation that defines it (the observation received in $s_i$). Note that $\mathcal{D}'$ is still deterministic. In fact, the underlying transition system of $\mathcal{D}$ is isomorphic to the underlying transition system of $\mathcal{D}'$. 
   At first one might think that this implies that a learner can learn to identify the full underlying system, despite its observational limitations. This, however, is not so. Even if a learner can use its observational powers to construct $\mathcal{D}'$ from observing action executions, there are things it can never learn. For instance, while the learner will be able to infer that, from the initial state, first flipping the switch and then moving to the other room will lead to a state where the light is on (the compatibility state $comp(s_2) = \{ s_2, s'_2 \}$), it will never be able to know whether it is the action of flipping the switch itself, or only the combined action of first flipping the switch and then moving, that made the light go on. This is due to the effect of the $\it{flip}$ action applied to $comp(s_0)$ being $comp(s_1) = \{s_1,s'_1\}$, in which there is both an $l$-state and a $\neg l$-state. In other words, the agent will never be able to distinguish the real domain from one in which the electrical circuit controlling the light has a serial connection containing both the light switch and a movement detector, 
   %the light switch is in a serial connection with a presence detector in the right room, 
   so that the light only goes on when both the switch is on and the agent is in that room. Since this distinction is not learnable, we of course need a notion of \emph{learnability} or \emph{behavioural correctness} that takes these observational limitations into account, so that a learning agent can still be considered ``complete'' even if it doesn't manage to identify the underlying transition system, but only identifies it up to its observational limitations.  
 \end{example}
 
 \begin{definition}
An \emph{isomorphism} between two domains $\mathcal{D} = ((S,A,T,s_0),\obs,\obsmod)$ and $\mathcal{D}' = ((S',A,T',s_0),\obs,\obsmod')$ is a bijection $f: S \to S'$ satisfying:
\begin{enumerate}
    \item $f(s_0) = s'_0$
    \item for any $s,t \in S$, $(s,a,t) \in T$ iff $(f(s),a,f(t)) \in T'$
    \item for all $s \in S$, $Obs(s) = Obs'(f(s))$.
\end{enumerate}
%such that $f(s_0) = s'_0$, for any $s,t \in S$ we have $(s,a,t) \in T$ iff $(f(s),a,f(t)) \in T'$, and for all $s \in S$, $Obs(s) = Obs'(f(s))$.
Two domains are called \emph{isomorphic} if there exists an isomorphism between them. 
\end{definition}
This is the natural generalisation of isomorphisms between labelled transition systems~\cite{gorrieri2015introduction} to domains. 
% a collection of event models completely  represents a domain. We could of course also choose to consider 
% We recall that an \emph{isomorphism} between two transition systems $(S,A,T,s_0)$ and $(S',A,T',s'_0)$ is a bijection $f: S \to S'$ such that $f(s_0) = s'_0$ and for any $s,t \in S$, $(s,a,t) \in T$ iff $(f(s),a,f(t)) \in T'$~\cite{cite classic source}.
Let $\mathcal{D}'$ denote the compatibility domain of a domain $\mathcal{D} = ((S,A,T,s_0),\obs,\obsmod)$.
An alternative definition of the compatibility domain induced by $\mathcal{D}$ could be the domain $\mathcal{D}'' = ((S'',A,T'',s''_0),\obs,\obsmod'')$ given by $S' = \{ Obs(s) \mid s \in S \}$, $T' = \{(Obs(s),a,Obs(t)) \mid (s,a,t) \in T \}$, $s''_0 = Obs(s_0)$, and $Obs''(s) = s$. Since we have, for all $s,t \in S$, $comp(s)=comp(t)$ iff $Obs(s)=Obs(t)$, we can define a bijection $f: S' \to S''$ by $f(comp(s)) = Obs(s)$. It is easily verified that $f$ is an isomorphism between $\mathcal{D}'$ and $\mathcal{D}''$, so the two definitions of compatibility domains are equivalent (up to isomorphism). It means that we can also think of the compatibility domain to be the domain of observations with the transitions on observations induced by the real underlying transitions in the obvious way.

In a state $s$ of a domain $\mathcal{D}$, we consider the explicit knowledge of the agent to be what is true in all of $comp(s)$, which is of course simply what follows logically from the observation made in $s$, since the following are equivalent for all propositional formulas $\phi$ (where $\models$ denotes the standard entailment relation in propositional logic),
\begin{itemize}
  \item %\[ 
 % \begin{array}{l}
  for all $t \in comp(s)$, $t \models \phi$
  %\text{ iff } \\  
  \item for all  $t \text{ with } Obs^+(s) \subseteq t^+ \text{ and } Obs^-(s) \subseteq t^-, t \models \phi$ % \text{ iff } \\
  \item  $\models (\bigwedge_{p \in Obs^+(s)} p \wedge \bigwedge_{p \in Obs^-(s)} \neg p) \to \phi$
%  Obs^+(s) \wedge \{ \neg p \mid p \in Obs^-(s) \} \models \phi 
%  \end{array}
%\]
\end{itemize}
To make the notion of explicit knowledge precise in an epistemic setting, we identify any set of states $S$ over a set of propositional symbols $P$ with an \emph{induced} epistemic model $\mathcal{M}_S = (W,R,V)$ over $P$ with $W = S$, $R = W \times W$ and $V(w) = w$. By this identification, we can evaluate (static) epistemic formulas in sets of states, e.g.\ we then have $comp(s) \models K \phi$ iff $\phi$ is true in every world (state) $w$ in $comp(s)$, which again holds iff $\phi$ follows from the observation received in $s$. The modal operator $K$ is here interpreted as the operator for explicit knowledge. In the following, we will generally identify sets of states $S$ with their induced epistemic models $\mathcal{M}_S$ without notice. It will be clear from the context whether a set of states should be considered as just that, or as the induced epistemic model. % \tobo{We need an example. Maybe the explicit knowledge from the light switch example?}

By the identification of sets of states with epistemic models, we can see the compatibility domain as a domain on epistemic models. In dynamic epistemic logic, any set of event models similarly defines a domain on epistemic models.
\begin{definition}\label{def:domain_induced_by_events}
Let $(\mathcal{E}_a)_{a \in A}$ be a collection of event models over a set of propositions $P$, and let $\mathcal{M}_0$ be an epistemic model over $P$. The domain $((S',A',T',s'_0),\obs,\obsmod)$ induced by $(\mathcal{E}_a)_{a\in A}$ and $\mathcal{M}_0$ is given by: 
\begin{itemize}
  \item $S' = \{ \mathcal{M}_0 \otimes \mathcal{E}_{a_1} \otimes \cdots \otimes \mathcal{E}_{a_n} \mid a_1,\dots,a_n \in A \}$ 
  \item $A' = A$
  \item $T' = \{ (\mathcal{M},a,\mathcal{M}') \mid \mathcal{M'} \text{ is a connected component of } \mathcal{M} \otimes \mathcal{E}_a  \}$ 
  \item $s'_0 = \mathcal{M}_0$ 
  \item $\Omega = 2^P \times 2^P$
  \item $Obs(\mathcal{M}) = (\{ p \in P \mid \mathcal{M} \models p \}, \{ p \in P \mid \mathcal{M} \models \neg p \})$
\end{itemize}
\end{definition}

\subsection{Behavioural correctness and learners} 
We can use isomorphisms between domains to define (strong) equivalence between them. We could of course also consider a weaker notion of equivalence between domains given by bisimulation, and indeed we will later do so when defining learners for implicit knowledge. However, for now it suffices to consider isomorphisms. 
 
 If the domain induced by a collection of event models is isomorphic to the compatibility domain, it means that the event models represent exactly what is explicitly knowable about the domain. A learner that identifies such event models will be called \emph{behaviourally correct with respect to explicit knowledge}. It will be a learner that identifies the underlying transition system up to the limitations of its distinguishing powers defined by the compatibility mapping (defined by what is explicitly observable), as follows. 
\begin{definition}
Let $\mathcal{D} = ((S,A,T,s_0),\obs,\obsmod)$ and $\mathcal{D}' = ((S',A,T',s'_0),\obs',\obsmod')$ be domains. We say that $\mathcal{D}'$ is \emph{behaviourally correct with respect to explicit knowledge} about $\mathcal{D}$ if $\mathcal{D}'$ is isomorphic to the compatibility domain induced by $\mathcal{D}$. A collection of event models $(\mathcal{E}_a)_{a \in A}$ %be a collection of event models be a domain and $(\mathcal{E}_a)_{a \in A}$ be a collection of event models, both over $P$. We say that the collection of event models 
is \emph{behaviourally correct with respect to explicit knowledge} about $\mathcal{D}$ if the domain induced by $(\mathcal{E}_a)_{a \in A}$ and $comp(s_0)$ is behaviourally correct with respect to explicit knowledge about $\mathcal{D}$.
\end{definition}

Learners learn from observing the execution of actions. Executions of actions are represented as transitions $(s,a,t) \in T$. As mentioned earlier, when a transition $(s,a,t) \in T$ occurs, the learner only observes $(Obs(s),a,Obs(t))$. We call $(Obs(s),a,Obs(t))$ the \emph{observed transition} of $(s,a,t)$. In general, for a domain $\mathcal{D} = ((S,A,T,s_0),\obs,\obsmod)$, an \emph{observed transition} is any $(Obs(s),a,Obs(t))$ with $(s,a,t) \in T$. A learner takes as input a set of observed transitions of a domain, and attempts to provide a representation of what it has learned about the actions of the domain. There are many possible ways a learner could represent its learned actions, one of them being through event models. 
%In this paper, the learners represent the learned actions aevent models. 
In that setting, a learner can be seen as an algorithm that takes as input a set $\sigma$ of observed transitions of actions $A$ over $P$ and produces a collection of event models $\mathcal{E}_a$, one for each action $a \in A$. The goal for the learner is then to produce a collection of event models that is behaviourally correct with respect to explicit knowledge. If the learner hasn't been exposed to all possible transitions of the system, it cannot of course in general be expected to produce behaviourally correct event models. If a set of observed transitions contains observations of all possible transitions of the system, we call it \emph{sound and complete}.  More precisely, a set of observed transitions $\sigma$ is called \emph{sound and complete} for a domain $\mathcal{D} = ((S,A,T,s_0),\obs,\obsmod)$ if for all $(s,a,t) \in T$, $(Obs(s),a,Obs(t)) \in \sigma$. 

\subsection{A behaviourally correct learner of explicit knowledge}
We now present our first learner (learning algorithm) and prove that it achieves its goal (producing a behaviourally correct set of event models) when presented with a sound and complete set of observed transitions. The algorithm is included as Algorithm \ref{algo:explicit_learner}. It relies on the following additional definition. For any observation $o$ in a domain over $P$, we define the epistemic formula $\phi_o$ representing the explicit knowledge that an agent observing $o$ has, where we use $\textit{Kw } \phi$ as shorthand for $K \phi \vee K \neg \phi$ (knowing whether $\phi$):
%Let $Obs^?(s) = P - (Obs^+(s) \cup Obs^-(s))$. For each $s\in S$, define
 \[ \textstyle 
 \phi_{o} \coloneqq  K \left(\bigwedge_{p \in o^+} p \wedge \textstyle\bigwedge_{p \in o^-} \neg p \right) \wedge \bigwedge_{p \in P - (o^+ \cup o^-)} \neg \textit{Kw } p \]
 Consider for instance the initial state $s_0 = \neg l \neg r \neg s$ of the light switch domain (Example~\ref{example:lightswitch_domain}). In this state, the agent receives the observation $Obs(s_0) = \neg r \neg s$ and hence $\phi_{Obs(s_0)}$ is the formula $K(\neg r \wedge \neg s) \wedge \neg \textit{Kw } l $ representing that the agent knows that the switch is off and that the agent itself is in the left room, but not knowing whether the light is on or off. 
  
 % \mid p\in Obs^+(s)\} \cup \{\neg p \mid p\in Obs^-(s)\}\right) \wedge \bigwedge \{\neg K p \wedge \neg K \neg p \mid p \in Obs^?(s)\} \]
%\end{definition}

%\tobo{Definition below not needed. Integrated directly into algorithm 2}
%\begin{definition} Let $\sigma = \{Obs(s_i,t_i) \mid (s_i,t_i,a)\in T, i=1,\dots, n\}$ be a set of observed transitions for action $a$. Define the equivalence relation $\sim_\sigma\subseteq \sigma\times \sigma$ by \[Obs(s,t) \sim_\sigma Obs(s',t') \text{ iff } Obs(t)=Obs(t') .\] Denote by $\sigma/\sim_\sigma$ the quotient set of $\sigma$, and by $[x]_{\sim_\sigma}$ the equivalence class of $x\in \sigma$.
%\end{definition}

%\tobo{Revise algorithm. First simply built the graph of the observed transitions. Then built the event model from that---the same as so far. For the proof, the first part is then to prove that the graph built is isomorphic to the compatibility transition system. The second part is to prove that the event model is semantically equivalent to it.} 
\begin{algorithm}[t]
\SetAlgoLined
\Input{$P$ (propositional symbols), $A$ (actions), $\sigma$ (observed transitions)}
\Output{$(\mathcal{E}_a)_{a \in A}$ (event models)}
 % $S' := \{ comp(o) \mid (o,a,o') \in \sigma \text{ or } (o',a,o) \in \sigma \}$\;
%  $T' := \{ (comp(o),a,comp(o') \mid (o,a,o') \in \sigma \}$\;
  \For{each $a \in A$ %with $(o,a,o') \in \sigma$ for some $o$ and $o'$
  }{
  let $\mathcal{E}_a$ be an empty event model\;
   \For{each observation $o$% with $(o'',a,o) \in \sigma$ for some $o''$
   }{
  %   \For{each 
  %   all made distinguishable from the already existing events in $\mathcal{E}_a$, but all 
  $O \coloneqq \{ o' \mid (o',a,o) \in \sigma \}$\;
  \If{$O \neq \emptyset$}{
   $\textstyle E \coloneqq \{\langle\bigvee_{o' \in O} \phi_{o'}, \post \rangle \mid \post(p) = \top \text{ for all } p \in o^+ \text{ and } \post(p) = \bot \text { for all } p \in o^- \}$\;
   Add to $\mathcal{E}_a$ the events $E$ and make all events in $E$ mutually indistinguishable, but distinguishable from all other events in $\mathcal{E}_a$\; %existing events in $\mathcal{E}_a$:
  %  $E \coloneqq E \uplus E_c $ \;
%  $Q \coloneqq Q \uplus (E_c \times E_c)$\;
}
 }
 }
 \Return{$(\mathcal{E}_a)_{a \in A}$}\;
 \caption{$\textsc{Learner}(P,A,\sigma)$}\label{algo:explicit_learner}
\end{algorithm}
%\tobo{Maybe also include example where the disjunction in $E_c$ doesn't only contain one disjunct.} 

\begin{theorem}
The learning algorithm $\textsc{Learner}(P,A, \sigma)$ (Algorithm~\ref{algo:explicit_learner}) applied to a sound and complete set of observed transitions of a domain $\mathcal{D}$ outputs a collection of event models that are behaviourally correct with respect to explicit knowledge about $\mathcal{D}$.  
\end{theorem}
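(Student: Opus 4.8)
The plan is to exhibit an explicit isomorphism between the compatibility domain $\mathcal{D}'$ induced by $\mathcal{D}$ and the domain $\mathcal{D}''$ induced by the learner's output $(\mathcal{E}_a)_{a\in A}$ together with $comp(s_0)$. Since the states of both domains are epistemic models of the form $comp(s)$ (using the identification of a set of states with its induced epistemic model), I would take the candidate isomorphism $f$ to be the identity on such models, and reduce the whole statement to a single-step correspondence: for every $s\in S$ and every action $a$, the connected components of $comp(s)\otimes\mathcal{E}_a$ (taken up to bisimulation contraction) are exactly the states $comp(t)$ with $(comp(s),a,comp(t))\in T'$. Given this, since $\mathcal{D}$ is generated from $s_0$ the compatibility domain is generated from $comp(s_0)$, and a routine induction on the length of action sequences shows that the reachable part of $\mathcal{D}''$ has state set $\{comp(s)\mid s\in S\}$ with matching transitions, so that $f$ is an isomorphism. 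I would also need the easy observation that $Obs$ on the induced model $comp(s)$ returns $(Obs^+(s),Obs^-(s))$, which follows because in $comp(s)$ (a model with total relation) $p$ is globally true iff $p\in Obs^+(s)$ and globally false iff $p\in Obs^-(s)$; this gives condition 3 of the isomorphism.

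The crux is therefore the single-step lemma, and its first half is a \emph{precondition characterisation}: for any $s\in S$ and any observation $o'$, I claim $comp(s)\models\phi_{o'}$ iff $Obs(s)=o'$. Since $comp(s)$ has a total indistinguishability relation, each conjunct $K(\cdots)$ and each $\neg\textit{Kw }p$ is global, so $comp(s)\models K(\bigwedge_{p\in {o'}^+}p\wedge\bigwedge_{p\in {o'}^-}\neg p)$ forces ${o'}^+\subseteq Obs^+(s)$ and ${o'}^-\subseteq Obs^-(s)$, while $comp(s)\models\bigwedge_{p\in P-({o'}^+\cup {o'}^-)}\neg\textit{Kw }p$ forces $Obs^+(s)\cup Obs^-(s)\subseteq {o'}^+\cup {o'}^-$; together with disjointness of $Obs^+(s)$ and $Obs^-(s)$ this pins down $o'=Obs(s)$. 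Consequently, for the event group the learner builds for a target observation $o$ (with precondition $\bigvee_{o'\in O}\phi_{o'}$, where $O=\{o'\mid(o',a,o)\in\sigma\}$), the whole group is applicable in $comp(s)$ iff $Obs(s)\in O$, i.e.\ iff $(Obs(s),a,o)\in\sigma$.

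For the second half I would analyse the product update. Because preconditions are global and events in distinct groups are mutually distinguishable while events within a group are mutually indistinguishable, the connected components of $comp(s)\otimes\mathcal{E}_a$ are in bijection with the applicable groups, one component per target observation $o$ with $(Obs(s),a,o)\in\sigma$. It then remains to identify each such component up to bisimulation: its worlds are the pairs $(w,e)$ with $w\in comp(s)$ and $e$ in the group for $o$, and I would show their valuations are exactly $comp(o)$. The inclusion in $comp(o)$ holds since every event forces $o^+$ true and $o^-$ false; conversely every valuation $u$ consistent with $o$ is realised by choosing the event whose postcondition sets each $p\in u$ to $\top$ and each $p\notin u$ to $\bot$ (a legal member of the group, as it respects the forced values on $o^+\cup o^-$). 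Hence after bisimulation contraction the component equals $comp(o)$. Finally, soundness and completeness of $\sigma$ give $(Obs(s),a,o)\in\sigma$ iff there is $(s_1,a,t)\in T$ with $Obs(s_1)=Obs(s)$ and $Obs(t)=o$, so the successors of $comp(s)$ in $\mathcal{D}''$ are exactly the $comp(t)$ with $(comp(s),a,comp(t))\in T'$, completing the single-step lemma.

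I expect the main obstacle to be the second half of the single-step lemma: reconciling the operational product-update semantics with the purely set-theoretic compatibility transitions. One has to argue carefully that the $3^{|P-(o^+\cup o^-)|}$ postcondition-completion events generate \emph{exactly} the valuations of $comp(o)$ (neither too few nor too many) and that, after bisimulation contraction and the convention of identifying isomorphic models, a whole group collapses to the single state $comp(o)$ regardless of how many worlds $comp(s)$ had. A secondary bookkeeping point is the generated-domain subtlety: transitions of $\mathcal{D}''$ point to connected components, so I would phrase the isomorphism between the parts of $\mathcal{D}'$ and $\mathcal{D}''$ generated from $comp(s_0)$, which is legitimate because the compatibility domain is itself generated.
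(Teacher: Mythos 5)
Your proposal is correct and follows essentially the same route as the paper's own proof: the same isomorphism (sending each compatibility state $comp(s)$ to its induced epistemic model), the same characterisation of when $\phi_{o'}$ holds in $comp(s)$, the same bijection between postcondition completions in an event group and the valuations of $comp(o)$, and the same use of soundness/completeness plus reachability to glue everything together. The only difference is organisational---you isolate a single-step successor-set lemma and finish with one routine induction, whereas the paper embeds the identical step analysis inside two path inductions (its Claims 1 and 2)---which is a cleaner packaging of the same argument rather than a different one.
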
 
\iffalse
\tobo{Instead of definition below, have a general definition of extending a postcondition by something, and then use that directly in both algorithms.}
\begin{definition} Let $\sigma = \{Obs(s_i,t_i) \mid (s_i,t_i,a)\in T, i=1,\dots, n\}$ be a set of observed transitions for action $a$. For each equivalence class $c = [Obs(s,t)]_{\sim_\sigma}$, define $\post_c: P \hookrightarrow \{\top,\bot\}$ by
 \[\post_c(p) =  \begin{cases} 
      \top & \text{if } p\in Obs^+(t) \\
      \bot & \text{if } p\in Obs^-(t) \\
      \text{undefined} & \text{otherwise}
      \end{cases} \]
\end{definition}
\fi
\begin{proof} 
Let $\sigma$ be a sound and complete set of observed transitions for a domain $\mathcal{D} = ((S,A,T,s_0),\obs,\obsmod)$, and let $(\mathcal{E}_a)_{a\in A}$ denote the output produced by Algorithm~\ref{algo:explicit_learner}. We need to show that the domain induced by $(\mathcal{E}_a)_{a\in A}$ and $comp(s_0)$ is isomorphic to the compatibility domain induced by $\mathcal{D}$. So let $\mathcal{D}' = ((S',A,T'),x_0),\obs,\obsmod')$ denote the compatibility domain of $\mathcal{D}$ and let $\mathcal{D}'' = ((S'',A,T'',\mathcal{M}_0),\obs,\obsmod'')$ denote the domain induced by $(\mathcal{E}_a)_{a\in A}$ and $\mathcal{M}_0 = comp(s_0)$, where $\obs = 2^P \times 2^P$. We need to find a bijection $f: S' \to S''$ such that $f(x_0) = \mathcal{M}_0$, for any $x,y \in S'$ we have $(x,a,y) \in T'$ iff $(f(x),a,f(y)) \in T''$, and for all $x \in S'$, $Obs'(x) = Obs''(f(x))$.

\medskip
\noindent \emph{Claim 1}. %For any action sequence $a_0,\dots,a_n \in A$ and any state sequence $x_1, \dots, x_{n+1} \in S'$, 
If $(x_i,a_i,x_{i+1}) \in T'$ for all $0 \leq i \leq n$, then $(\mathcal{M}_i, a_i, \mathcal{M}_{i+1}) \in T''$ for all $0 \leq i \leq n$ where $\mathcal{M}_i$ is the epistemic model induced by $x_i$. 

\smallskip
\noindent \emph{Proof of Claim 1.} 
The proof is by induction on $n$. Note that by definition, $\mathcal{M}_0$ is the epistemic model induced by the state $x_0 = comp(s_0)$, which covers the base case (action sequences of length 0). For the induction step, suppose $(x_i,a_i,x_{i+1}) \in T'$  for all $0 \leq i \leq n$ and suppose $(\mathcal{M}_i, a_i, \mathcal{M}_{i+1}) \in T''$ for all $0 \leq i \leq n-1$ where $\mathcal{M}_i$ for all $i \leq n$ is the epistemic model induced by $x_i$. We need to show $(\mathcal{M}_n,a_n,\mathcal{M}_{n+1}) \in T''$ where $\mathcal{M}_{n+1}$ is the epistemic model induced by $x_{n+1}$. 
 
By soundness and completeness of $\sigma$, $(Obs(s_n), a_n, Obs(s_{n+1})) \in \sigma$. This implies that $\mathcal{E}_{a_n}$ contains a subset $E$ of all events of the form $\langle \bigvee_{(o',a_n,Obs(s_{n+1})) \in \sigma} \phi_{o'}, \post \rangle$ where $\post(p) = \top$ for all $p \in Obs^+(s_{n+1})$ and $\post(p) = \bot$ for all $p \in Obs^-(s_{n+1})$. Furthermore, the events in $E$ are all mutually indistinguishable, but distinguishable from all other events of $\mathcal{E}_{a_n}$. Since $\mathcal{M}_n$ is induced by $x_n = comp(s_n)$, the valuations occurring in the worlds of $\mathcal{M}_n$ are exactly the ones represented by the states in $comp(s_n)$. That is, $\mathcal{M}_n$ has a world $w$ with valuation $V(w) = P'$ iff $Obs^+(s_n) \subseteq P'$ and $Obs^-(s_n) \cap P' = \emptyset$. From this it follows that $\mathcal{M}_n \models \phi_{Obs(s_n)}$. Since $(Obs(s_n), a_n, Obs(s_{n+1})) \in \sigma$, any world satisfying $\phi_{Obs(s_n)}$ will satisfy the precondition of each of the events in $E$. In particular, each world of $\mathcal{M}_n$ will satisfy the precondition of each event in $E$. % Since $x = comp(s')$, we have $s' \in x$. Since $\mathcal{M}$ is induced by $x$, $\mathcal{M}$ must then contain a world $w$ with valuation $V(w) = s'$. This world then satisfies the precondition of each event in $E$. 
For any state $t \in x_{n+1} = comp(s_{n+1})$ we must have $Obs^+(s_{n+1}) \subseteq t^+$ and $Obs^-(s_{n+1}) \subseteq t^-$. Hence for any state $t \in x_{n+1}$, there must exist an event $e_t \in E$ with $\post(e_t)(p) = \top$ for all $p \in t$ and $\post(e_t)(p) = \bot$ for all $p \not\in t$. Conversely, for any event $e \in E$, from the conditions  $\post(e)(p) = \top$ for all $p \in Obs^+(s_{n+1})$ and $\post(e)(p) = \bot$ for all $p \in Obs^-(s_{n+1})$ we get the existence of a state $t$ observationally compatible with $s_{n+1}$ such that $\post(e) = \post(e_t)$. This implies that $\mathcal{M}_n \otimes \mathcal{E}_{a_n}$ contains a connected component in which the world valuations are exactly the states $t \in x_{n+1}$. This proves that $\mathcal{M}_n \otimes \mathcal{E}_{a_n}$ contains $\mathcal{M}_{n+1}$ (the epistemic model induced by $x_{n+1}$) as a connected component, and hence $(\mathcal{M}_n,a_n,\mathcal{M}_{n+1}) \in T''$, as required (recall the models are identified under bisimilarity, so it is irrelevant that the connected component might contain several worlds with the same valuation). This completes the proof of the claim.

\medskip
\noindent \emph{Claim 2.} %For any action sequence $a_0,\dots,a_n \in A$ and any state sequence $\mathcal{M}_1, \dots , \mathcal{M}_{n+1} \in S''$,  
If $(\mathcal{M}_i,a_i,\mathcal{M}_{i+1}) \in T''$ for all $0 \leq i \leq n$, then there exists a state sequence $x_1,\dots,x_{n+1} \in S'$ such that for all $i$, $(x_i, a_i, x_{i+1}) \in T'$ %for all $0 \leq i \leq n$ where for all $i$, 
and $\mathcal{M}_i$ is the epistemic model induced by $x_i$. 

\smallskip
\noindent \emph{Proof of Claim 2.} 
The base case is as for Claim 1. For the induction step, suppose $(\mathcal{M}_i,a_i,\mathcal{M}_{i+1}) \in T''$  for all $0 \leq i \leq n$ and suppose $(x_i, a_i, x_{i+1}) \in T'$ for all $0 \leq i \leq n-1$ where $\mathcal{M}_i$ for all $i \leq n$ is the epistemic model induced by $x_i$. We need to show $(x_n,a_n,x_{n+1}) \in T'$ and that $\mathcal{M}_{n+1}$ is the epistemic model induced by $x_{n+1}$. 
%By induction hypothesis, $x_n = comp(s_n)$ for some $s_n \in S$. Also by induction hypothesis, $\mathcal{M}_n$ is the epistemic model induced by $x_n$ that hence contains exactly the world valuations $t$ with $t^- \cap Obs^+(s_{n}) = \emptyset$ and $t^+ \cap Obs^-(s_{n}) = \emptyset$. 
Since $(\mathcal{M}_n,a_n,\mathcal{M}_{n+1}) \in T''$, $\mathcal{M}_{n+1}$ is one of the connected components of $\mathcal{M}_n \otimes \mathcal{E}_{a_n}$. Since $\mathcal{M}_n$ is induced by $x_n$, it must be connected, and hence there must exist a connected component $E$ of events in $\mathcal{E}_{a_n}$ such that $\mathcal{M}_{n+1}$ is the product update of $\mathcal{M}_n$ with those events. By the definition of the algorithm, there must exist an observation $o$ such that $(o',a_n,o) \in \sigma$ for some $o'$ and such that $E$ must be the set of all events of the form $\langle \bigvee_{(o',a_n,o) \in \sigma} \phi_{o'}, \post \rangle$ where $\post(p) = \top$ for all $p \in o^+$ and $\post(p) = \bot$ for all $p \in o^-$. %Since $(o,a_n,o') \in \sigma$, there must exist $s'_n,s'_{n+1} \in S$ such that $(s'_n,a_n,s'_{n+1}) \in T$. 
Since $\mathcal{M}_{n+1}$ is the product update of $\mathcal{M}_n$ with the events in $E$, and since $\mathcal{M}_{n+1}$ is non-empty, at least one of the events in $E$ must have its precondition satisfied in at least one of the worlds of $\mathcal{M}_n$. Since all events in $E$ have the same precondition, \emph{all} events have their precondition satisfied in at least one world of $\mathcal{M}_n$.  
%all events in $E$ must have their precondition satisfied in at least one of the worlds of $\mathcal{M}_n$ (note that all events in $E$ have the same precondition). 
In other words, there must exist $o'$ such that  
%There must thus exist a least one $o'$ such that 
$(o',a_n,o) \in \sigma$ and such that $\phi_{o'}$ is satisfied in at least one of the worlds of $\mathcal{M}_n$. From $(o',a_n,o) \in \sigma$ we get the existence of $s_n, s_{n+1} \in S$ with $(s_n,a_n,s_{n+1}) \in T$, $Obs(s_n) = o'$ and $Obs(s_{n+1}) = o$. By the definition of $\phi_{o'}$ and since $\phi_{o'}$ holds in a world of $\mathcal{M}_n$, all propositions in $o'^+$ are true in all worlds of $\mathcal{M}_n$, all propositions in $o'^-$ are false in all worlds of $\mathcal{M}_n$, and all other propositions are true in some worlds of $\mathcal{M}_n$ and false in others. Since $\mathcal{M}_n$ is the epistemic model induced by $x_n$, we then get that $x_n = \{ t \in 2^P \mid o'^- \subseteq t^- \text{ and } o'^+ \subseteq t^- \} = \{ t \mid  Obs^-(s_n) \subseteq t^-  \text{ and } Obs^+(s_n) \subseteq t^+  \} = comp(s_n)$. Since $(s_n,a_n,s_{n+1}) \in T$, we must have $(comp(s_n),a_n,comp(s_{n+1})) \in T'$. It now suffices to prove that $\mathcal{M}_{n+1}$ is the epistemic model induced by $comp(s_{n+1})$. Since $Obs(s_{n+1}) = o$, $E$ is the set of all events of the form $\langle \bigvee_{(o',a_n,Obs(s_{n+1})) \in \sigma} \phi_{o'}, \post \rangle$ where $\post(p) = \top$ for all $p \in Obs^+(s_{n+1})$ and $\post(p) = \bot$ for all $p \in Obs^-(s_{n+1})$. This implies that the update of $\mathcal{M}_n$ with the events in $E$ must be all states in $comp(s_{n+1})$, and hence $\mathcal{M}_{n+1}$ is induced by $comp(s_{n+1})$, as required.  This completes the proof of the claim.

\medskip
We can now construct an isomorphism $f$ from $\mathcal{D}'$ to $\mathcal{D}''$ by, for all $x \in S'$, letting $f(x)$ be the epistemic model induced by $x$. Note that Claim 2 guarantees that all states of $S''$ are induced epistemic models of states in $S'$, hence guaranteeing that the mapping correctly maps elements of $S'$ into elements of in $S''$. We now first get $f(x_0) = \mathcal{M}_0$, as required. Suppose then $(x,a,y) \in T'$. Since each state $S$ is reachable from $s_0$, there must then exist an action sequence $a_0,\dots,a_n \in A$ and a state sequence $x_0,\dots,x_{n+1} \in S'$ such that for all $i\leq n$ we have $(x_i,a_i,x_{i+1}) \in T'$ and where $x=x_n$, $a=a_n$ and $y=x_{n+1}$. It follows from Claim 1 that $(\mathcal{M}_n,a_n,\mathcal{M}_{n+1}) \in T''$, where $\mathcal{M}_n$ is the epistemic model induced by $x_n$ and $\mathcal{M}_{n+1}$ is the epistemic model induced by $x_{n+1}$. We then get $f(x) = f(x_n) = \mathcal{M}_n$ and $f(y) = f(x_{n+1}) = \mathcal{M}_{n+1}$ and hence $(f(x),a,f(y)) \in T''$, as required. Suppose instead that $(f(x),a,f(y)) \in T''$. Then we need to prove that $(x,a,y) \in T'$. There must exist an action sequence $a_0,\dots,a_n \in A$ and a state sequence $\mathcal{M}_0,\dots,\mathcal{M}_{n+1} \in S''$ such that for all $i \leq n$ we have  $(\mathcal{M}_i,a_i,\mathcal{M}_{i+1}) \in T''$ and where $f(x)=\mathcal{M}_n$, $a=a_n$ and $f(y) = \mathcal{M}_{n+1}$. From Claim 2 we now get that $(x_n,a_n,x_{n+1}) \in T'$ where $\mathcal{M}_n$ is the epistemic model induced by $x_n$ and $\mathcal{M}_{n+1}$ is the epistemic model induced by $x_{n+1}$. It follows that $f(x_n) = f(x)$ and $f(x_{n+1}) = f(y)$, and hence $x_n = x$ and $x_{n+1} = y$, and thus finally $(x,a,y) \in T'$, as required. For the observation functions, it finally follows that for all $comp(s) \in S'$ we have $Obs'(comp(s)) = Obs''(f(comp(s)))$, since $f(comp(s))$ is the induced epistemic model of $comp(s)$, and hence $Obs''(f(comp(s))) = (\{ p \mid f(comp(s)) \models p \}, \{ p \mid f(comp(s)) \models \neg p \}) = (\{ p \mid p \in t \text{ for all } t \in comp(s) \}, \{ p \mid p \not\in t \text{ for all } t \in comp(s) \}) = (Obs^+(s),Obs^-(s)) = Obs(s) = Obs'(comp(s))$. 

\end{proof}

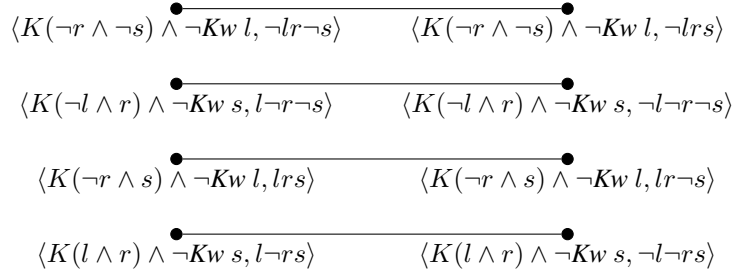
\begin{figure}
\centering
  \scalebox{1}{
    \begin{tikzpicture}[xscale=0.8,yscale=0.5,every label/.style={yshift=1mm}]
        % NODES
        \node[w, label={[align=right]below: $\langle K(\neg r \wedge \neg s) \wedge \neg \textit{Kw } l, \neg l r \neg s\rangle $}] (w0) at (0,0) {};
        \fill (w0) circle [radius=2pt];
        \node[w,  label={below: $\langle K(\neg r \wedge \neg s) \wedge \neg \textit{Kw } l, \neg l  r  s\rangle $}] (w1) at (6.5,0) {};
        \fill (w1) circle [radius=2pt];
      
        \node[w, label={[align=right]below: $\langle K(\neg l \wedge r) \wedge \neg \textit{Kw } s, l \neg r  \neg s\rangle $}] (w2) at (0,-2) {};
        \fill (w2) circle [radius=2pt];
        \node[w,  label={below: $\langle K(\neg l \wedge r) \wedge \neg \textit{Kw } s, \neg l \neg r \neg s\rangle $}] (w3) at (6.5,-2) {};
        \fill (w3) circle [radius=2pt];
        
        \node[w, label={[align=right]below: $\langle K(\neg r \wedge s) \wedge \neg \textit{Kw } l, l r  s\rangle $}] (w4) at (0,-4) {};
        \fill (w4) circle [radius=2pt];
        \node[w,  label={below: $\langle K(\neg r \wedge s) \wedge \neg \textit{Kw } l, l r  \neg s\rangle $}] (w5) at (6.5,-4) {};
        \fill (w5) circle [radius=2pt];
        
        \node[w, label={[align=right]below: $\langle K(l \wedge r) \wedge \neg \textit{Kw } s,l \neg r s \rangle  $}] (w6) at (0,-6) {};
        \fill (w6) circle [radius=2pt];
        \node[w,  label={below: $\langle K(l \wedge r) \wedge \neg \textit{Kw } s,\neg l  \neg r  s\rangle  $}] (w7) at (6.5,-6) {};
        \fill (w7) circle [radius=2pt];
        % EDGES
        \draw[-] (w0) -- (w1) node[above,midway] {};
        \draw[-] (w2) -- (w3) node[above,midway] {};
        \draw[-] (w4) -- (w5) node[above,midway] {};
        \draw[-] (w6) -- (w7) node[above,midway] {};
    \end{tikzpicture}
    }
    \caption{The event model $\mathcal{E}_{\it{move}}$ provided as output of Algorithm 1 running on the light switch domain.}\label{figu:lightswitch_learned_move} 
\end{figure}{}
%Now let $\sigma'$ contain the two possible observed transitions for the flip action. Then $L(P,A,\sigma')$ is as follows:
\begin{figure}
\centering
   \scalebox{1}{
    \begin{tikzpicture}[xscale=0.8,yscale=0.5,every label/.style={yshift=1mm}]
        % NODES
        \node[w, label={[align=right]below: $\langle K(\neg r \wedge \neg s) \wedge \neg \textit{Kw } l, l \neg r s \rangle $}] (w0) at (0,0) {};
        \fill (w0) circle [radius=2pt];
        \node[w,  label={below: $\langle K(\neg r \wedge \neg s) \wedge \neg \textit{Kw } l, \neg l \neg r  s \rangle $}] (w1) at (6.5,0) {};
        \fill (w1) circle [radius=2pt];
      
        \node[w, label={[align=right]below: $\langle K(\neg r \wedge s) \wedge \neg \textit{Kw } l, l \neg r \neg s \rangle $}] (w2) at (0,-2) {};
        \fill (w2) circle [radius=2pt];
        \node[w,  label={below: $\langle K(\neg r \wedge s) \wedge \neg \textit{Kw } l , \neg l \neg r \neg s \rangle $}] (w3) at (6.5,-2) {};
        \fill (w3) circle [radius=2pt];
        
        % EDGES
        \draw[-] (w0) -- (w1) node[above,midway] {};
        \draw[-] (w2) -- (w3) node[above,midway] {};
    \end{tikzpicture}
    }
        \caption{The event model $\mathcal{E}_{\it{flip}}$ provided as output of Algorithm 1 running on the light switch domain.}\label{figu:lightswitch_learned_flip}
\end{figure}
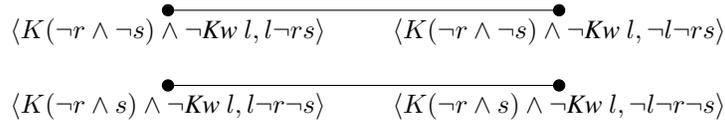{}
\begin{example}
Consider again the light switch domain $\mathcal{D}$ of Example~\ref{example:lightswitch_domain}. Letting $P = \{l,r,s\}$, $A = \{\it{move},\it{flip} \}$ and $\sigma$ be a sound and complete set of observed transitions for $\mathcal{D}$, $\textsc{Learner}(P,A,\sigma)$ will for the $\it{move}$ action produce the event model $\mathcal{E}_{\it{move}}$ presented in Figure~\ref{figu:lightswitch_learned_move} and for the $\it{flip}$ action produce the event model $\mathcal{E}_{\it{flip}}$ presented in Figure~\ref{figu:lightswitch_learned_flip}. Note that the event model $\mathcal{E}_{\it{move}}$ has four connected components, corresponding to whether the move goes from the  left to the right room (the first and the third connected components of Figure~\ref{figu:lightswitch_learned_move}) or from the right to the left room (the second and the fourth connected components), and whether the move starts in a state where the switch/light is off (first and second component) or on (third and fourth). In total, the two event models have 6 connected components corresponding to the 6 transitions of the compatibility domain induced by $\mathcal{D}$. Since each connected component has 2 events, this appears to be an even less compact representation than the compatibility transition system itself, but this is due to the event models representing action outcomes as being nondeterministic rather than representing them as deterministically mapping into a set of states (a ``belief state''). The induced compatibility domain of course represents the exact same information as the set of event models, only in a different way. It would probably be possible to represent the event models more compactly, e.g.\ using a three-valued version of DEL where having an event $e$ with $post(e)(p) = \textit{undefined}$ would be a compact representation of the two events $\langle pre(e), post(e) \cup \{ p \mapsto \top \} \rangle$ and $\langle pre(e), post(e) \cup \{ p \mapsto \bot \} \rangle$. Such a representation would reduce the event models of this example to only contain 6 events in total. We leave the detailed consideration of such three-valued versions of DEL to a future paper. Another possible optimisation of the action representation would be to use non-boolean postconditions, that is, postcondition mappings where $post(e)(p)$ can be an arbitrary formula. Actually, such postconditions are the most standard in DEL, however, for simplicity we decided here to only work with boolean postconditions. As an example, consider the second and fourth component of Figure~\ref{figu:lightswitch_learned_move}. Using non-boolean postconditions~\cite{ditmarsch2008semantic}, we could represent the two left events of the two components by a single event $\langle Kr \wedge \neg \textit{Kw } s, \post \rangle$ with $\post(l) = \top$, $\post(r) = \bot$ and $\post(s) = l$, and similarly for the two right events. Event preconditions can also sometimes be simplified, e.g.\ the aforementioned event could in this context be simplified to $\langle Kr, \post \rangle$. In future work it will be relevant to consider extended learning algorithms including methods for simplifying event models (ensuring that they still induce the same domain). 

The learned event models of Figures~\ref{figu:lightswitch_learned_move}--\ref{figu:lightswitch_learned_flip} represent the explicit knowledge the agent can gain about the \emph{dynamics} of the domain. Earlier, we explained how to evaluate static formulas in $comp(s_0)$, but given the learned event models, we can now also evaluate dynamic formulas including the modalities $[\mathcal{E}_{\it{flip}}]$ and $[\mathcal{E}_{\it{move}}]$. This allows the agent to reason about the dynamics of its explicit knowledge, for instance we get:
%Given those modalities, the agent can reason logically about its explicit knowledge, including how the different actions affect this knowledge. For instance, the
% it can easily be shown that:
\[
  comp(s_0) \models \neg \textit{Kw } l \wedge [\mathcal{E}_{\it{flip}}]  \neg \textit{Kw } l \wedge [\mathcal{E}_{\it{flip}}] [\mathcal{E}_{\it{move}}] K l
\]   
Thus a learner having used Algorithm 1 to learn the dynamics of the domain and produced the event models of the two figures, would be able to conclude that it initially doesn't explicitly know whether the light is on, that flipping the switch doesn't change this, but both flipping the switch and moving to the other room guarantees explicitly knowledge of the light being on. This can potentially be used in combination with epistemic planning based on DEL~\cite{bolander2011epistemic}. For instance, if the learner was given the planning goal $K l$ (turning on the light and explicitly knowing to have done so), it would produce the plan $(\it{flip}, \it{move})$, since this is the shortest action sequence $(a_1,\dots,a_n)$ satisfying $comp(s) \models [a_1]\cdots [a_n] K l$. Note that if the learner was given the simpler goal $l$ (``turn on the light''), the plan would still be the same, since $comp(s) \not\models [\mathcal{E}_{flip}] l$ and $comp(s) \not\models [\mathcal{E}_{\it{move}}] l$.   
%Consider the light bulb domain. Let $\sigma$ contains all four possible observed transitions for the move action. $\sigma / \sim_{Obs(s)}$ has four equivalence classes, each consisting of a single observed transition, because there are no two observed transitions with the same observation after the action is executed. Each one of these equivalence classes of transitions has a corresponding equivalence class of events in the event model $L(P,A,\sigma)$, which looks as follows: \tobo{Do fix placement of figures. Let them float and refer to their number.}
%
\end{example}

\section{Extending learning beyond explicit knowledge\label{sec:extending}}
We have presented a learning algorithm for learning explicit knowledge, and illustrated how it works on the light switch domain. How about implicit learning in that domain? Is there more a learner would be able to learn---or deduce---about the domain than what can be directly observed in each state? In this domain, there isn't. The learner of explicit knowledge has already identified the underlying transition system up to isomorphism, so it is only the identification of which propositions are true in each state that are missing. Taking another look at Figure~\ref{figu:lightswitch_compdomain}, it can be seen that this additional insight can not be achieved. Consider for any choice of states $s''_i \in comp(s_i)$, $i= 1,\dots,3$, the domain with states $\{s''_1,\dots,s''_3\}$ and with transition function and observation function induced by the compatibility domain of Figure~\ref{figu:lightswitch_compdomain}. No sequence of observations of action executions can make the agent distinguish this new domain from the real domain (the two domains are bisimilar, a notion to be formally introduced in the next section). Take for instance $s''_0 = s'_0$, $s''_1 = s'_1$, $s''_2 = s_2$ and $s''_3 = s_3$. In the domain induced by these states, the light is initially turned on ($s'_0$), and flipping the switch makes it turn off ($s'_1$). So the function of the switch has been reversed, at least when the agent is in the left room. However, if moving into the other room with the switch up, the light will actually go on again ($s_2$), and if moving with the switch down, the light will turn off ($s_3$). So moving from one room to the other reverses the function of the switch (making you perhaps reconsider which electrician to call for your next electrical wiring job). For each choice of the $s''_i$, we can make a similar description of a domain that would make (some) sense and that would be consistent with any observations that can be made in the domain (bisimilar with the real domain). One could consider learners with an inductive bias that would make them choose certain domains over others. An inductive bias could for instance be the simplicity (size) of the produced set of event models, a kind of basic Ockham's razor principle. However, it can be shown that in the light switch domain, such an inductive bias will still not allow the learner to uniquely settle on the correct domain. To sum up, the light switch domain doesn't allow us to see any difference between learning explicit or implicit knowledge. Let us know consider another domain that does.
\begin{example}\label{example:magic_box} 
\begin{figure}
 \begin{minipage}{0.45\textwidth}
 \[
 \scalebox{0.9}{
 \begin{tikzpicture}[auto,align=center]
  % \matrix (A) [matrix of nodes,align=center] {
   % \node (s01) {\rooms{0}{0}{0} \\[-1mm]  $s_0: \neg l \underline{ \neg r \neg s}$} 
    %\\
  % \node[below of=s01,node distance=12mm] (s02) {\rooms{1}{0}{0} \\[-1mm]  $s'_0: l \underline{ \neg r \neg s}$} 
%  };
  \node[circle,draw,inner sep=-5pt,minimum size=1.1cm,label={below:$comp(s_0)$ \\ $Obs =\top$}] (s0) at (0,0) {
 \begin{tikzpicture}[auto,align=center,inner sep=0pt] 
   \node (s01) {$s_0: \neg p$};
   \node[below of=s01,node distance=5mm] (s02) {$s_1: p$}; 
  \end{tikzpicture}
  };
    \node[right of=s0,node distance=30mm,circle,draw,inner sep=-1pt,minimum size=1.1cm,label={below:$comp(s_1)$ \\ $Obs= p$}] (s3) {
    \begin{tikzpicture}[inner sep=0pt]
       \node (s21) {$s_1: \underline{p}$};
%       \node[below of=s21,node distance=12mm] (s22) {\rooms{0}{1}{1} \\[-1mm] $s'_3: \underline{\neg l r} s$};
    \end{tikzpicture}
    };
    \path[latex-latex] (s0) edge node {$\it{flip}$} (s3);
 \end{tikzpicture}
 }
 \]
  \caption{The compatibility domain $\mathcal{D}'$ induced by the box domain $\mathcal{D}$ of Example~\ref{example:magic_box}. } \label{figu:magicbox_compdomain}
  \end{minipage} \hfill
  \begin{minipage}{0.45\textwidth}
  \[
 \scalebox{0.9}{
 \begin{tikzpicture}[auto,align=center]
  % \matrix (A) [matrix of nodes,align=center] {
   % \node (s01) {\rooms{0}{0}{0} \\[-1mm]  $s_0: \neg l \underline{ \neg r \neg s}$} 
    %\\
  % \node[below of=s01,node distance=12mm] (s02) {\rooms{1}{0}{0} \\[-1mm]  $s'_0: l \underline{ \neg r \neg s}$} 
%  };
  \node[circle,draw,inner sep=-3pt,minimum size=1.1cm,label={below:$f(comp(s_0))$ \\ $Obs =\top$}] (s0) at (0,0) {
 \begin{tikzpicture}[auto,align=center,inner sep=0pt] 
   \node (s01) {$s_0: \neg p$};
 %  \node[below of=s01,node distance=5mm] (s02) {$s_1: p$}; 
  \end{tikzpicture}
  };
    \node[right of=s0,node distance=30mm,circle,draw,inner sep=-3pt,minimum size=1.1cm,label={below:$f(comp(s_1))$ \\ $Obs= p$}] (s3) {
    \begin{tikzpicture}[inner sep=0pt]
       \node (s21) {$s_1: \underline{p}$};
%       \node[below of=s21,node distance=12mm] (s22) {\rooms{0}{1}{1} \\[-1mm] $s'_3: \underline{\neg l r} s$};
    \end{tikzpicture}
    };
    \path[latex-latex] (s0) edge node {$\it{flip}$} (s3);
 \end{tikzpicture}
 }
 \]
 \caption{The unique observation determinisation of the compatibility domain $\mathcal{D}'$ shown left.} \label{figu:magicbox_obsdet}
  \end{minipage}
  \end{figure}
Consider the domain $\mathcal{D} = ((S,A,T,s_0),\obs,\obsmod)$ over $P = \{ p \}$ with $S = \{ s_0, s_1 \}$, $s_0 = \emptyset$, $s_1 = \{ p \}$, $A = \{ \it{flip}  \}$, $T = \{(s_0, \it{flip},s_1), (s_1, \it{flip}, s_0)\}$, $Obs(s_0) = \top$, and $Obs(s_1) = p$ (that is, $Obs(s_0) = (\emptyset,\emptyset)$ and $Obs(s_1) = (\{p\},\emptyset)$). So the $\it{flip}$ action flips the truth value of $p$, but the truth value is only observable when true. It sounds perhaps a bit esoteric, since if $p$ is observed when true, why don't we also observe it when false? We can however think of a concrete example of this type. Consider a box that can either be empty ($\neg p$) or full ($p$), and $\it{flip}$ is the action of emptying it if full, and making it full if empty. Suppose further that when the box is empty its walls are completely opaque, but when filled, the pressure on the bottom of the box activates a switch that turns on a light inside the box so that its walls become transparent and it becomes visible that it is full. In this case, when it is empty nothing is observed, $Obs(s_0) = \top$, but when full, it is observed to be full, $Obs(s_1) = p$. 

A learner that is initially unfamiliar with the electronics inside the box and has only been exposed to the state where it is empty and hence opaque, will of course both consider it possible that it is empty and that it is full. So there should be no explicit knowledge about $p$ in that state. However, if the learner has also been exposed to the state where it is full, and has observed $p$ in that state, the learner should be able to conclude that when nothing is observed, $p$ is false (the learner is supposed to be aware that both the transition and observation functions are deterministic). So it should be possible for a learner to come to implicitly know the full dynamics of the domain. 

Let us try to make these things a bit more formal. Consider first the induced compatibility domain $\mathcal{D}' = ((S',A,T',s'_0),\obs,\obsmod')$ of $\mathcal{D}$, shown in Figure~\ref{figu:magicbox_compdomain}. Now note that both states of the compatibility domain contain the state $s_1$. A learner having identified $\mathcal{D}'$ from its interactions with the domain will then conclude that two distinct possible observations are possible in the state $s_1$: the empty observation $\top$ (from the occurrence of the state $s_1$ in $comp(s_0)$) and the observation $p$ (from the occurrence of the state $s_1$ in $comp(s_1)$). Assuming the agent knows the observation function to be deterministic, this is clearly a contradiction. A learner should be able to use the additional knowledge of the observation function being deterministic to refine its representation of the domain. 

We define an \emph{observation determinisation} of a compatibility domain to be any domain $\mathcal{D}''$ defined as follows. The domain of $\mathcal{D}''$ contains for each $x \in S'$ a state $g(x) \subseteq x$ such that the set of states $\{ g(x) \mid x \in S'\}$ form a partition of $S$. The relations and functions on $\mathcal{D}''$ are then inherited from $\mathcal{D}'$ in the canonical way. The only way $\mathcal{D}''$ differs from $\mathcal{D}'$ is that we have removed elements of the states of $\mathcal{D}'$ in order to ensure that each original state $s \in S$ appears in exactly one state of $\mathcal{D}''$. Since each pair of states of $\mathcal{D}'$, and hence $\mathcal{D}''$, have distinct observations, this ensures that $\mathcal{D}''$ only specifies a single observation for each original state $s \in S$. Considering the domain above as an example, there only exists a single observation determinisation given by $g(comp(s_0)) = \{ s_0 \}$ and $g(comp(s_1)) = \{ s_1 \}$. The corresponding domain is shown in Figure~\ref{figu:magicbox_obsdet}. We can now of course also define a notion of behavioural correctness with respect to a observation determinisation, and use that to define a notion of implicit knowledge. We will not introduce the technical details here, but just mention that it would be possible to learn event models with two relations $R_e$ and $R_i$, corresponding to two modalities, $K_e$ for explicit knowledge and $K_i$ for implicit knowledge. We could require that explicit knowledge is behaviourally correct with respect to the compatibility domain, and implicit knowledge with respect to its observation determinisation. In that case, a learner would be able to express facts such as:
\[
  comp(s_0) \models K_i \neg p \wedge \neg K_e \neg p
\]
expressing that in the initial state it is implicitly known that the box is empty, but it is not explicitly known. We previously gave examples of the usefulness of being able to express both kinds of knowledge, e.g.\ in multi-agent settings.

The domain considered here is of course very simple. It only has a single observation determinisation, and that determinisation identifies the full underlying transition system. In general, domains can have many observation determinisations, and then we can not expect there to be a single domain to define behavioural correctness with respect to. Behavioural correctness would then have to be defined in terms of behavioural equivalence with respect to a single or all observation determinisations. We leave these consideration for the next section focusing on learning algorithms for implicit knowledge.
\end{example}
The example above considered a notion of refinement of compatibility domains based on knowing that the observation function is deterministic. We will now give an example of a possible way to refine compatibility domains based on knowing that the transition function is deterministic. 
\begin{example}\label{example:knock-knock}
\begin{figure}
 \begin{minipage}{0.45\textwidth}
 \[
 \scalebox{0.9}{
 \begin{tikzpicture}[auto,align=center]
  % \matrix (A) [matrix of nodes,align=center] {
   % \node (s01) {\rooms{0}{0}{0} \\[-1mm]  $s_0: \neg l \underline{ \neg r \neg s}$} 
    %\\
  % \node[below of=s01,node distance=12mm] (s02) {\rooms{1}{0}{0} \\[-1mm]  $s'_0: l \underline{ \neg r \neg s}$} 
%  };
  \node[circle,draw,inner sep=-7pt,minimum size=1.6cm,label={below:}] (s0) at (0,0) {
 \begin{tikzpicture}[auto,align=center,inner sep=0pt] 
   \node (s01) {$s_0: p \underline{q}$};
 %  \node[below of=s01,node distance=5mm] (s02) {$\neg p \underline{q}$}; 
  \end{tikzpicture}
  };
%  \draw[-latex] (s0) edge [loop above,->,>=latex] node {$a$} (s0);
    \node[right of=s0,node distance=40mm,circle,draw,inner sep=-7pt,minimum size=1.6cm,label={below:}] (s3) {
    \begin{tikzpicture}[inner sep=0pt]
       \node (s21) {$s_1: \neg p \underline{q}$};
%       \node[below of=s21,node distance=12mm] (s22) {\rooms{0}{1}{1} \\[-1mm] $s'_3: \underline{\neg l r} s$};
    \end{tikzpicture}
    };
    \path[-latex] (s0) edge node {$a$} (s3);
   \node[below right of=s0,node distance=29mm,circle,draw,inner sep=-7pt,minimum size=1.6cm,label={below:}] (s2) {
    \begin{tikzpicture}[inner sep=0pt]
       \node (s21) {$s_2: \neg p \underline{\neg q}$};
%       \node[below of=s21,node distance=12mm] (s22) {\rooms{0}{1}{1} \\[-1mm] $s'_3: \underline{\neg l r} s$};
    \end{tikzpicture}
    };
    \path[-latex] (s3) edge node {$a$} (s2);
    \path[-latex] (s2) edge node {$a$} (s0);    
 \end{tikzpicture}
 }
 \]
  \caption{The door knocking domain $\mathcal{D}$ of Example~\ref{example:knock-knock}. } \label{figu:knockknock_domain}
  \end{minipage} \hfill
  \begin{minipage}{0.45\textwidth}
  \[
 \scalebox{0.9}{
\begin{tikzpicture}[auto,align=center]
  % \matrix (A) [matrix of nodes,align=center] {
   % \node (s01) {\rooms{0}{0}{0} \\[-1mm]  $s_0: \neg l \underline{ \neg r \neg s}$} 
    %\\
  % \node[below of=s01,node distance=12mm] (s02) {\rooms{1}{0}{0} \\[-1mm]  $s'_0: l \underline{ \neg r \neg s}$} 
%  };
  \node[circle,draw,inner sep=-10pt,minimum size=1.7cm,label={below:$comp(s_0)$ \\ $Obs = q$}] (s0) at (0,0) {
 \begin{tikzpicture}[auto,align=center,inner sep=0pt] 
   \node (s01) {$s_0: p \underline{q}$};
   \node[below of=s01,node distance=5mm] (s02) {$s_1: \neg p \underline{q}$}; 
  \end{tikzpicture}
  };
  \draw[-latex] (s0) edge [loop above,->,>=latex] node {$a$} (s0);
    \node[right of=s0,node distance=40mm,circle,draw,inner sep=-10pt,minimum size=1.7cm,label={below:$comp(s_2)$ \\ $Obs= \neg q$}] (s3) {
    \begin{tikzpicture}[inner sep=0pt]
       \node (s21) {$s_2: \neg p \underline{\neg q}$};
       \node[below of=s21,node distance=5mm] (s22) {$p \underline{\neg q}$}; 
%       \node[below of=s21,node distance=12mm] (s22) {\rooms{0}{1}{1} \\[-1mm] $s'_3: \underline{\neg l r} s$};
    \end{tikzpicture}
    };
    \path[latex-latex] (s0) edge node {$a$} (s3);
 \end{tikzpicture}
 }
 \]
 \caption{The compatibility domain $\mathcal{D}'$ induced by the door knocking domain $\mathcal{D}$ shown left. Note that $comp(s_2)$ contains the state $\{ p \}$ (denoted as $p \underline{\neg q}$), which is not contained in the state space of the domain $\mathcal{D}$.} \label{figu:knockknock_compdomain}
  \end{minipage}
  \end{figure}
Consider again the earlier sketched domain over $P = \{p,q\}$ with a single action $a$ that produces the following sequence of observations: $q, q, \neg q, q, q, \neg q, \dots$. For instance, it could be a domain where $a$ is an action of knocking on a door that is initially closed ($q$), and only when knocking twice will it open ($\neg q$). Knocking once when open then closes it again. Formally, it could be represented as a domain  $\mathcal{D} = ((S,A,T,s_0),\obsmod,\obs)$ with $S = \{ s_0, s_1, s_2 \}$,
%pq, \neg pq, \neg p\neg q \}$,
$s_0 = \{p, q \}$, $s_1 = \{ q \}$, $s_2 = \emptyset$, $A = \{a\}$, $T = \{ (s_0,a,s_1), (s_1, a, s_2), (s_2, a, s_0) \}$ and $Obs(s_0) = Obs(s_1) = q$ and $Obs(s_2) =\neg q$. Note that, as mentioned earlier, the truth value of $q$ is always observed, and the truth value of $p$ is never observed. The proposition $p$ being true encodes that the door hasn't been knocked at since it was last closed. The domain $\mathcal{D}$ is presented in Figure~\ref{figu:knockknock_domain} and its compatibility domain in Figure~\ref{figu:knockknock_compdomain}. Note that the compatibility domain is non-deterministic, since the state $comp(s_0)$ has two outgoing $a$-edges. If the learner knows the underlying domain to be deterministic, it should be able to refine $\mathcal{D}'$ into a deterministic domain. In this case, we cannot refine the compatibility domain by simply modifying its existing states. There are simply too few (compatibility) states to make the transition function deterministic. 

We could probably alternatively define ways to ``unfold'' compatibility domains. Note that in this particular example, the compatibility domain is obtained simply by identifying the two upper states of Figure~\ref{figu:knockknock_domain} (but in general the compatibility domain is not defined as a simple quotient in this way, as the compatibility function doesn't always induce an equivalence relation on states as the previous example showed). As for observation determinisation, we would in general then get many distinct ways of unfolding a compatibility domain. In this example, one unfolding would produce the real domain, and another would produce the one with $p$ swapped by $\neg p$ everywhere.
% A likely  of defining unfoldings in detail is that the unfoldings that are consistent with the observations received about the environment don't only depend on 
We are not going to pursue the technical details of defining such unfoldings in this paper. To define them in detail, we would need to take into account that whether an unfolding is consistent with the observations received is not only a matter of which individual transitions have been observed, but also the order of transitions. Unfolding the domain of Figure~\ref{figu:knockknock_compdomain} can clearly be done in many different ways, also in ways in which the frequency of observing $\neg q$ is different, e.g.\ only observing $\neg q$ every fourth time (though that would require a bigger language $P$). Observing streams of action executions is here needed to determine which domain unfoldings are consistent with the actual underlying domain. Furthermore, to produce all deterministic domains consistent with a given compatibility domain, we would need to consider both observation determinisations (as in the previous example) and unfoldings, potentially even interleaved. Defining algorithms for producing these domains, and then proving them to have the expected properties, is probably possible, but also non-trivial. Instead, we are in the next section going to define a new algorithm for learning implicit knowledge that in a more direct way builds all the domains consistent with the observed transitions, also taking the order of action execution into account. 
\end{example}

\section{Learning implicit domain knowledge\label{sec:implicit}}

At the beginning of Section \ref{sec:explicit}, we informally described explicit knowledge as what is known because it is directly observed in the current state, and implicit knowledge as whatever might additionally be inferred from the history of earlier  actions and from the general experience with the domain. To formalise the notion of explicit knowledge, we introduced the notion of a \emph{compatibility domain}: a domain that captures what is explicitly knowable about the real domain, and whose states are sets of states over the alphabet of the domain. A learner was then deemed behaviourally correct with respect to explicit knowledge if the learner outputs a set of event models whose induced domain is isomorphic to the compatibility domain. 

In this section, we proceed in a similar way with respect to implicit domain knowledge. To capture it, we introduce the notion of a \emph{behavioural equivalence domain}. A learner is deemed \emph{behaviourally correct with respect to implicit knowledge} if it outputs a domain that is isomorphic to the behavioural equivalence domain, and not, as before, to the compatibility domain.

%Note that this notion of behavioural correctness is analogous to the one for explicit knowledge, except for one thing. In the case of explicit knowledge, the learner was required to output a set of event models, whose induced domain is isomorphic to the compatibility domain. For implicit knowledge, we will just require that the learner outputs a domain that is isomorphic to the behavioural equivalence domain. We leave the problem of learning the behavioural equivalence domain represented as a set of event models for future work. 

\subsection{Implicit knowledge and behavioural equivalence}

To formalise the notion that implicit knowledge is whatever can be inferred from a history of earlier actions and general experience with the domain, we resort to the concept of  \emph{behavioural equivalence}.  Many different notions of \emph{behavioural equivalence} have been proposed in the literature on labelled transition systems~\cite[Ch.~2]{gorrieri2015introduction}.  Behavioural equivalence relations seek to establish in which cases two transition systems offer similar interaction capabilities. The intuition is that two systems should be equivalent if they cannot be distinguished by interacting with them. If the initial states of two systems are behaviourally equivalent, then they cannot be distinguished by any experience gathered from these initial states, through any sequence of actions. 

In this section, we introduce notions of behavioural equivalence for domains, based on what the agent can observe about the underlying transition system. Two domains will be deemed equivalent if they cannot be distinguished, \textit{from observations}, by interacting with them. In other words, we will formalise implicit knowledge as knowledge up to behavioural equivalence. This is in fact as much knowledge as a learner can possibly acquire. Learning proceeds by making ``experiments'' with, or ``testing'' the domain, i.e.~by taking certain sequences of actions and observing the results. But, in two behaviourally equivalent domains, any such experiment yields the same observations. As a result, the agent can only come to know with certainty the information that holds in every domain that's equivalent to the real one.

%\subsection{Traces, executions and observations}
To define the behavioural equivalence relations used in this section, we need to fix some notions. %There are more general versions of some of the notions we discuss, such as trace and execution, that are standard in the literature on transition systems. \tobo{I don't get that sentence. I'm also not sure how relevant it is for the reader to know that things can be defined more generally than is done in this paper. If you think it is important to say, maybe only say it *after* the definitions, and maybe only in a parenthesis or some other kind of position where it is clear that it is less crucial to the overall story.}   
 %However, here we will restrict these definitions so as to make them relevant and sufficient for our purpose, which is: to describe the observed behaviour, generated by interacting with the unknown domain, that the learner receives as input to identify the unknown domain.
 Let $\mathcal{D} = ((S,A,T,s_0),\obs, \obsmod)$ be a deterministic domain over $P$. Let $A^*$ denote the set of all finite sequences of elements of $A$. An element of $A^*$ is called an action sequence or \emph{action trace}~\cite{gorrieri2015introduction}. Each such trace defines an interaction with the system, e.g.,~the trace $(move,move,flip,move)$ is an interaction in the light switch domain of Example \ref{example:lightswitch_domain}. %\tobo{give example referring back to earlier examples, e.g. continue the sentence by ", e.g. the trace $move, move, flip, move$ is an interaction with the light switch domain of Example X.''} %However, note that, \tobo{I would just say ``Under this definition'' and drop "however, note that''. I don't see why a "however" is needed, it seems you are implicitly assuming that the reader would think it has to be finite or that the reader is already aware that it might be a problem that they aren't always. } 
We denote by $\textnormal{Tr}(\mathcal{D})$ the set of traces in $\mathcal{D}$.

The \emph{execution trace of $(a_i)_{0\leq i \leq n}$ from $s$} is the sequence
$(t_0,a_0,t_1,a_1,\dots,t_n,a_n,t_{n+1})$
%$(t_i,a_i,t_{i+1})_{0\leq i\leq n}$, 
where $t_0 = s$ and $(t_i,a_i,t_{i+1}) \in T$ for all $0\leq i \leq n$. That is, an execution trace is an alternating sequence of states and actions ending with a state. We denote 

 the set of executions traces from $s$ by $\textnormal{ExTr}(\mathcal{D},s) \coloneqq \{\varepsilon \mid \varepsilon \text{ is the execution trace from $s$ of some $\alpha \in \textnormal{Tr}(\mathcal{D})$} \}$. %A \emph{maximal} execution trace is either a finite execution trace that ends in a state without any successors, or an infinite execution trace. In the case of domains, as actions are universally applicable, any maximal trace is infinite. 
% An execution trace is called \emph{initial} if it starts in the initial state $s_0$. 
The \emph{observation trace for} an execution trace $$(s_0,a_0,s_1,\dots,a_n,s_{n+1})$$ %\tobo{I would normally not use ":=" in places like this. Many mathematicians *never* use it, but I think it is OK for definitions of notation. But you also wouldn't write "the function value $f(x)$ for $x:=5$ is [...]" (at least I've never seen something like that).} 
is the sequence $$(Obs(s_0),a_0,Obs(s_{1}),\dots,a_n,Obs(s_{n+1})).$$
 
We denote the set of observation traces from $s$ by \[\textnormal{ObsTr}(\mathcal{D},s) \coloneqq \{ \tau \mid \tau \text{ is the } \text{observation trace for } \varepsilon, \text{ for some } \varepsilon\in\textnormal{ExTr}(\mathcal{D},s) \}\] 
%
%\subsection{Behavioural equivalences}
%Let $\mathcal{D} = ((S,A,T,s_0),\obs, \obsmod)$ and $\mathcal{D}' = ((S',A',T',s'_0),\obs', \obsmod')$ be two domains. 
The first behavioural equivalence we discuss is \emph{trace equivalence}, based on equivalence of traces up to what can be observed.

\begin{definition} Let $\mathcal{D} = ((S,A,T,s_0),\obs, \obsmod)$ and $\mathcal{D}' = ((S',A',T',s'_0),\obs', \obsmod')$ be two deterministic domains over $P$. Two states $s\in\states$ and $s'\in\states'$ are called \emph{(observationally) trace equivalent} if $\textnormal{ObsTr}(\mathcal{D},s)=\textnormal{ObsTr}(\mathcal{D'},s')$. $\mathcal{D}$ and $\mathcal{D}'$ are called \emph{(observationally) trace equivalent} if their initial states $s_0$ and $s'_0$ are observationally trace equivalent.
\end{definition}
For transition systems, the classical alternative to trace equivalence is bisimilarity. We introduce a version of bisimulation for domains which relates two states $s$ and $t$ when $s$ and $t$ are observationally indistinguishable, and such that if the same action is executed in both states,  the resulting states are again observationally indistinguishable.

\begin{definition} Let $\mathcal{D}=((\states,\actions,\transmod,s_0),\obs,\obsmod)$ and $\mathcal{D}'=((\states',\actions,\transmod',s'_0), \obs, \obsmod')$ be two deterministic domains over $P$. A relation $Z\subseteq S\times S'$ is called a \emph{bisimulation} %\tobo{between...} 
between $S$ and $S'$ if for every $s\in S$ and $s'\in S'$, the following conditions hold:
\begin{description}
    \item[Observational indistinguishability:] if $sZs'$, then  $Obs(s)=Obs'(s)$. 
    \item[Forth:] if $sZs'$ and $(s,a,t)\in T$, then there exists a $t'\in S'$ s.t. $(s',a,t')\in T'$ and $tZt'$.
    \item[Back:] if $sZs'$ and $(s',a,t')\in T'$, then there exists a $t\in S$ s.t. $(s,a,t)\in T$ and $tZt'$.
\end{description}
Two states $s\in S$ and $s'\in S$ are called \emph{(observationally) bisimilar}, denoted $s\bisim s'$, if there is a bisimulation $Z$ such that $sZs'$. The domains $\mathcal{D}$ and $\mathcal{D}'$ are called \emph{(observationally) bisimilar}, denoted $\mathcal{D}\bisim \mathcal{D}'$, if their starting states are observationally bisimilar, i.e.~$s_0\bisim s'_0$ . 
\end{definition}

For classical, deterministic transition systems, trace equivalence and bisimilarity coincide~\cite{gorrieri2015introduction}, trace equivalence simply means that the same action traces are possible (the same action sequences are applicable). In our case, for deterministic domains, trace equivalence and bisimilarity are based on a different notion of equivalence under observability. However, we still get that (observational) trace equivalence and (observational) bisimilarity coincide.
%Our notions of trace equivalence and bisimilarity are different, since they are not, as in deterministic transitions system, based on which action traces  
%The following lemma shows that our notions of trace equivalence and bisimilarity for deterministic domains also coincide.

\begin{lemma} \label{lemma:bisim_trace} Two deterministic domains $\mathcal{D}$ and $\mathcal{D}'$ over $P$ are trace equivalent iff they are bisimilar.
\end{lemma}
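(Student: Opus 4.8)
The plan is to prove the two directions separately, with the substantive content concentrated in the direction ``trace equivalent $\Rightarrow$ bisimilar'', where determinism does the real work. Throughout I would exploit the fact that in a deterministic domain in which every action is universally applicable, each action trace $\alpha \in A^*$ determines a \emph{unique} execution trace from any given state, and hence a unique observation trace. Moreover, since an observation trace records its actions, it decomposes uniquely as a first observation–action pair $(Obs(s),a)$ followed by the observation trace of the remaining actions from the unique $a$-successor of $s$. I would isolate this ``peeling'' fact as a small auxiliary observation before the main induction, as it is used in both directions.

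For ``bisimilar $\Rightarrow$ trace equivalent'', I would take a bisimulation $Z$ with $s_0 Z s'_0$ and fix an arbitrary action trace $\alpha = (a_0,\dots,a_n)$. Writing the unique execution traces from $s_0$ and $s'_0$ as $(t_0,a_0,\dots,t_{n+1})$ and $(t'_0,a_0,\dots,t'_{n+1})$, I would show $t_i Z t'_i$ for all $i$ by induction: the base case is $s_0 Z s'_0$, and the step uses the Forth clause together with determinism of $\mathcal{D}'$ to conclude that the successor furnished by Forth must be exactly $t'_{i+1}$. Observational indistinguishability then gives $Obs(t_i) = Obs'(t'_i)$ for every $i$, so the two execution traces produce the same observation trace. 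As $\alpha$ was arbitrary this yields $\textnormal{ObsTr}(\mathcal{D},s_0) \subseteq \textnormal{ObsTr}(\mathcal{D}',s'_0)$, and the reverse inclusion follows symmetrically (or directly from determinism, since each action trace yields a single observation trace on each side).

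For the converse I would take $Z = \{(s,s') \mid \textnormal{ObsTr}(\mathcal{D},s) = \textnormal{ObsTr}(\mathcal{D}',s')\}$ and show it is a bisimulation containing $(s_0,s'_0)$, the latter being exactly the hypothesis. Observational indistinguishability is immediate by comparing the length-one observation traces, which forces $Obs(s) = Obs'(s')$. For Forth, given $s Z s'$ and $(s,a,t)\in T$, I would let $t'$ be the unique $a$-successor of $s'$ (using universal applicability and determinism of $\mathcal{D}'$) and argue $t Z t'$ by peeling: any observation trace of an action sequence $\beta$ from $t$ extends to the observation trace of $a\beta$ from $s$, which by $s Z s'$ is also realised from $s'$; decomposing that trace through the unique $a$-successor $t'$ shows $\beta$ yields the same observation trace from $t'$, and symmetrically. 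Hence $\textnormal{ObsTr}(\mathcal{D},t) = \textnormal{ObsTr}(\mathcal{D}',t')$, i.e.\ $t Z t'$. The Back clause is entirely symmetric, using determinism of $\mathcal{D}$ instead.

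The main obstacle is precisely this Forth/Back argument, and it is here that determinism is indispensable: without it the successor $t'$ supplied for a given action need not be unique, so one could not transfer the entire tail of an observation trace across the relation. This is the standard reason trace equivalence fails to imply bisimilarity for nondeterministic systems, and I would emphasise that the peeling step—decomposing the observation trace of $a\beta$ into its leading observation–action pair and the observation trace of $\beta$ from the unique successor—is the technical heart of the proof.
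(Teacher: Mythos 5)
Your proposal is correct and follows essentially the same route as the paper's proof: the same relation $Z = \{(s,s') \mid \textnormal{ObsTr}(\mathcal{D},s) = \textnormal{ObsTr}(\mathcal{D}',s')\}$ for the hard direction, the same tail-decomposition (``peeling'') argument—which is exactly the paper's chain of set equalities for $\textnormal{ObsTr}(\mathcal{D},t) = \textnormal{ObsTr}(\mathcal{D}',t')$ in the Forth clause, where determinism is used—and the same inductive Forth/Back matching along execution traces for the easy direction. The only cosmetic difference is that you invoke universal applicability and determinism to name the successor $t'$ up front, whereas the paper extracts it from the shared observation trace; the substance is identical.
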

\begin{proof}
Let $\mathcal{D}=((\states,\actions,\transmod,s_0),\obs,\obsmod)$ and $\mathcal{D}'=((\states',\actions,\transmod',s'_0), \obs, \obsmod')$ be two deterministic domains over $P$.

$(\Rightarrow)$ Suppose that $\mathcal{D}$ and $\mathcal{D}$ are trace equivalent. Then $\textnormal{ObsTr}(\mathcal{D},s_0)=\textnormal{ObsTr}(\mathcal{D'},s'_0)$. Define a relation $Z$ by $sZ s'$ iff $\textnormal{ObsTr}(\mathcal{D},s)=\textnormal{ObsTr}(\mathcal{D'},s')$. Thus, $Z$ relates $s_0$ and $s'_0$ by definition. To show that $Z$ is a bisimulation, we show conditions (i)-(iii) for bisimulations. Suppose that $s Z s'$. For condition (i), take any observation trace $(o,\dots) \in ObsTr(D,s)$. Then $Obs(s) = o$. Since $sZs'$, we have $ObsTr(D,s) = ObsTr(D',s')$ and hence $(o,\dots) \in ObsTr(D',s')$. This implies $Obs'(s') = o$, and hence $Obs'(s') = o = Obs(s)$, as required.
%take any observation trace $\sigma^{obs} \coloneqq (o_0,a_0,o_1,\dots, o_n,a_n,o_{n+1})$ \tobo{Again one of the places where I would use $=$ instead of $:=$.} such that $\sigma^{obs}\in\textnormal{ObsTr}(\mathcal{D},s)$ and $\sigma^{obs}\in\textnormal{ObsTr}(\mathcal{D'},s')$. \tobo{I've corrected a missing prime above, but I'm uncertain about where you want the prime for the second occurrence of $\sigma^{obs}$.} It follows from the definitions of $\textnormal{ObsTr}(\mathcal{D},s)$ and $\textnormal{ObsTr}(\mathcal{D'},s')$ that $\sigma^{obs} = \sigma^{obs(ex)} = \sigma^{obs(ex')}$, for some $\sigma^{obs(ex)}\in \textnormal{ObsTr}(\mathcal{D},s)$ and some $\sigma^{obs(ex')}\in \textnormal{ObsTr}(\mathcal{D'},s')$. \tobo{Isn't that a bit roundabout? I would just say: ``take any observation trace $\sigma^{obs} = (o,\dots) \in ObsTr(D,s)$. Then $Obs(s) = o$. Since $sZs'$, we have $ObsTr(D,s) = ObsTr(D',s')$ and hence $\sigma^{obs} \in ObsTr(D',s')$. This implies $Obs'(s') = o$, and hence $Obs'(s') = o = Obs(s)$, as required." I don't see the need to make it notationally more complicated. At least for me, the shorter proof is easier to follow. } By definition of $\sigma^{obs(ex)}$ and $\sigma^{obs(ex')}$, it follows that $o_0=Obs(s)$ and $o_0=Obs'(s')$. 
For condition (ii), suppose that $s Z s'$ and $(s,a,t)\in T$. Since $(s,a,t)\in T$, we get $(s,a,t,\dots)\in \textnormal{ExTr}(\mathcal{D},s)$. Hence, $(Obs(s),a,Obs(t),\dots)\in \textnormal{ObsTr}(\mathcal{D},s)$, and since $\textnormal{ObsTr}(\mathcal{D},s)=\textnormal{ObsTr}(\mathcal{D'},s')$, we get $(Obs'(s'),a,Obs'(t'),\dots)\in\textnormal{ObsTr}(\mathcal{D}',s')$, for some $t'\in S$. Since $Obs(s)=Obs'(s')$, $Obs(t)=Obs'(t')$ and $\textnormal{ObsTr}(\mathcal{D},s)=\textnormal{ObsTr}(\mathcal{D'},s')$, we get
\begin{align*}
    \textnormal{ObsTr}(\mathcal{D},t) = & \ \{ (Obs(t),\dots) \mid (Obs(s),a,Obs(t),\dots) \in \textnormal{ObsTr}(\mathcal{D},s)\} \\
    = & \ \{ (Obs'(t'),\dots) \mid (Obs'(s'),a,Obs'(t'),\dots) \in \textnormal{ObsTr}(\mathcal{D}',s')\} \\
    = & \ \textnormal{ObsTr}(\mathcal{D}',t').
\end{align*}
Condition (iii) is symmetric.

($\Leftarrow$) Suppose that $\mathcal{D} \bisim \mathcal{D}'$, i.e.~there is a bisimulation $Z$ such that  $s_0 Z s'_0$. To show $\textnormal{ObsTr}(\mathcal{D},s_0)=\textnormal{ObsTr}(\mathcal{D'},s'_0)$, we prove inclusion both ways. Let $(Obs(s_0),a_0,Obs(s_{1}),\dots,a_n,Obs(s_{n+1})) \in \textnormal{ObsTr}(\mathcal{D},s_0)$. Then %\tobo{$\sigma^{obs(ex)}$ is just a name, so why not choose a simpler name like $\sigma$---or $\tau$ if you want to reserve $\sigma$ for standard traces? Reading on, I see that you use $\sigma^{ex}$ to denote the corresponding execution trace. However, that piece of notation was never explicitly introduced. For me the most natural thing would be, as earlier suggested, to start out with a trace $\sigma$ and then consider the corresponding execution trace $ex(\sigma)$ and corresponding observation trace $obs(\sigma)$.}
 $(s_0,a_0,s_{1},\dots,a_n,s_{n+1})\in \textnormal{ExTr}(\mathcal{D},s_0)$. By the forth condition, there are $s'_1,\dots, s'_{n+1}\in\states'$ such that $(s'_i,a_i,s'_{i+1})\in T'$, $s_i Z s'_i$, for all $0\leq i \leq n$. Since $s_i Z s'_i$, $Obs(s_i) = Obs'(s'_i)$ for all $0\leq i \leq n$. Thus, $(Obs(s_0),a_0,Obs(s_{1}),\dots,a_n,Obs(s_{n+1}))\in \textnormal{ObsTr}(\mathcal{D}',s'_0)$. The proof for $\textnormal{ObsTr}(\mathcal{D'},s'_0)\subseteq\textnormal{ObsTr}(\mathcal{D},s_0)$ is analogous to the one just given, except for using the back condition instead of the forth one.
\end{proof}

\begin{example}\label{example:four-bisimilar}
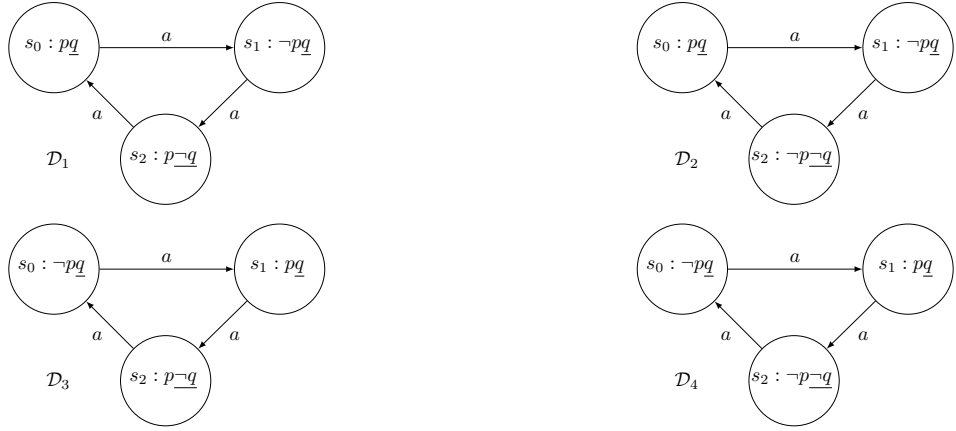
\begin{figure}%[H]
\begin{minipage}{0.45\textwidth}
 \[
 \scalebox{0.75}{
 \begin{tikzpicture}[auto,align=center]
  % \matrix (A) [matrix of nodes,align=center] {
   % \node (s01) {\rooms{0}{0}{0} \\[-1mm]  $s_0: \neg l \underline{ \neg r \neg s}$} 
    %\\
  % \node[below of=s01,node distance=12mm] (s02) {\rooms{1}{0}{0} \\[-1mm]  $s'_0: l \underline{ \neg r \neg s}$} 
%  };
  \node[circle,draw,inner sep=-10pt,minimum size=1.6cm,label={below:}] (s0) at (0,0) {
 \begin{tikzpicture}[auto,align=center,inner sep=0pt] 
   \node (s01) {$s_0: p \underline{q}$};
 %  \node[below of=s01,node distance=5mm] (s02) {$\neg p \underline{q}$}; 
  \end{tikzpicture}
  };
%  \draw[-latex] (s0) edge [loop above,->,>=latex] node {$a$} (s0);
    \node[right of=s0,node distance=40mm,circle,draw,inner sep=-10pt,minimum size=1.6cm,label={below:}] (s3) {
    \begin{tikzpicture}[inner sep=0pt]
       \node (s21) {$s_1: \neg p \underline{q}$};
%       \node[below of=s21,node distance=12mm] (s22) {\rooms{0}{1}{1} \\[-1mm] $s'_3: \underline{\neg l r} s$};
    \end{tikzpicture}
    };
    \path[-latex] (s0) edge node {$a$} (s3);
   \node[below right of=s0,node distance=28mm,circle,draw,inner sep=-10pt,minimum size=1.6cm,label={below:}] (s2) {
    \begin{tikzpicture}[inner sep=0pt]
       \node (s21) {$s_2: p \underline{\neg q}$};
%       \node[below of=s21,node distance=12mm] (s22) {\rooms{0}{1}{1} \\[-1mm] $s'_3: \underline{\neg l r} s$};
    \end{tikzpicture}
    };
    \path[-latex] (s3) edge node {$a$} (s2);
    \path[-latex] (s2) edge node {$a$} (s0);    
    \node[left of=s2,xshift=-9mm] {$\mathcal{D}_1$};
 \end{tikzpicture}
 } \]
% \]{\centering \footnotesize $\mathcal{D}_1$\par}
  \end{minipage} \hfill 
\begin{minipage}{0.45\textwidth}
\[
\scalebox{0.75}{
\begin{tikzpicture}[auto,align=center]
% \matrix (A) [matrix of nodes,align=center] {
% \node (s01) {\rooms{0}{0}{0} \\[-1mm]  $s_0: \neg l \underline{ \neg r \neg s}$} 
%\\
% \node[below of=s01,node distance=12mm] (s02) {\rooms{1}{0}{0} \\[-1mm]  $s'_0: l \underline{ \neg r \neg s}$} 
%  };
\node[circle,draw,inner sep=-10pt,minimum size=1.6cm,label={below:}] (s0) at (0,0) {
\begin{tikzpicture}[auto,align=center,inner sep=0pt] 
\node (s01) {$s_0: p \underline{q}$};
%  \node[below of=s01,node distance=5mm] (s02) {$\neg p \underline{q}$}; 
\end{tikzpicture}
};
%  \draw[-latex] (s0) edge [loop above,->,>=latex] node {$a$} (s0);
\node[right of=s0,node distance=40mm,circle,draw,inner sep=-10pt,minimum size=1.6cm,label={below:}] (s3) {
\begin{tikzpicture}[inner sep=0pt]
   \node (s21) {$s_1:\neg p \underline{q}$};
%       \node[below of=s21,node distance=12mm] (s22) {\rooms{0}{1}{1} \\[-1mm] $s'_3: \underline{\neg l r} s$};
\end{tikzpicture}
};
\path[-latex] (s0) edge node {$a$} (s3);
\node[below right of=s0,node distance=28mm,circle,draw,inner sep=-10pt,minimum size=1.6cm,label={below:}] (s2) {
\begin{tikzpicture}[inner sep=0pt]
   \node (s21) {$s_2:\neg p \underline{\neg q}$};
%       \node[below of=s21,node distance=12mm] (s22) {\rooms{0}{1}{1} \\[-1mm] $s'_3: \underline{\neg l r} s$};
\end{tikzpicture}
};
\path[-latex] (s3) edge node {$a$} (s2);
\path[-latex] (s2) edge node {$a$} (s0);  
\node[left of=s2,xshift=-9mm] {$\mathcal{D}_2$};
\end{tikzpicture}
}
\] % {\centering \footnotesize $\mathcal{D}_2$\par}
\end{minipage} 
\begin{minipage}{0.45\textwidth}
 \[
 \scalebox{0.75}{
 \begin{tikzpicture}[auto,align=center]
  % \matrix (A) [matrix of nodes,align=center] {
   % \node (s01) {\rooms{0}{0}{0} \\[-1mm]  $s_0: \neg l \underline{ \neg r \neg s}$} 
    %\\
  % \node[below of=s01,node distance=12mm] (s02) {\rooms{1}{0}{0} \\[-1mm]  $s'_0: l \underline{ \neg r \neg s}$} 
%  };
  \node[circle,draw,inner sep=-10pt,minimum size=1.6cm,label={below:}] (s0) at (0,0) {
 \begin{tikzpicture}[auto,align=center,inner sep=0pt] 
   \node (s01) {$s_0: \neg p \underline{q}$};
 %  \node[below of=s01,node distance=5mm] (s02) {$\neg p \underline{q}$}; 
  \end{tikzpicture}
  };
%  \draw[-latex] (s0) edge [loop above,->,>=latex] node {$a$} (s0);
    \node[right of=s0,node distance=40mm,circle,draw,inner sep=-10pt,minimum size=1.6cm,label={below:}] (s3) {
    \begin{tikzpicture}[inner sep=0pt]
       \node (s21) {$s_1:p \underline{q}$};
%       \node[below of=s21,node distance=12mm] (s22) {\rooms{0}{1}{1} \\[-1mm] $s'_3: \underline{\neg l r} s$};
    \end{tikzpicture}
    };
    \path[-latex] (s0) edge node {$a$} (s3);
   \node[below right of=s0,node distance=28mm,circle,draw,inner sep=-10pt,minimum size=1.6cm,label={below:}] (s2) {
    \begin{tikzpicture}[inner sep=0pt]
       \node (s21) {$s_2: p \underline{\neg q}$};
%       \node[below of=s21,node distance=12mm] (s22) {\rooms{0}{1}{1} \\[-1mm] $s'_3: \underline{\neg l r} s$};
    \end{tikzpicture}
    };
    \path[-latex] (s3) edge node {$a$} (s2);
    \path[-latex] (s2) edge node {$a$} (s0);    
    \node[left of=s2,xshift=-9mm] {$\mathcal{D}_3$};
 \end{tikzpicture}
 }
 \] % {\centering \footnotesize $\mathcal{D}_3$\par}
  \end{minipage} \hfill
\begin{minipage}{0.45\textwidth}
 \[
 \scalebox{0.75}{
 \begin{tikzpicture}[auto,align=center]
  % \matrix (A) [matrix of nodes,align=center] {
   % \node (s01) {\rooms{0}{0}{0} \\[-1mm]  $s_0: \neg l \underline{ \neg r \neg s}$} 
    %\\
  % \node[below of=s01,node distance=12mm] (s02) {\rooms{1}{0}{0} \\[-1mm]  $s'_0: l \underline{ \neg r \neg s}$} 
%  };
  \node[circle,draw,inner sep=-10pt,minimum size=1.6cm,label={below:}] (s0) at (0,0) {
 \begin{tikzpicture}[auto,align=center,inner sep=0pt] 
   \node (s01) {$s_0: \neg p \underline{q}$};
 %  \node[below of=s01,node distance=5mm] (s02) {$\neg p \underline{q}$}; 
  \end{tikzpicture}
  };
%  \draw[-latex] (s0) edge [loop above,->,>=latex] node {$a$} (s0);
    \node[right of=s0,node distance=40mm,circle,draw,inner sep=-10pt,minimum size=1.6cm,label={below:}] (s3) {
    \begin{tikzpicture}[inner sep=0pt]
       \node (s21) {$s_1: p \underline{q}$};
%       \node[below of=s21,node distance=12mm] (s22) {\rooms{0}{1}{1} \\[-1mm] $s'_3: \underline{\neg l r} s$};
    \end{tikzpicture}
    };
    \path[-latex] (s0) edge node {$a$} (s3);
   \node[below right of=s0,node distance=28mm,circle,draw,inner sep=-10pt,minimum size=1.6cm,label={below:}] (s2) {
    \begin{tikzpicture}[inner sep=0pt]
       \node (s21) {$s_2: \neg p \underline{\neg q}$};
%       \node[below of=s21,node distance=12mm] (s22) {\rooms{0}{1}{1} \\[-1mm] $s'_3: \underline{\neg l r} s$};
    \end{tikzpicture}
    };
    \path[-latex] (s3) edge node {$a$} (s2);
    \path[-latex] (s2) edge node {$a$} (s0);  
    \node[left of=s2,xshift=-9mm] {$\mathcal{D}_4$};
 \end{tikzpicture}
 }
 \] % {\centering \footnotesize $\mathcal{D}_4$\par}
  \end{minipage} 
\caption{The domains $\mathcal{D}_1,\dots,\mathcal{D}_4$ over $P=\{p,q\}$ that are bisimilar to the door knocking domain of Example \ref{example:knock-knock}. The real domain is denoted here as $\mathcal{D}_2$.}
\label{fig:all_bisim}
\end{figure}
There are four domains that are bisimilar, i.e.~trace equivalent, to the door knocking domain of Example \ref{example:knock-knock}. Figure \ref{fig:all_bisim} shows them, where the real domain is $\mathcal{D}_2$. The two top domains, $\mathcal{D}_1$ and $\mathcal{D}_2$, differ from the two bottom ones, $\mathcal{D}_3$ and $\mathcal{D}_4$, in the initial state. The left domains differ from the right ones in the bottom state. To see that these are in fact the only bisimilar or trace equivalent domains, recall first that, by definition, domains over a set of propositional atoms $P$ have a state space $S\subseteq 2^P$. This means that any domain over $P$ that is trace equivalent to the original one will have at most four states. (In general, since the set of actions in a domain $\mathcal{D}$ over $P$ is finite, and the set of states is finite as well, the set of domains bisimilar to $\mathcal{D}$ is
finite). 

Note that a domain with a single state $s^*_0$ cannot be trace equivalent to the original domain $\mathcal{D}_2$, since it must be able to produce the observation trace $(q,a,q,a,\neg q)$, which would require that both $q$ and $\neg q$ are observable in $s^*_0$. Now consider any domain with two states, $s^*_0$ and $s^*_1$, and suppose it was bisimilar to the real one. By the observational indistinguishability condition of bisimulations, and the fact that in the initial state of the door knocking $q$ is observed as true, $s^*_0$ must also be a state in which $q$ is observed as true. Since the domain must be able to produce the observation trace $(q,a,q,a,\neg q)$, there must be a state in which $\neg q$ is observed. This must be $s^*_1$. In order to produce the observation trace $(q,a,q)$, this domain would need to have $(s^*_0,a,s^*_0)$ in its transition function. But then, in order to produce the observation trace $(q,a,q,a,\neg q)$, the domain would need to have $(s^*_0,a,s^*_1)$ in its transition function. The transition function would then be non-deterministic. But bisimilarity is only defined for deterministic domains over the same set of propositional variables $P$. 

Next, consider domains with three states, $s^*_0$, $s^*_1$ and $s^*_2$. It is straightforward to check that the domains in Figure \ref{fig:all_bisim} are bisimilar to the real domain; the relation linking $s_i$ to $s^*_i$ is a bisimulation. Now take any three-state domain not listed in Figure \ref{fig:all_bisim} and suppose it is bisimilar to the real domain. In order to produce the observation trace $(q,a,q)$, the $a$-successor of $s^*_0$ must be a $q$-state. Suppose the $a$-successor of $s^*_0$ is $s^*_0$ itself. Then, as with domains
with two states, in order to produce the observation trace $(q,a,q,a,\neg q)$, the domain would need a non-deterministic transition function. Hence, the $a$-successor of $s^*_0$ must be a $q$-state different from $s^*_0$. W.l.o.g., let this state be $s^*_1$.
In order to produce the observation trace $(q,a,q,a,\neg q)$, $s^*_2$ must be the $a$-successor of $s^*_1$ and it must be a $\neg q$-state. Moreover, in order to produce the observation trace $(q,a,q,a,\neg q, a, q)$, the $a$-successor of  $s^*_2$ must be a $q$-state. If the successor was  $s^*_0$, the domain would be listed in Figure \ref{fig:all_bisim}. So the $a$-successor of $s^*_2$ must be a $s^*_1$. But note that then, in this domain, the following would be a possible observation trace:
$(q,a,q,a,\neg q, a, q,a,\neg q)$. This trace is however impossible in the door knocking domain; $\neg q$ observations occur always after \textit{two} $q$ observations. Such a domain would break the repeated observation pattern $q, q, \neg q, q, q, \neg q, \dots$\footnote{Note that an observation trace  re-visiting states is needed to notice that the domain under consideration is not bisimilar to the real one. Later on, in this section, we will bound the length of observation traces required to tell with certainty whether a domain is bisimilar or not to the one being explored.}

We are thus left with domains with four states: $s^*_0$, $s^*_1$, $s^*_2$ and $s^*_3$. Reasoning as before, we can show that the $a$-successor of $s^*_0$ must be a $q$-state different from $s^*_0$. W.l.o.g., let's say that this state is $s^*_1$. In order to produce the observation trace $(q,a,q,a,\neg q)$, the $a$-successor of $s^*_1$ must be a $\neg q$-state. W.l.o.g., let this state be $s^*_2$. Again, in order to produce the observation trace $(q,a,q,a,\neg q, a, q)$, the $a$-successor of $s^*_2$ must be a $q$-state. If this successor is $s^*_1$, we get a non-bisimilar domain. If this successor is $s^*_0$, then $s^*_3$ is not reachable from $s^*_0$, which cannot be, by definition of domains. And if this successor is $s^*_4$, then the domain produces the observation trace $(q,a,q,a,\neg q, a, \neg q)$, which is not a possible observation trace for the real domain.

\end{example}

\subsection{Behavioural equivalence domain}

We now have all prerequisite notions to define the behavioural equivalence domain. Before defining the domain formally, we provide some intuition about it. Consider an agent situated in the initial state $s_0$ of a domain $\mathcal{D}$. Since the agent only has access to observations of states, it cannot distinguish $s_0$ from the initial state $s'_0$ of any other bisimilar domain $\mathcal{D}'$. The agent therefore considers $s'_0$ as a possible world. That is, the agent's initial uncertainty can be represented by the set $W_0$ of initial states from all domains that are bisimilar to $\mathcal{D}$. Now, the agent is unable to tell, just from its observations, whether it is interacting with the real domain, or a bisimilar alternative. Thus, after executing any action $a$ in $s_0$, the agent won't be able to distinguish the $a$-successor $s_1$ of $s_0$, from the $a$-successor $s'_1$ of the bisimilar state $s'_0$. That is, after $a$ is executed, the agent's new state of uncertainty can be represented by the set of $a$-successors of the states in $W_0$. The definition of the behavioural equivalence domain follows this intuition.

\begin{definition} \label{definition:behav_equiv_domain} Let $\mathbb{D}=\{((S^i,A,T^i,s^i_0), \obs,\obsmod^i) \mid 1\leq i\leq n \}$ be a finite set of domains, all bisimilar. The \emph{synchronous composition} of (the domains in) $\mathbb{D}$ is the domain $\mathcal{D}'=((S',A,T',s'_0), \obs,\obsmod')$ given by 
\begin{itemize}
  \item $(s^i_0)_{1\leq i\leq n}\in S'$ and for all $a\in A$: if $(s^i)_{1\leq i\leq n}\in S'$ then $T^i(s^i,a)_{1\leq i\leq n}
  \in S'$.
  \item $T'((s^i)_{1\leq i\leq n},a) = (T^i(s^i,a))_{1\leq i\leq n}$.
  \item $s'_0 = (s^i_0)_{1\leq i\leq n}\in S'$.
  \item $\obsmod'((s^i)_{1\leq i\leq n}) = Obs^1(s^1)$. % = \dots = Obs^n(s^n)$. 
\end{itemize}
When $\mathbb{D}$ is the set of all domains bisimilar to a given domain $\mathcal{D}$, we call $\mathcal{D}'$ the \emph{behavioural equivalence domain} induced by $\mathcal{D}$.
%$\mathcal{D}=((S,A,T,s_0), \obs,\obsmod)$ be a domain over $P$ and $\mathbb{D}=\{((S^i,A,T^i,s^i_0), \obs,\obsmod^i)\}_{1\leq i\leq n}$ be the set of domains that are bisimilar to $\mathcal{D}$. 
%The \emph{behavioural equivalence domain} induced by $\mathcal{D}$ is the domain $\mathcal{D}'=((S',A,T',s'_0), \obs,\obsmod')$ given by 
%We also call $\mathcal{D}'$ the \emph{synchronous composition} of (the domains in) $\mathbb{D}$. 
%$\{((S^i,A,T^i,s^i_0), \obs,\obsmod^i)\}_{1\leq i\leq n}$.
\end{definition}
%The behavioural equivalence domain can be thought of as a type of synchronous composition of all the domains that are bisimilar to $\mathcal{D}$. Indeed, 
Our notion of synchronous composition of bisimilar domains is the natural counterpart of the notion of synchronous composition of finite automata~\cite{cassandras2009introduction}: To get from the synchronous compositions of automata to domains, we just need to take the observation functions into account and restrict to the states reachable by synchronous traces from the initial state. We refer to the states of the behavioural equivalence domain as \emph{global states} and to its transitions as \emph{global transitions}. The state space of the behavioural equivalence domain is built recursively. The tuple $(s^i_0)_{1\leq i\leq n}$, which has the initial states of all bisimilar domains as components, is the initial global state. Then for each global state $(s^i)_{1\leq i\leq n}\in S'$ and each action $a\in A$, we add to the set of global states the tuple $T^i(s^i,a)_{1\leq i\leq n}$, which has as $i$-th component the $a$-successor of $s^i$. The global transition function maps the global state $(s^i)_{1\leq i\leq n}\in S'$ and the action $a$ to the global state that has as $i$-th component the $a$-successor of $s^i$.  %The set of observations $\obs'$ is the same as $\obs$, the one from the real domain and any of its bisimilar alternatives. 
The observation function $Obs'$ assigns to the global state $(s^i)_{1\leq i\leq n}\in S'$ the same as the observation that each $Obs^i$ assigns to its $i$-th component.\footnote{The observation function is well-defined, as the components of a global state are all bisimilar (else, the forth condition would be violated) and thus all receive the same observation.} Note that the behavioural equivalence may have two non-identical global states $(s^i)_{1\leq i\leq n}\in S'$ and $(t^i)_{1\leq i\leq n}\in S'$ with the same underlying sets, i.e.~$\{s^i \mid 1\leq i\leq n\} = \{t^i \mid 1\leq i\leq n\}$.

We will now use the behavioural equivalence domain to define our notion of implicit knowledge. The definition is completely symmetric to the definition of explicit knowledge, except based on the behavioural equivalence domain instead of the compatibility domain. In a global state of the behavioural equivalence domain, we consider the implicit knowledge of the agent to be what is true in \emph{all} its component states. To make this notion precise in an epistemic setting, we identify each global state $s=(s^i)_{1\leq i\leq n}$ with an \emph{induced} epistemic model $\mathcal{M}_s = (W,R,V)$ with $W = \{s^i  \mid 1\leq i\leq n \}$, $R = W \times W$ and $V(w) = w$. By this identification, we can evaluate (static) epistemic formulas in global states, e.g.\ we then have $(s^1,\dots,s^n) \models K \phi$ iff $\phi$ is true in every world $s^1,\dots,s^n$. The modal operator $K$ is here interpreted as the operator for implicit knowledge. In the following, we will generally identify global states $s$ with their induced epistemic models $\mathcal{M}_s$ without notice. It will be clear from the context whether a global state should be considered as just that or as the induced epistemic model.  

\subsection{Behavioural correctness and learnability}
A domain $\mathcal{D}$ that is isomorphic to the behavioural equivalence domain represents all that is implicitly knowable about the domain. A learner that identifies such a domain will be called \emph{behaviourally correct with respect to implicit knowledge}.
\begin{definition}
Let $\mathcal{D} = ((S,A,T,s_0),\obs,\obsmod)$ and $\mathcal{D}' = ((S',A,T',s'_0),\obs',\obsmod')$ be domains. We say that $\mathcal{D}'$ is \emph{behaviourally correct with respect to implicit knowledge} about $\mathcal{D}$ if $\mathcal{D}'$ is isomorphic to the behavioural equivalence domain induced by $\mathcal{D}$. A collection of event models $(\mathcal{E}_a)_{a \in A}$ is \emph{behaviourally correct with respect to implicit knowledge} about $\mathcal{D}$ if the domain induced by $(\mathcal{E}_a)_{a \in A}$ and the initial state of the behavioural equivalence domain of $\mathcal{D}$ is behaviourally correct with respect to implicit knowledge about $\mathcal{D}$.
\end{definition}

\begin{example} 
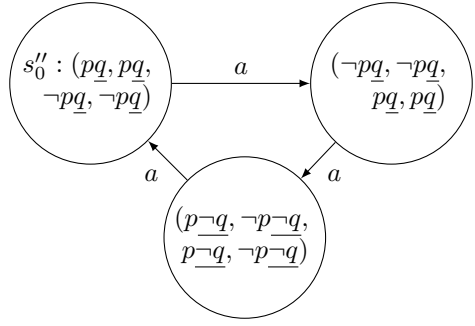
\begin{figure}
\[
\begin{tikzpicture}[auto,align=center]
  % \matrix (A) [matrix of nodes,align=center] {
   % \node (s01) {\rooms{0}{0}{0} \\[-1mm]  $s_0: \neg l \underline{ \neg r \neg s}$} 
    %\\
  % \node[below of=s01,node distance=12mm] (s02) {\rooms{1}{0}{0} \\[-1mm]  $s'_0: l \underline{ \neg r \neg s}$} 
%  };
  \node[circle,draw,inner sep=-7.5pt,minimum size=2.0cm,label={below:}] (s0) at (0,0) {
 \begin{tikzpicture}[auto,align=center,inner sep=0pt] 
   \node[align=right] (s01) {$s''_0: (p \underline{q},p \underline{q},$ \\ $\neg p \underline{q},\neg p \underline{q})$}; 
  \end{tikzpicture}
  };
%  \draw[-latex] (s0) edge [loop above,->,>=latex] node {$a$} (s0);
    \node[right of=s0,node distance=40mm,circle,draw,inner sep=-7.5pt,minimum size=2.0cm,label={below:}] (s3) {
    \begin{tikzpicture}[inner sep=0pt]
       \node[align=right] (s21) {$(\neg p\underline{q},\neg p\underline{q},$ \\ $p\underline{q}, p\underline{q})$}; 
    \end{tikzpicture}
    };
    \path[-latex] (s0) edge node {$a$} (s3);
   \node[below right of=s0,node distance=29mm,circle,draw,inner sep=-7.5pt,minimum size=2.0cm,label={below:}] (s2) {
    \begin{tikzpicture}[inner sep=0pt]
       \node[align=right] (s31) {$(p\underline{\neg q}, \neg p\underline{\neg q},$ \\ $p\underline{\neg q}, \neg p\underline{\neg q})$}; 
%       \node[below of=s21,node distance=12mm] (s22) {\rooms{0}{1}{1} \\[-1mm] $s'_3: \underline{\neg l r} s$};
    \end{tikzpicture}
    };
    \path[-latex] (s3) edge node {$a$} (s2);
    \path[-latex] (s2) edge node {$a$} (s0);    
 \end{tikzpicture}
\]
\caption{The behavioural equivalence domain $\mathcal{D}''$ induced by the door knocking domain $\mathcal{D}$ of Figure~\ref{figu:knockknock_domain}.}
\label{fig:beh_equiv_real}
\end{figure}
Figure~\ref{fig:beh_equiv_real} depicts the behavioural equivalence domain $\mathcal{D}''$ induced by the door knocking domain $\mathcal{D}$ from Example~\ref{example:knock-knock}. Example~\ref{example:four-bisimilar} proves that there are 4 domains bisimilar to $\mathcal{D}$ (including $\mathcal{D}$ itself), hence the states of $\mathcal{D}''$ are quadruples of states over $P = \{p,q\}$ (the 4 bisimilar domains were provided in Figure~\ref{fig:all_bisim}). Hence, more precisely, the states of $\mathcal{D}''$ are quadruples $(s^1,\dots,s^4)$ where $s^i \in \mathcal{D}_i$, $i=1,\dots,4$, using the domain names introduced in   Figure~\ref{fig:all_bisim}. In particular, the initial state $s''_0$ of $\mathcal{D}''$ is the quadruple of the initial states of the 4 domains $\mathcal{D}_1,\dots,\mathcal{D}_4$, i.e., $s''_0 = (pq, pq, \neg pq, \neg pq)$. Note that $\mathcal{D}''$ is isomorphic to the original domain $\mathcal{D}$. This is different from the situation with the compatibility domain $\mathcal{D}'$ of $\mathcal{D}$, shown in Figure~\ref{figu:knockknock_compdomain}. The compatibility domain $\mathcal{D}'$ only has two states and represents $a$ as a non-deterministic action, hence is not isomorphic to $\mathcal{D}$. In other words, for this particular example, the behavioural equivalence domain gives a perfect representation of the original domain, whereas the compatibility domain does not. 

As mentioned earlier, the compatibility domain captures explicit knowledge and the behavioural equivalence domain captures implicit knowledge. In the door knocking domain, the explicit knowledge is very limited. Consider a learner that learns an event model $\mathcal{E}_a$
that is behaviourally correct with respect to explicit knowledge about $\mathcal{D}$. This means that the domain induced by $\mathcal{E}_a$ and $s_0 = pq$ is isomorphic to the compatibility domain $\mathcal{D}'$. Hence, consulting Figure~\ref{figu:knockknock_compdomain}, we would get $comp(s_0) \models K q \wedge \neg K [\mathcal{E}_a] q \land \neg K [\mathcal{E}_a] \neg q$. So, initially the agent explicitly knows $q$, but doesn't explicitly know whether knocking on the door will lead to another $q$-state or to a $\neg q$-state. 

Since the behavioural equivalence domain is isomorphic to the true domain, this confusion would not arise in the implicit knowledge of the agent. Consider an event model $\mathcal{E}'_a$ that is behaviourally correct with respect to implicit knowledge about $\mathcal{D}$. Then the domain induced by $\mathcal{E}'_a$ and $s_0 = pq$ is isomorphic to the behavioural equivalence domain $\mathcal{D}''$ of $\mathcal{D}$. Consulting Figure~\ref{fig:beh_equiv_real}, we can then conclude $s''_0 \models K q \wedge K [\mathcal{E}'_a] q \wedge K [\mathcal{E}'_a] [\mathcal{E}'_a] \neg q$. In other words, the agent implicitly knows $q$, and implicitly knows that knocking once doesn't change the truth value of $q$, but knocking twice does. 

We mentioned earlier that an agent would never be able to learn exactly how the knocking action affects $p$. However, it \emph{is} able to learn the possible ways it \emph{might} affect $p$ and still be consistent with all observation traces. This is encoded in the behavioural equivalence domain. Recall that, intuitively, $p$ is needed to keep track of how many times the door has been knocked at since it was last closed. Of course it is only relevant to distinguish between whether it was knocked at zero times or once since last closed, as after two knocks it is open again. This implies that we can use $p$ to encode the necessary information in four different ways: 1) $p$ becomes false when the door has been knocked at once since last closed; 2) $p$ becomes true when the door hasn't been knocked at since last closed; 3) $p$ becomes false when the door hasn't been knocked at since last closed; 4) $p$ becomes true when the door has been knocked at once since last closed. Consulting again Figure~\ref{fig:beh_equiv_real}, we can see that these four options are exactly the ones encoded by each of the elements of the quadruples: the first elements encode option 1), the second elements encode option 2), etc. In other words, the behavioural equivalence domain encodes exactly all the possible underlying dynamics of the domain (of which there are 4, corresponding to $\mathcal{D}_1,\dots,\mathcal{D}_4$). Furthermore, it represents these compactly as a single domain, by identifying the dynamics that are not distinguishable by any observation traces (how exactly $p$ encodes the book-keeping of number of knocks at the door, we can never observe).  

We can now also explain why the states of the behavioural equivalence domain are represented as tuples instead of sets of states. Consider what would happen if we replaced each of the tuples in Figure~\ref{fig:beh_equiv_real} by the set of elements contained in the tuple. Then both of the two upper states would become the set $\{ pq, \neg pq \}$. In other words, these two states would become identified, and we would get back to the compatibility domain! Tuples are needed to be able to keep track of which trace we are currently following, and hence to be able to distinguish states that are observationally indistinguishable, but not observationally trace equivalent.

\end{example}

In the next subsection, we will present a learning algorithm for implicit knowledge and prove that it produces a domain that is isomorphic to the behavioural equivalence domain, when presented with a certain set of observation traces for the initial state of the real domain. But before looking at this specific learner, let us discuss the issue of whether behavioural correctness with respect to implicit knowledge is achievable, in general, when presented with a \emph{finite} set of observation traces for the initial state of the real domain. Note that the set of observation traces for an initial state $s_0$ may be infinite. For example, an unknown domain with a single state $s_0$ and a single `loopy' action $a$, i.e.~$(s_0,a,s_0)\in T$ induces infinitely many observation traces. In fact, since we assume universal applicability of actions, \emph{every} domain has an infinite set of observation traces. However, does the learner \emph{need} to see all possible observation traces to achieve behavioural correctness? As we will soon show, the answer is no. A learner that wants to achieve behavioural correctness, i.e.~to produce a domain that is isomorphic to the behavioural equivalence domain of the real domain, can do so in finite time, from a finite set of observation traces. Borrowing some notions from formal learning theory, we can then say that the behavioural equivalence domain is \emph{finitely identifiable} \cite{mukouchi1992characterization, lange1992types}: the agent can conclusively learn it in finite time (up to isomorphism), given an appropriate set of observation traces. This is in contrast with \emph{identifiability in the limit} \cite{gold1967language}, which holds for any learning process in which the learner converges to the right solution after seeing finitely many inputs, but does not necessarily know at which point convergence takes place. In formal learning theory, a set that enables finite identification is called a \emph{definite finite tell-tale set} (DFTT for short, see \cite{mukouchi1992characterization,Gierasimczuk:2010aa,gierasimczuk2012complexity}, for an application in action learning see \cite{bolander2017learning}). We adapt this notion to our setting.

\begin{definition} Let $\mathcal{D} = ((S,A,T,s_0),\obs, \obsmod)$ be a domain over $P$. A set $\Sigma\subseteq \textnormal{ObsTr}(\mathcal{D},s_0)$ is called a \emph{definite finite tell-tale (DFTT) for} $\mathcal{D}$ if
\begin{enumerate}
    \item[(i)] $\Sigma$ is finite;
    \item[(ii)] for any domain $\mathcal{D}' = ((S',A,T',s'_0),\obs, \obsmod')$ over $P$, if $\Sigma\subseteq \textnormal{ObsTr}(\mathcal{D}',s'_0)$ then $\mathcal{D} \bisim \mathcal{D}'$.
\end{enumerate}
\end{definition}

As we show next, every domain $\mathcal{D}$ over $P$ has a DFTT, and such a DFTT is sufficient for learning a behaviourally correct domain with respect to $\mathcal{D}$ in finite time.

\begin{proposition}\label{prop:DFTT_exists} Every domain over $P$ has a DFTT.
\end{proposition}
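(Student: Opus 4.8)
The plan is to exploit the finiteness of the space of domains over $P$ together with the equivalence of bisimilarity and trace equivalence (Lemma~\ref{lemma:bisim_trace}), collecting one distinguishing observation trace for each domain that is \emph{not} bisimilar to $\mathcal{D}$. Since the state space of any domain over $P$ is a subset of $2^P$, the action set is the fixed finite set $A$, and both $T$ and $\obsmod$ range over finite sets of possibilities, there are only finitely many deterministic domains over $P$ up to isomorphism. I would therefore fix representatives $\mathcal{D}_1,\dots,\mathcal{D}_m$ of the isomorphism classes of those deterministic domains over $P$ that are \emph{not} bisimilar to $\mathcal{D}$. Isomorphic domains are in particular bisimilar and hence, by Lemma~\ref{lemma:bisim_trace}, trace equivalent, so it suffices to rule out each $\mathcal{D}_j$ by a single trace drawn from $\mathcal{D}$'s own trace set.

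The crux is to show that for each $\mathcal{D}_j$ there is a distinguishing trace lying on $\mathcal{D}$'s side, i.e.\ some $\tau_j \in \textnormal{ObsTr}(\mathcal{D},s_0) \setminus \textnormal{ObsTr}(\mathcal{D}_j, s^j_0)$. Since $\mathcal{D}_j \not\bisim \mathcal{D}$, Lemma~\ref{lemma:bisim_trace} gives $\textnormal{ObsTr}(\mathcal{D},s_0) \neq \textnormal{ObsTr}(\mathcal{D}_j,s^j_0)$. Because both domains are deterministic with universally applicable actions, every action sequence $\alpha \in A^*$ determines a \emph{unique} execution trace from the initial state, and hence a unique observation trace, in each domain; moreover an observation trace records its action labels, so the underlying action sequence can be read off from it. Consequently the two trace sets can differ only if some action sequence $\alpha$ yields different observation traces in the two domains. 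Taking $\tau_j$ to be the observation trace of such an $\alpha$ in $\mathcal{D}$, any element of $\textnormal{ObsTr}(\mathcal{D}_j,s^j_0)$ carrying the action sequence $\alpha$ must be the unique $\mathcal{D}_j$-trace of $\alpha$, which differs from $\tau_j$; hence $\tau_j \notin \textnormal{ObsTr}(\mathcal{D}_j,s^j_0)$, as required.

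I would then set $\Sigma = \{\tau_1,\dots,\tau_m\}$. Condition (i) holds since $m$ is finite, and $\Sigma \subseteq \textnormal{ObsTr}(\mathcal{D},s_0)$ by construction. For condition (ii), let $\mathcal{D}'$ be any domain over $P$ with $\Sigma \subseteq \textnormal{ObsTr}(\mathcal{D}',s'_0)$, and suppose for contradiction that $\mathcal{D} \not\bisim \mathcal{D}'$. Then $\mathcal{D}'$ is isomorphic to one of the representatives $\mathcal{D}_j$, so $\textnormal{ObsTr}(\mathcal{D}',s'_0) = \textnormal{ObsTr}(\mathcal{D}_j,s^j_0)$; but $\tau_j \in \Sigma \subseteq \textnormal{ObsTr}(\mathcal{D}',s'_0)$ then forces $\tau_j \in \textnormal{ObsTr}(\mathcal{D}_j,s^j_0)$, contradicting the choice of $\tau_j$. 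Hence $\mathcal{D} \bisim \mathcal{D}'$, and $\Sigma$ is a DFTT.

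The main obstacle is the determinism argument of the second paragraph: one must ensure that a \emph{single} set of traces, all drawn from $\mathcal{D}$'s own observation traces, separates $\mathcal{D}$ from every non-bisimilar competitor—in particular that non-trace-equivalence always yields a witness on $\mathcal{D}$'s side, rather than only extra traces on the competitor's side. This is exactly what determinism together with universal applicability secures (one trace per action sequence, with the action labels recoverable from the trace). A secondary point worth stating explicitly is the finiteness of the domain space up to isomorphism, which underwrites the finiteness of $\Sigma$. I would also remark that the argument can be made constructive with bounded trace length: since every domain over $P$ has at most $2^{|P|}$ states, any two non-bisimilar such domains are already separated by an action sequence of bounded length, yielding an effective DFTT (as anticipated by the earlier footnote).
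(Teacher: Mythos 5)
Your proof is correct and follows essentially the same route as the paper's: enumerate the finitely many domains over $P$ (with the given $A$ and $\obs$), use Lemma~\ref{lemma:bisim_trace} to turn non-bisimilarity into trace inequivalence, and use determinism plus universal applicability to extract, for each non-bisimilar competitor, a distinguishing trace lying in $\textnormal{ObsTr}(\mathcal{D},s_0)$, collecting these into the finite set $\Sigma$. The only differences are cosmetic---you quotient by isomorphism where the paper enumerates all domains directly, and you spell out explicitly the determinism argument (one trace per action sequence, with actions recoverable from the trace) that the paper leaves implicit in its functions $Q_i$ and in the claim $\tau_i \notin \textnormal{ObsTr}(\mathcal{D}_i,s'_0)$.
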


\begin{proof} Let $\mathcal{D} = ((S,A,T,s_0),\obs, \obsmod)$ be a domain over $P$. Note that in condition (ii) for DFTTs, we quantify over domains over $P$ with the same set of actions and observations as $\mathcal{D}$, as bisimilarity is defined for domains with the same sets of actions and observations. As $P$, $A$ and $\obs$ are finite, this set of domains is finite. 
Let $\mathcal{D}_1,\dots, \mathcal{D}_n$ be an enumeration of all domains over $P$ with actions $A$ and observations $\obs$, and for each $\mathcal{D}_i$, define a function $Q_i: A^* \to \{0,1\}$ by 
\[Q_i(x) =  \begin{cases} 
      1 & \text{ if } \mathcal{D} \text{ and } \mathcal{D}_i \text{ produce the same observation trace based on } x \text{ from their initial states} \\
      0 & \text{otherwise} 
   \end{cases}
\]
Every domain $\mathcal{D}_i$ which is not bisimilar to $\mathcal{D}$ is also not trace equivalent to it (Lemma~\ref{lemma:bisim_trace}). Hence, the function $Q_i$ outputs $0$ at least on one of its inputs. For each $1\leq i\leq n$ for which $Q_i$ outputs $0$ at least once, pick one $x_i$ with $Q_i(x_i)=0$, and let $\tau_i$ be the observation trace produced by $\mathcal{D}$ based on $x_i$. Define the set $\Sigma$ consisting of all such $\tau_i$. We claim that $\Sigma$ is a DFTT. For condition (i): $\Sigma$ is clearly finite, since the $\tau_i$ range over $1\leq i < n$. For condition (ii), take a domain $\mathcal{D}'$ and suppose that $F \subseteq ObsTr(\mathcal{D}',s'_0)$. For a contradiction, suppose that $\mathcal{D}'$ is not bisimilar to $\mathcal{D}$. $\mathcal{D}'=\mathcal{D}_i$, for some $i\in \{1,\dots,n\}$. Since $\mathcal{D}_i$ is not bisimilar to $\mathcal{D}$, by construction of $\Sigma$, $\tau_i\not\in ObsTr(\mathcal{D}_i,s'_0)$. Hence $\Sigma\not\subseteq ObsTr(\mathcal{D}_i,s'_0)$, which gives a contradiction.
\end{proof}

Proposition \ref{prop:DFTT_exists} shows that a DFTT exists for each domain $\mathcal{D}$ over $P$. But we can in fact do better and \emph{bound} the length of the observation traces in a DFTT.

\begin{proposition}\label{prop:DFTT_size} Every domain over $P$ has a DFTT consisting of all observation traces that have $2^{2|P|}$ actions.
\end{proposition}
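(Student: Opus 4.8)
The plan is to take $\Sigma$ to be exactly the set of observation traces of $\mathcal{D}$ generated by the action sequences of length $2^{2|P|}$, and then verify the two DFTT conditions. Condition (i) is immediate: since $A$ is finite there are only finitely many action sequences of length $2^{2|P|}$, and because the domain is deterministic each such sequence produces a single observation trace from $s_0$; hence $\Sigma$ is finite, and by construction $\Sigma \subseteq \textnormal{ObsTr}(\mathcal{D},s_0)$ as the definition requires.

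For condition (ii), I would suppose $\Sigma \subseteq \textnormal{ObsTr}(\mathcal{D}',s'_0)$ for some domain $\mathcal{D}'$ over $P$ and derive $\mathcal{D} \bisim \mathcal{D}'$. First I would note that, since both domains are deterministic, an observation trace is completely determined by the action sequence embedded in it; hence the inclusion forces $\mathcal{D}$ and $\mathcal{D}'$ to produce \emph{identical} observation traces on every action sequence of length $2^{2|P|}$. Because the observation trace of a prefix of an action sequence is a prefix of the full observation trace, and because universal applicability lets me pad any shorter sequence out to length $2^{2|P|}$, this agreement propagates downward: $\mathcal{D}$ and $\mathcal{D}'$ then agree on every action sequence of length at most $2^{2|P|}$.

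The heart of the argument is to show that agreement up to this length already forces trace equivalence, and hence bisimilarity via Lemma~\ref{lemma:bisim_trace}. I would argue by contraposition using the synchronous product of the two transition systems, whose states are pairs $(s,s') \in S \times S'$ with transitions $(s,s') \xrightarrow{a} (T(s,a),T'(s',a))$ and initial state $(s_0,s'_0)$. Since $S,S' \subseteq 2^P$, this product has at most $2^{|P|}\cdot 2^{|P|} = 2^{2|P|}$ states, so every reachable pair is reachable by a shortest action sequence of length at most $2^{2|P|}-1$ (a shortest path in a graph on $N$ vertices visits each vertex at most once). If $\mathcal{D}$ and $\mathcal{D}'$ were not trace equivalent, then by determinism there is an action sequence on which their observation traces first differ, i.e.\ a reachable product pair $(t,t')$ with $Obs(t) \neq Obs'(t')$. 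Picking such a \emph{bad} pair reachable by a shortest action sequence $y$ gives $|y| \leq 2^{2|P|}-1$ while $\mathcal{D}$ and $\mathcal{D}'$ disagree on $y$, contradicting the agreement established above. Hence they are trace equivalent, so bisimilar, giving condition (ii).

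I expect the main obstacle to be pinning down the bound cleanly. The crucial insight is that it suffices to detect the \emph{first} observation mismatch, which necessarily lives in the reachable synchronous product, so the relevant bound is the diameter of that product, governed by $|S \times S'| \leq 2^{2|P|}$, rather than by trace length measured against a single domain (as the door-knocking discussion following Example~\ref{example:four-bisimilar} warns, traces must be long enough to re-visit states before a mismatch surfaces). A secondary point requiring care is the downward-closure step, which relies on universal applicability to pad sequences and on determinism to identify a trace with its action sequence; degenerate cases such as $A = \emptyset$ should be noted separately.
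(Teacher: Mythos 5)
Your proposal is correct and takes essentially the same route as the paper's own proof: construct the synchronous product of the two domains, bound its state space by $2^{|P|}\cdot 2^{|P|} = 2^{2|P|}$, and observe that any observation mismatch, if reachable, is reachable by a simple (shortest) path, hence within the length bound. Your write-up is somewhat more explicit than the paper on two supporting details---that determinism lets you identify observation traces with action sequences, and that universal applicability lets you pad shorter sequences to propagate agreement downward to all lengths at most $2^{2|P|}$---but these are refinements of, not departures from, the same argument.
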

\begin{proof}
Let $\mathcal{D}_1 = ((S^1,A,T^1,s^1_0),\obs, \obsmod^1)$ be a domain over $P$. Note that in condition (ii) for DFTTs, we quantify over domains over $P$ with the same set of actions and observations as $\mathcal{D}_1$, as bisimilarity is defined for domains with the same sets of actions and observations. For each domain over $P$ of the form $\mathcal{D}_2 = ((S^2,A,T^2,s^2_0),\obs, \obsmod^2)$, we can construct a \emph{product} domain $\mathcal{D} = ((S,A,T,s_0),\obs, \obsmod)$  where $S=S^1\times S^2$, for all $(s,s')\in S$ and $a\in A$, $T((s,s'),a)=(T^1(s,a),T^2(s',a))$, $Obs((s,s'))=(Obs(s),Obs(s'))$ and $s_0 = (s^1_0,s^2_0)$. If $\mathcal{D}_1$ and $\mathcal{D}_2$ are trace inequivalent, there will be a trace in $\mathcal{D}$ leading to a state $(s,s')$ with $Obs(s) \neq Obs(s')$. Now note that the number of states in $\mathcal{D}$ is $|S^1|\times|S^2| \leq 2^{|P|}\cdot 2^{|P|} = 2^{2|P|}$. %From the definition of domains, we know that $(s,s')$ is reachable from $s_0$. 
If $(s,s')$ is reachable, then it is reachable by a simple path, i.e.~a sequence of alternating states and actions that does not visit the same state twice~\cite[Th~6]{chartrand2012a}. Hence, we need not consider paths longer than that--- if a state $(s,s')$ with $Obs(s) \neq Obs(s')$ is reachable, it will be reachable by a simple path, and hence a path of length at most $2^{2|P|}$. 
\end{proof}

Having established the existence of DFTTs of a bounded finite size, we can now show that domains can be identified, up to bisimilarity, in finite time.

\begin{proposition}\label{prop:learnable} If there is a DFTT for $\mathcal{D}$ over $P$, then it is possible to learn a domain that is behaviourally correct with respect to implicit knowledge in finite time.
\end{proposition}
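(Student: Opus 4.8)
The plan is to exhibit a concrete terminating learner and argue that it outputs a domain isomorphic to the behavioural equivalence domain of $\mathcal{D}$. The whole argument rests on finiteness: since $P$, $A$, and $\obs = 2^P \times 2^P$ are all finite, and any domain over $P$ has state space $S \subseteq 2^P$, there are only finitely many domains over $P$ with action set $A$ and observations $\obs$ in which every state is reachable from the initial state. First I would have the learner enumerate all of them, say $\mathcal{D}_1,\dots,\mathcal{D}_n$.

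Next, given a DFTT $\Sigma$ for $\mathcal{D}$, the learner decides for each $\mathcal{D}_i$ whether $\Sigma \subseteq \textnormal{ObsTr}(\mathcal{D}_i,s^i_0)$. This test is decidable in finite time: $\Sigma$ is finite by clause (i), and for each $\tau \in \Sigma$ with underlying action sequence $\alpha$, determinism of $\mathcal{D}_i$ means $\alpha$ has a unique execution trace from $s^i_0$, whose observation trace can be computed and compared with $\tau$. The learner then collects the set $\mathbb{D}$ of all $\mathcal{D}_i$ passing the test and returns the synchronous composition of $\mathbb{D}$ (Definition \ref{definition:behav_equiv_domain}).

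The correctness argument is where the DFTT property does its work, and combining its two directions is the main thing to get right. By clause (ii) of the DFTT definition, every $\mathcal{D}_i$ passing the test satisfies $\mathcal{D} \bisim \mathcal{D}_i$. For the converse inclusion I would invoke Lemma \ref{lemma:bisim_trace}: any domain bisimilar to $\mathcal{D}$ is trace equivalent to it, hence has exactly the observation traces of $\mathcal{D}$; since $\Sigma \subseteq \textnormal{ObsTr}(\mathcal{D},s_0)$ by definition of a DFTT, every such domain passes the test. Thus $\mathbb{D}$ is precisely the set of all domains bisimilar to $\mathcal{D}$, so by Definition \ref{definition:behav_equiv_domain} its synchronous composition is exactly the behavioural equivalence domain induced by $\mathcal{D}$, which is behaviourally correct with respect to implicit knowledge by definition.

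Finally, every step above is finite --- finitely many candidate domains, a finite test against a finite $\Sigma$ for each, and a synchronous composition of finitely many finite domains --- so the learner terminates. The remaining subtlety is how the learner comes to possess a DFTT at all while only observing traces one at a time; here I would appeal to Proposition \ref{prop:DFTT_size}, which guarantees that the (finitely many, as $A$ is finite) observation traces of length $2^{2|P|}$ already form a DFTT. Hence the learner can compute this bound from $|P|$, collect exactly those traces, and know after finitely many observations that it holds a DFTT and may halt. This is precisely what upgrades mere identifiability in the limit to finite identifiability.
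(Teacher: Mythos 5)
Your proposal is correct and follows essentially the same route as the paper's proof: enumerate the finitely many candidate domains over $P$, keep exactly those whose observation traces contain $\Sigma$, and return the synchronous composition, with DFTT clause (ii) giving one inclusion and Lemma~\ref{lemma:bisim_trace} (bisimilarity coincides with trace equivalence) giving the other. Two differences are worth noting. First, the paper's learner receives only $P$ and $\Sigma$ as input (not $A$), so before enumerating candidates it proves a preliminary claim (Claim 1 in the paper) that any DFTT must feature every action of $\mathcal{D}$ and every observation occurring in $\mathcal{D}$ --- otherwise one could perturb $\mathcal{D}$ on an unseen action or state to obtain a non-bisimilar domain whose traces still contain $\Sigma$, contradicting clause (ii). You sidestep this by assuming the learner knows $A$; that is harmless if the action repertoire is background knowledge (as it is for the explicit-knowledge learner, whose input includes $A$), but in the paper's input model for implicit learning it is a step you would need to add. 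Second, your argument that the test $\Sigma \subseteq \textnormal{ObsTr}(\mathcal{D}_i,s^i_0)$ is decidable --- simulate each trace's action sequence in the deterministic candidate and compare observations --- is cleaner than the paper's, which instead checks containment in the set of traces of length at most $2^{2|P|}$; and you make explicit the converse inclusion (every bisimilar domain passes the test) that the paper leaves implicit when asserting that $\mathbb{D}$ is the set of bisimilar domains. Your closing remark on how the learner comes to hold a DFTT, via Proposition~\ref{prop:DFTT_size}, is extra material not in the paper's proof, and it matches the paper's subsequent definition of a sound and complete set of observation traces.
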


\begin{proof}
Let $\Sigma$ be a DFTT for $\mathcal{D}$. 

\medskip
\noindent \emph{Claim 1}. $\Sigma$ features all actions and observations in $\mathcal{D}$.

\smallskip
\noindent \emph{Proof of Claim 1.} 
For contradiction, suppose that there is an action $a\in A$ which does not occur in any observation trace of $\Sigma$. Define a domain $\mathcal{D}'$ with $S'=S$, $A'=A$,  $s_0=s'_0$ and $T'(s,a')=T(s,a')$ for all $a\in A - \{a\}$. Let $T(s_0,a)=t$, and choose an observation for $t$ in $\mathcal{D}'$ so that $Obs'(t)\neq Obs(T(s_0,a))$. Then $\mathcal{D}$ and $\mathcal{D}'$ have different observation traces, and thus are not bisimilar, but $\Sigma\subseteq \textnormal{ObsTr}(\mathcal{D}',s'_0)$. This gives contradiction.
Now suppose that there is some observation $o\in\obs$ not featured in $\Sigma$. Let $s\in S$ be some state with $Obs(s)=o$. Define $\mathcal{D}'$ as $\mathcal{D}$ except for $Obs(s)\neq Obs'(s)$. Since $o$ does not occur in $\Sigma$, $\Sigma \subseteq \textnormal{ObsTr}(\mathcal{D}',s'_0)$, which gives contradiction. This completes the proof of Claim 1.
\medskip

From Claim 1, we know that all actions and observations from $\mathcal{D}$ appear in $\Sigma$. Let $\mathcal{D}_1,\dots, \mathcal{D}_n$ be an enumeration of all domains over $P$ with actions $A$ and observations $\obs$. 
For each domain $\mathcal{D}'$ over $P$, check if $\Sigma\subseteq \{ \tau \in \textnormal{ObsTr}(\mathcal{D}',s'_0) \mid \tau \text{ has length at most } 2^{2|P|}
\}$. Any domain passing this check is, by definition, bisimilar to $\mathcal{D}$. As the set of domains over $P$ with actions $A$ and observations $\obs$ is finite, and $\Sigma$ is a finite set of finite sequences (Proposition \ref{prop:DFTT_exists}), this check can be done in finite time. Once we have computed the set $\mathbb{D}$ of domains passing the check, we can construct the synchronous composition of the domains in $\mathbb{D}$, following Definition
\ref{definition:behav_equiv_domain}. As $\mathbb{D}$ is the set of domains bisimilar to $\mathcal{D}$, and the behavioural equivalence domain induced by $\mathcal{D}$ is simply the synchronous composition of the domains in $\mathbb{D}$, we will have constructed, from $\mathbb{D}$, the behavioural equivalence domain induced by $\mathcal{D}$. 
\end{proof}

Using the bound for the size of a DFTT established in Proposition \ref{prop:DFTT_size}, we can define a notion of a \emph{sound and complete} set of observation traces for a domain.

\begin{definition} Let $\mathcal{D} = ((S,A,T,s_0),\obs, \obsmod)$. A \emph{sound and complete} set of observation traces $\Sigma$ for $\mathcal{D}$ is a subset of $\textnormal{ObsTr}(\mathcal{D},s_0)$ such that every observation trace from $s_0$ with $2^{2|P|}$ actions is in $\Sigma$.
\end{definition}

\medskip

\subsection{A behaviourally correct learner of implicit knowledge}

We now present our second learner of the paper and prove that it achieves its goal (producing a behaviourally correct domain) when presented with a sound and complete set of observation traces from the initial state of the real domain. The learner first computes, in an incremental fashion, the set $\mathbb{D}$ of domains bisimilar to the real domain $\mathcal{D}$. It then computes and outputs the synchronous composition of the domains in $\mathbb{D}$, which is, by definition, the same as the behavioural equivalence domain induced by $\mathcal{D}$.

To describe the learner, we need the following additional technical definition. A \emph{history for observation trace} $(o_0,a_0,o_{1},\dots, a_n, o_{n+1})$ is a sequence $h = (\bar{s}_0, o_0, a_1,\bar{s}_{1},o_{1}, \dots, a_n, s_{n+1}, o_{n+1})$ where $\bar{s}_i\in comp(o_i)$ for $0\leq i \leq n+1$. Intuitively, $h$ \emph{hypothesises} the state $\bar{s}_i$ that gave rise to the observations $o_i$. Since observations are non-noisy, it is only relevant to consider hypothesised states that are compatible with the observations.

By choosing such states to explain $(o_0,a_0,o_{1},\dots, a_n, o_{n+1})$, the history $h$ induces a possible domain that could have generated it.

\begin{definition} Let $h = (\bar{s}_0, o_0, a_1,\bar{s}_{1},o_{1}, \dots, a_n, s_{n+1}, o_{n+1})$ be a history (for some observation trace). 
We then define the following notations
\begin{itemize}
    \item $\states^h = \{ \bar{s}_i \mid i=0,\dots, n+1\}$
    \item $\actions^h  =  \{ a_i \mid i=0,\dots, n\}$
    \item $\transmod^h  =  \{(\bar{s}_i,a_i,\bar{s}_{i+1}) \mid i=0,\dots,n\}$
    \item $s^h_0 = \bar{s}_0$
    \item $\obs^h  = \{ o_i \mid i=0,\dots,n+1\}$
    \item $\obsmod^h(\bar{s}_i) = o_i$, for $i=0,\dots,n+1$.
\end{itemize}
The \emph{domain induced by} $h$ is the domain $\mathcal{D}^h$ given by $\mathcal{D}^h= ((\states^h,\actions^h,\transmod^h, s^h_0), \obs^h, \obsmod^h)$.
% induced by $h$} is given by
\end{definition} 

The learner has three related components, described in Algorithms \ref{algo:histories} and \ref{algo:L1}, and in Theorem~\ref{cor:L_1}. The procedure in Theorem~\ref{cor:L_1} builds on the output of Algorithm \ref{algo:L1}, which in turn builds on the output of Algorithm \ref{algo:histories}. Algorithm \ref{algo:histories}, called $\textsc{Histories}(P,\tau)$, generates a set of histories from a set of proposition symbols $P$ and a single observation trace $\tau$. The algorithm generates the set of all histories for $\tau$, assuming the domain is described by propositional symbols $P$. Each history $h$ produced by this algorithm on input $\tau$ induces a domain $\mathcal{D}^h$ over $P$ that mimics 
%\tobo{But not in the bisimulation way, or? Maybe make it clear in which sense "simulates". "Mimics" is another term that is not as technically loaded if you want to make a more informal claim.} 
the interaction seen in $\tau$. However, as $\tau$ may not exhibit \textit{every} possible interaction with the real domain $\mathcal{D}$, $\mathcal{D}^h$ provides a partial description of $\mathcal{D}$. Intuitively, such partial descriptions could be carefully `stitched together' to generate complete domain descriptions bisimilar to %. \tobo{The many quotes make it seem too cautious. I would consider only keeping "stitched together" in quotes.}
$\mathcal{D}$. Algorithm \ref{algo:L1} presents such a procedure,  $\textsc{Domains}(P,\Sigma)$. Given a set of observation traces $\Sigma = \{\tau_1,\dots,\tau_n\}$, it iteratively calls   $\textsc{Histories}(P,\tau_i)$ for $1\leq i\leq n$. Each call produces a set of histories $\mathcal{H}_i$ for $\tau_i$. $\textsc{Domains}(P,\Sigma)$ then checks each set of histories of the form $H=\{h_1,\dots,h_n \mid h_i=(s,\dots), h_i\in \mathcal{H}_i,1\leq i \leq n, \text{ for some state }s\}$, and constructs a domain $\mathcal{D}^H$ by taking the union of the components of the domains $\mathcal{D}^{h_i}$, component-wise. If $\mathcal{D}^H$ is deterministic, the domain is added to a set $\mathbb{D}$. The set of domains $\mathbb{D}$ is the output of $\textsc{Domains}(P,\Sigma)$. Intuitively, each $\mathcal{D}^H\in\mathbb{D}$ is a domain that mimics all observation traces  in the input, without breaking the condition of determinism for domains. Lemma \ref{theorem:identif_trace} shows that $\textsc{Domains}(P,\Sigma)$ produces all and only the domains that are bisimilar to the real domain $\mathcal{D}$, when $\Sigma$ is a sound and complete set of observation traces for $\mathcal{D}$. The overall learner, described in Theorem~\ref{cor:L_1}, then returns the synchronous composition of the domains $\mathbb{D}$ produced by $\textsc{Domains}(P,\Sigma)$. Since $\mathbb{D}$ is the set of domains bisimilar to $\mathcal{D}$, the learner will return the behavioural equivalence domain induced by $\mathcal{D}$, and hence be behaviourally correct with respect to implicit knowledge about $\mathcal{D}$.
% this set is the set of domains that are bisimilar to $\mathcal{D}$, the output of $L_1(P,\Sigma)$ is just the behavioural equivalence domain induced by $\mathcal{D}$. Thus, as stated in Theorem \ref{cor:L_1}, $L_1(P,\Sigma)$ outputs a domain that is behaviourally correct with respect to implicit knowledge.

%$\Sigma=\{\sigma_1^{obs},\dots,\sigma_n^{obs}\}$

%it combines a set of partial descriptions $\mathcal{D}^{h}$, outputted by different calls of Algorithm \ref{algo:histories} with input $\Sigma=\{\sigma_1^{obs},\dots,\sigma_n^{obs}\}$, to produce a number of ``complete descriptions'' $\mathcal{D}^x$ of the behaviour afforded by $\mathcal{D}$, where each $\mathcal{D}^x$ consists of a domain that is behaviourally equivalent to $\mathcal{D}$.

\begin{algorithm}
\SetKwInOut{Input}{Input}\SetKwInOut{Output}{Output}
\SetAlgoLined
\Input{$P$ (propositional symbols), $\tau=(o_0,a_0,o_1,\dots, o_n,a_n,o_{n+1})$ (observation trace).} %\tobo{Your sequence doesn't match the definition, or the parenthesis before $Obs(s_n)$ is a typo. Probably the last. I'm not so fond of the input being of that form, since the learner doesn't have access to the $s_i$. What the learner sees is just $(o_0,a_0,o_1,\dots)$. I think actually it will be more understandable if the input is of that form, because then it becomes more clear that the agent has to invent some states for those observations, and it is not a question of inventing a different set of states and the original one. Also, it would make the algorithm a bit notationally shorter.} 
\Output{$H_n$ (histories).}
\For{$i=0,\dots, n$ %\tobo{We need to start in $0$ to get observations of the first action execution.}
} 
    {
    $Con_i \coloneqq \{ (\bar{s}_i,\bar{s}_{i+1}) \in 2^P\times 2^P \mid \bar{s}_i \in comp(o_i), \bar{s}_{i+1} \in comp(o_{i+1}) \}$\; %\tobo{$Obs(s_i)$ is a pair $(Obs^+(s_i),Obs^-(s_i))$, so the notation $s \models Obs(s_i)$ doesn't make sense. You can introduce it as an abbreviation, but then it is not clear why one would want to use $\models$. Except if you want to identify observations with the conjunction of their literals. But that should be introduced early, then, I think, and we should then make use of it where ever relevant.} 
    $H_i = \emptyset$\; %\tobo{You haven't talked about time steps before, so it is not obvious what it is supposed to mean here either. I would simply removed the term "time step" or alternatively, if needed elsewhere, define it first.} 
    \For{$(\bar{s}_{i},\bar{s}_{i+1})\in Con_i$}
        {
        \If{$i=0$}
            {
            Start a history $h = (\bar{s}_{0},o_0,a_0, \bar{s}_{1},o_{1})$\; %\tobo{why use the index when we know $i=0$?} 
             \If{$\mathcal{D}^h$ is deterministic}
                 {$H_0 \coloneqq H_0 \cup \{h\}$\;}}
        
        \Else{
                \For{$h=(\bar{s}_{0},o_0,\dots,\bar{s}_{i},o_i) \in H_{i-1}$}
                {$h' \coloneqq (\bar{s}_{0},o_0,\dots,\bar{s}_{i},o_i,a_i,\bar{s}_{i+1},o_{i+1})$\;
                \If{$\mathcal{D}^{h'}$ is deterministic}{$H_i \coloneqq H_i \cup \{ h'\}$\;} %\tobo{We talked about the dot-notation for concatenation before and I think I said it would have to be defined in the text.} 
                }
            }
        }
    }
 \Return{$H_n$}\;
 \caption{$\textsc{Histories}(P,\tau)$}
 \label{algo:histories}
\end{algorithm}

\begin{algorithm}
\SetKwInOut{Input}{Input}\SetKwInOut{Output}{Output}
\SetAlgoLined
\Input{$P$ (propositional symbols), $\Sigma = \{\tau_1,\dots,\tau_n \}$ (set of observation traces).}
\Output{$\mathbb{D}$ (set of domains over $P$).}
$\mathbb{D}\coloneqq \emptyset$\;
\For{$i=1,\dots, n$ %\tobo{We need to start in $0$ to get observations of the first action execution.}
} 
    {
    $\mathcal{H}_i \coloneqq \textsc{Histories}(P,\tau_i)$\; 
    }
\For{$h_1\in \mathcal{H}_1,\dots, h_n\in \mathcal{H}_n$%\tobo{We need to start in $0$ to get observations of the first action execution.}
} 
    {
    \If{$h_1,\dots, h_n$ all start in the same state $s_0$}{
    
    $H\coloneqq \{h_1,\dots,h_n\}$\;
    $s^H_0\coloneqq s_0$\;
    
    \scalebox{0.97}[1]{$\mathcal{D}^H \coloneqq ((\bigcup^k_{i=1}S^{h_i}, \bigcup^k_{i=1}A^{h_i}, \bigcup^k_{i=1}T^{h_i},s^H_0), \bigcup^k_{i=1}\obs^{h_i}, \bigcup^k_{i=1}Obs^{h_i})$}\;
    
    \If{$\mathcal{D}^{H}$ is deterministic}{
    $\mathbb{D} \coloneqq \mathbb{D} \cup \{\mathcal{D}^{H}\}$
    }
    } 
}

 \Return{$\mathbb{D}$}\;
 \caption{$\textsc{Domains}(P,\Sigma)$}
 \label{algo:L1}
\end{algorithm}

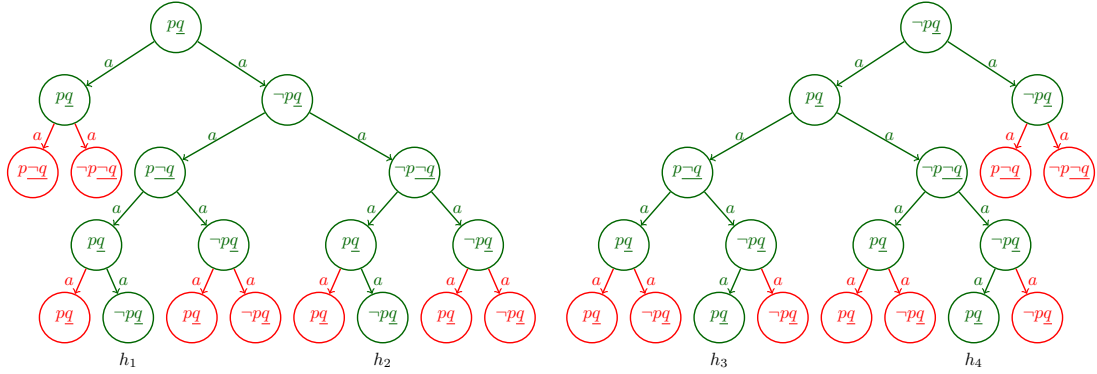
\begin{figure}
\begin{center}
    \scalebox{0.67}{
  %      \fcolorbox{black!05}{black!05}{
      %      \begin{minipage}{\dimexpr1.55\textwidth}
                \begin{forest}
                sn edges
                [, phantom, s sep = 0.6cm
                    [\both, mygreen, name=level0
                        [\both, mygreen, edge=mygreen, name=level10, edge label = {node [midway,left] {$a$} }
                            [\onlyP, red, edge=red, edge label = {node [midway,left] {$a$} }, name=level20]
                            [\neither, red, edge=red, edge label = {node [midway, right] {$a$} }, name=level21] 
                        ]
                        [\onlyQ, mygreen, edge=mygreen, edge label = {node [midway,right] {$a$}}, name=level11
                            [\onlyP, mygreen, edge=mygreen, edge label = {node [midway,left] {$a$}},name=level22  [\both, mygreen, edge=mygreen,name=h1, edge label = {node [midway,left] {$a$}} [\both, red, edge=red, edge label = {node [midway,left] {$a$}}] [\onlyQ, mygreen, edge=mygreen, edge label = {node [midway,right] {$a$}},name=h1new]] [\onlyQ, mygreen, edge=mygreen, name=h2, edge label = {node [midway,right] {$a$}} [\both, red, edge=red, edge label = {node [midway,left] {$a$}}] [\onlyQ, red, edge=red, edge label = {node [midway,right] {$a$}}] ]] 
                            [\neither, mygreen, edge=mygreen , edge label = {node [midway,right] {$a$}}, name=level23 [\both, mygreen, edge=mygreen,name=h3new1, edge label = {node [midway,left] {$a$}} [\both, red, edge=red, edge label = {node [midway,left] {$a$}}] [\onlyQ, mygreen, edge=mygreen, edge label = {node [midway,right] {$a$}}, name=h2new] ] [\onlyQ, mygreen, edge=mygreen,name=h4, edge label = {node [midway,right] {$a$}} [\both, red, edge=red, edge label = {node [midway,left] {$a$}}] [\onlyQ, red, edge=red, edge label = {node [midway,right] {$a$}}]]] 
                        ]   
                    ]
                    [\onlyQ, mygreen, name=level00
                        [\both, mygreen, edge=mygreen,edge label = {node [midway,left] {$a$}}, name=level12
                            [\onlyP, mygreen, edge=mygreen,edge label = {node [midway,left] {$a$}}, name=level24 [\both, mygreen, edge=mygreen,name=h5,edge label = {node [midway,left] {$a$}} [\both, red, edge=red, edge label = {node [midway,left] {$a$}}] [\onlyQ, red, edge=red, edge label = {node [midway,right] {$a$}}]] [\onlyQ, mygreen, edge=mygreen,name=h6,edge label = {node [midway,right] {$a$}} [\both, mygreen, edge=mygreen, edge label = {node [midway,left] {$a$}},name=h3new] [\onlyQ, red, edge=red, edge label = {node [midway,right] {$a$}}] ]] 
                            [\neither, name=level25, mygreen, edge=mygreen,edge label = {node [midway,right] {$a$}} [\both, mygreen, edge=mygreen,name=h7,edge label = {node [midway,left] {$a$}} [\both, red, edge=red, edge label = {node [midway,left] {$a$}}] [\onlyQ, red, edge=red, edge label = {node [midway,right] {$a$}}]] [\onlyQ, mygreen, edge=mygreen,name=h8,edge label = {node [midway,right] {$a$}} [\both, mygreen, edge=mygreen, name=h4new, edge label = {node [midway,left] {$a$}}] [\onlyQ, red, edge=red, edge label = {node [midway,right] {$a$}}]]] 
                        ]
                        [\onlyQ, mygreen, edge=mygreen,edge label = {node [midway,right] {$a$}}, name=level13
                            [\onlyP,name=level27, red, edge=red,edge label = {node [midway,left] {$a$}}] 
                            [\neither,name=level28, red, edge=red ,edge label = {node [midway,right] {$a$}}] 
                        ]
                    ]
                ]
                \node[below=0.05cm of h1new](n1){$h_1$}; 
                \node[below=0.05cm of h2new](n2){$h_2$}; 
                \node[below=0.05cm of h3new](n3){$h_3$}; 
                \node[below=0.05cm of h4new](n4){$h_4$}; 
                %\node[left=2.2cm of level0](l0){$i=0$}; 
                %\node[below=0.875cm of l0](l1){$\i=1$}; 
                %\node[below=0.875cm of l1](l2){$i=2$}; 
                %\node[below=0.875cm of l2](l3){$i=3$}; 
                %\node[right=15cm of l0](right0){}; 
                %\node[right=15cm of l1](right1){}; 
                %\node[right=15cm of l2](right2){}; 
                %\draw [densely dotted] (l0) -- (level0) -- (level00) -- (right0);
                %\draw [densely dotted] (l1) -- (level10) -- (level11) -- (level12) -- (level13) -- (right1);
                %\draw [densely dotted] (l2) -- (level20) -- (level21) -- (level22) -- (level23) -- (level24) -- (level25) -- (level27) -- (level28)  -- (right2);
                \end{forest}
  %          \end{minipage}
  %      }
    }
    \end{center}
\caption{A visualisation of the execution of $\textsc{Histories}(P,\tau)$ (Algorithm \ref{algo:histories}) with input $P=\{p,q\}$ and the observation trace $\tau= (q,a,q,a,\neg q,a,q,a,q)$ from the door knocking domain. Each branch of each tree represents an initial segment of a history for $\tau$. The leftmost branch of the leftmost tree represents the history $(pq,q,a,pq,q,a,p\neg q,\neg q)$: The root is the state $pq$ in which $q$ is observed; then $a$ is executed leading again to the state $pq$ where $q$ is observed; finally, $a$ is executed again, this time leading to the state $p\neg q$ where $\neg q$ is observed.} \label{fig:histories}
\end{figure}
\begin{example}
Figure \ref{fig:histories} shows the behaviour of $\textsc{Histories}(P,\tau)$ (Algorithm \ref{algo:histories}) when executed with $P=\{p,q\}$ and the observation trace $\tau= (q,a,q,a,\neg q,a,q,a,q)$ from the door knocking domain. The algorithm iterates over $i=0,1,2,3$. At step $0$, it generates the set $Con_0 = \{(\bar{s}_0,\bar{s}_1) \mid \bar{s}_0,\bar{s}_1\in comp(q) \}=\{(pq,pq),(pq,\neg p q),(\neg pq, pq), (\neg pq, \neg pq)\}$. The algorithm starts a history $(\bar{s}_0,q,a,\bar{s}_1,q)$ for each such pair, as they all induce deterministic domains. Each path of length $1$ starting at the root of a tree corresponds to one such history. At step 1, the algorithm computes $Con_1$. For each pair $(\bar{s}_1,\bar{s}_2)\in Con_1$, it considers each history from step $0$ whose last state is $\bar{s}_1$ and tries to extend it with $(a, \bar{s}_2,\neg q)$. From the left child of the root of the left tree, two history extensions are tried and discarded (shown in red). The first one would extend $(pq, q, a, pq, q)$ into $(pq, q, a, pq, q, a, p\neg q, \neg q)$. This extension is discarded because it would induce the non-deterministic transition function $T(pq,a) = \{pq, p\neg q\}$. The second one would extend $(pq, q, a, pq, q)$ into $(pq, q, a, pq, q, a, \neg p\neg q, \neg q)$. This is discarded as well, as it would induce the non-deterministic transition function $T(pq,a) = \{pq, \neg p\neg q\}$. On the right tree, the history extensions marked in red are also discarded, as they would similarly induce non-deterministic transition functions. The remaining extensions, marked in green, are accepted. In step $2$, the histories generated in step $1$ are again extended. In step $3$, several history extensions are discarded, since they would yield non-deterministic domains. The algorithm then outputs the set of histories $\{h_1,\dots, h_4\}$, each corresponding to a path from one root of a tree to a leaf of that tree.

\begin{figure} %[H]
\begin{minipage}{0.5\textwidth}
 \[
 \scalebox{0.75}{
 \begin{tikzpicture}[auto,align=center]
  % \matrix (A) [matrix of nodes,align=center] {
   % \node (s01) {\rooms{0}{0}{0} \\[-1mm]  $s_0: \neg l \underline{ \neg r \neg s}$} 
    %\\
  % \node[below of=s01,node distance=12mm] (s02) {\rooms{1}{0}{0} \\[-1mm]  $s'_0: l \underline{ \neg r \neg s}$} 
%  };
  \node[circle,draw,inner sep=-10pt,minimum size=1.5cm,label={below:}] (s0) at (0,0) {
 \begin{tikzpicture}[auto,align=center,inner sep=0pt] 
   \node (s01) {$s_0: p \underline{q}$};
 %  \node[below of=s01,node distance=5mm] (s02) {$\neg p \underline{q}$}; 
  \end{tikzpicture}
  };
%  \draw[-latex] (s0) edge [loop above,->,>=latex] node {$a$} (s0);
    \node[right of=s0,node distance=40mm,circle,draw,inner sep=-10pt,minimum size=1.5cm,label={below:}] (s3) {
    \begin{tikzpicture}[inner sep=0pt]
       \node (s21) {$\neg p \underline{q}$};
%       \node[below of=s21,node distance=12mm] (s22) {\rooms{0}{1}{1} \\[-1mm] $s'_3: \underline{\neg l r} s$};
    \end{tikzpicture}
    };
    \path[-latex] (s0) edge node {$a$} (s3);
   \node[below right of=s0,node distance=28mm,circle,draw,inner sep=-10pt,minimum size=1.5cm,label={below:}] (s2) {
    \begin{tikzpicture}[inner sep=0pt]
       \node (s21) {$p \underline{\neg q}$};
%       \node[below of=s21,node distance=12mm] (s22) {\rooms{0}{1}{1} \\[-1mm] $s'_3: \underline{\neg l r} s$};
    \end{tikzpicture}
    };
    \path[-latex] (s3) edge node {$a$} (s2);
    \path[-latex] (s2) edge node {$a$} (s0);    \node[left of=s2,xshift=-9mm] {$\mathcal{D}^{\{h_1\}}$};
 \end{tikzpicture}
 }
 \]
  \end{minipage} \hfill 
\begin{minipage}{0.5\textwidth}
\[
\scalebox{0.75}{
\begin{tikzpicture}[auto,align=center]
% \matrix (A) [matrix of nodes,align=center] {
% \node (s01) {\rooms{0}{0}{0} \\[-1mm]  $s_0: \neg l \underline{ \neg r \neg s}$} 
%\\
% \node[below of=s01,node distance=12mm] (s02) {\rooms{1}{0}{0} \\[-1mm]  $s'_0: l \underline{ \neg r \neg s}$} 
%  };
\node[circle,draw,inner sep=-10pt,minimum size=1.5cm,label={below:}] (s0) at (0,0) {
\begin{tikzpicture}[auto,align=center,inner sep=0pt] 
\node (s01) {$s_0: p \underline{q}$};
%  \node[below of=s01,node distance=5mm] (s02) {$\neg p \underline{q}$}; 
\end{tikzpicture}
};
%  \draw[-latex] (s0) edge [loop above,->,>=latex] node {$a$} (s0);
\node[right of=s0,node distance=40mm,circle,draw,inner sep=-10pt,minimum size=1.5cm,label={below:}] (s3) {
\begin{tikzpicture}[inner sep=0pt]
   \node (s21) {$\neg p \underline{q}$};
%       \node[below of=s21,node distance=12mm] (s22) {\rooms{0}{1}{1} \\[-1mm] $s'_3: \underline{\neg l r} s$};
\end{tikzpicture}
};
\path[-latex] (s0) edge node {$a$} (s3);
\node[below right of=s0,node distance=28mm,circle,draw,inner sep=-10pt,minimum size=1.5cm,label={below:}] (s2) {
\begin{tikzpicture}[inner sep=0pt]
   \node (s21) {$\neg p \underline{\neg q}$};
%       \node[below of=s21,node distance=12mm] (s22) {\rooms{0}{1}{1} \\[-1mm] $s'_3: \underline{\neg l r} s$};
\end{tikzpicture}
};
\path[-latex] (s3) edge node {$a$} (s2);
\path[-latex] (s2) edge node {$a$} (s0);
\node[left of=s2,xshift=-9mm] {$\mathcal{D}^{\{h_2\}}$};
\end{tikzpicture}
}
\]
\end{minipage} 
\begin{minipage}{0.5\textwidth}
 \[
 \scalebox{0.75}{
 \begin{tikzpicture}[auto,align=center]
  % \matrix (A) [matrix of nodes,align=center] {
   % \node (s01) {\rooms{0}{0}{0} \\[-1mm]  $s_0: \neg l \underline{ \neg r \neg s}$} 
    %\\
  % \node[below of=s01,node distance=12mm] (s02) {\rooms{1}{0}{0} \\[-1mm]  $s'_0: l \underline{ \neg r \neg s}$} 
%  };
  \node[circle,draw,inner sep=-10pt,minimum size=1.5cm,label={below:}] (s0) at (0,0) {
 \begin{tikzpicture}[auto,align=center,inner sep=0pt] 
   \node (s01) {$s_0: \neg p \underline{q}$};
 %  \node[below of=s01,node distance=5mm] (s02) {$\neg p \underline{q}$}; 
  \end{tikzpicture}
  };
%  \draw[-latex] (s0) edge [loop above,->,>=latex] node {$a$} (s0);
    \node[right of=s0,node distance=40mm,circle,draw,inner sep=-10pt,minimum size=1.5cm,label={below:}] (s3) {
    \begin{tikzpicture}[inner sep=0pt]
       \node (s21) {$p \underline{q}$};
%       \node[below of=s21,node distance=12mm] (s22) {\rooms{0}{1}{1} \\[-1mm] $s'_3: \underline{\neg l r} s$};
    \end{tikzpicture}
    };
    \path[-latex] (s0) edge node {$a$} (s3);
   \node[below right of=s0,node distance=28mm,circle,draw,inner sep=-10pt,minimum size=1.5cm,label={below:}] (s2) {
    \begin{tikzpicture}[inner sep=0pt]
       \node (s21) {$p \underline{\neg q}$};
%       \node[below of=s21,node distance=12mm] (s22) {\rooms{0}{1}{1} \\[-1mm] $s'_3: \underline{\neg l r} s$};
    \end{tikzpicture}
    };
    \path[-latex] (s3) edge node {$a$} (s2);
    \path[-latex] (s2) edge node {$a$} (s0);    \node[left of=s2,xshift=-9mm] {$\mathcal{D}^{\{h_3\}}$};
 \end{tikzpicture}
 }
 \] 
  \end{minipage} \hfill 
\begin{minipage}{0.5\textwidth}
 \[
 \scalebox{0.75}{
 \begin{tikzpicture}[auto,align=center]
  % \matrix (A) [matrix of nodes,align=center] {
   % \node (s01) {\rooms{0}{0}{0} \\[-1mm]  $s_0: \neg l \underline{ \neg r \neg s}$} 
    %\\
  % \node[below of=s01,node distance=12mm] (s02) {\rooms{1}{0}{0} \\[-1mm]  $s'_0: l \underline{ \neg r \neg s}$} 
%  };
  \node[circle,draw,inner sep=-10pt,minimum size=1.5cm,label={below:}] (s0) at (0,0) {
 \begin{tikzpicture}[auto,align=center,inner sep=0pt] 
   \node (s01) {$s_0: \neg p \underline{q}$};
 %  \node[below of=s01,node distance=5mm] (s02) {$\neg p \underline{q}$}; 
  \end{tikzpicture}
  };
%  \draw[-latex] (s0) edge [loop above,->,>=latex] node {$a$} (s0);
    \node[right of=s0,node distance=40mm,circle,draw,inner sep=-10pt,minimum size=1.5cm,label={below:}] (s3) {
    \begin{tikzpicture}[inner sep=0pt]
       \node (s21) {$p \underline{q}$};
%       \node[below of=s21,node distance=12mm] (s22) {\rooms{0}{1}{1} \\[-1mm] $s'_3: \underline{\neg l r} s$};
    \end{tikzpicture}
    };
    \path[-latex] (s0) edge node {$a$} (s3);
   \node[below right of=s0,node distance=28mm,circle,draw,inner sep=-10pt,minimum size=1.5cm,label={below:}] (s2) {
    \begin{tikzpicture}[inner sep=0pt]
       \node (s21) {$\neg p \underline{\neg q}$};
%       \node[below of=s21,node distance=12mm] (s22) {\rooms{0}{1}{1} \\[-1mm] $s'_3: \underline{\neg l r} s$};
    \end{tikzpicture}
    };
    \path[-latex] (s3) edge node {$a$} (s2);
    \path[-latex] (s2) edge node {$a$} (s0);    \node[left of=s2,xshift=-9mm] {$\mathcal{D}^{\{h_4\}}$};
 \end{tikzpicture}
 }
 \] 
  \end{minipage} 
\caption{The domains $\mathcal{D}^{\{h_1\}},\dots,\mathcal{D}^{\{h_4\}}$ outputted by $\textsc{Domains}(P,\{\tau\})$ (Algorithm \ref{algo:L1}) with input $P=\{p,q\}$ and the observation trace $\tau= (q,a,q,a,\neg q,a,q,a,q)$ from the door knocking domain.}
\label{fig:combine}
\end{figure}
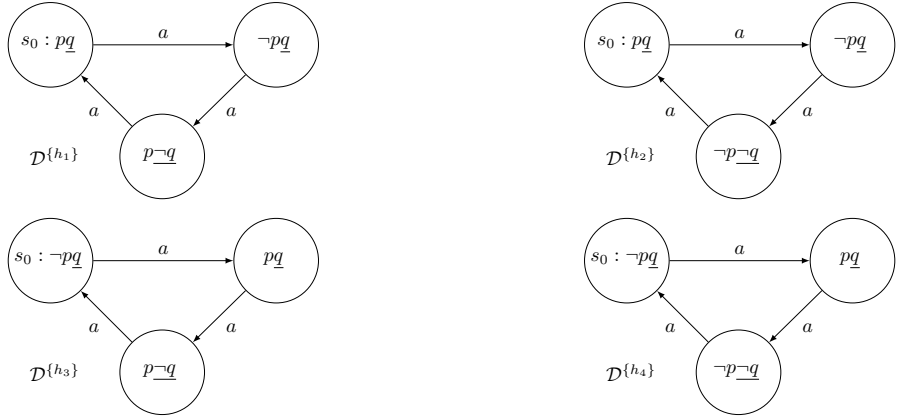
Figure \ref{fig:combine} shows the output of $\textsc{Domains}(P,\Sigma)$ (Algorithm \ref{algo:L1}) when executed with $P=\{p,q\}$ and $\Sigma = \{\tau\}$. The algorithm first calls $\textsc{Histories}(P,\tau)$, which as just seen, generates the set of histories $\{h_1,\dots, h_4\}$. The algorithm then generates one domain $\mathcal{D}^{\{h_i\}}$ for each $1\leq i\leq n$. For instance $\mathcal{D}^{\{h_1\}}$ is generated by the history $h_1 = (pq,q,a,\neg pq,q,a,p\neg q,\neg q,a,pq,q,a,\neg p q, \neg p)$ from Figure~\ref{fig:histories}. The domain $\mathcal{D}^{h_1}$ is simply read off the history $h_1$: the history tells us that the $a$-successor of the state $pq$ is the state $\neg pq$, that the successor of $\neg pq$ is $p\neg q$, and that, finally, the successor of $p\neg q$ is the original state $pq$. This immediately produces the domain $\mathcal{D}^{\{h_1\}}$ shown in the top left of Figure~\ref{fig:combine}, where we also underlined the observations. These observations are similarly read directly off $h_1$. Note that, for all $i$,  $\mathcal{D}^{\{h_i\}}$ is the same domain as $\mathcal{D}_{i}$ from Figure \ref{fig:all_bisim}, which depicted all domains bisimilar to the door knocking domain. The algorithm has thus produced exactly the set of domains bisimilar to the door knocking domain.

Finally, the overall learner, on input $P=\{p,q\}$ and $\Sigma=\{\tau\}$, takes the result of $\textsc{Domains}(P,\{\tau\})$, i.e.~$\{\mathcal{D}^{\{h_1\}},\dots,\mathcal{D}^{\{h_4\}}\}$, and returns the synchronous composition of $\{\mathcal{D}^{\{h_1\}},\dots,\mathcal{D}^{\{h_4\}}\}$. As $\{\mathcal{D}^{\{h_1\}},\dots,\mathcal{D}^{\{h_4\}}\}= \{\mathcal{D}_1,\dots,\mathcal{D}_4\}$, the output of the learner is in fact the behavioural equivalence domain induced by the door knocking domain.
\end{example}

\begin{lemma}\label{theorem:identif_trace} %\tobo{I never got feedback on my new first part of the paper, so my guess is you never read it. But at least for internal consistency, you should look at my definitions when you introduce new definitions relying on them. The condition for universal applicability is built into the current definition of a domain. And also that it is deterministic, but that you probably assume (you didn't mention it explicitly).} \aocc{I removed universal applicability. Determinism is not built in the definition of a domain, though, as I read it, so I added it}
With input $P$ and a sound and complete set $\Sigma$ of observation traces for $\mathcal{D}$, algorithm $\textsc{Domains}(P,\Sigma)$ (Algorithm~\ref{algo:L1}) 
 returns the set of all domains over $P$ that are bisimilar to $\mathcal{D}$. %\tobo{Here you don't mention $P$, even though $L_1$ also needs $P$.}
\end{lemma}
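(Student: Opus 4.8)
The plan is to prove the two inclusions separately, moving freely between trace equivalence and bisimilarity via Lemma~\ref{lemma:bisim_trace}, and exploiting the bounded tell-tale of Proposition~\ref{prop:DFTT_size}: since $\Sigma$ is sound and complete it contains every observation trace with $2^{2|P|}$ actions, hence contains a DFTT for $\mathcal{D}$. Throughout I will use three facts about any $\mathcal{D}^H$ produced by Algorithm~\ref{algo:L1}: it is deterministic (both $T^H$ and $\obsmod^H$, by the determinism check, which is where the observation conflicts of the magic-box kind are filtered out); all its states are reachable from $s^H_0$, as it is a union of histories that all start in $s^H_0$; and its action set is $A$, since every action of $\mathcal{D}$ occurs in $\Sigma$ (Claim~1 of Proposition~\ref{prop:learnable}).

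For the inclusion that every bisimilar domain is produced (completeness), I would fix an arbitrary domain $\mathcal{D}' = ((S',A,T',s'_0),\obs,\obsmod')$ over $P$ with $\mathcal{D}' \bisim \mathcal{D}$. By Lemma~\ref{lemma:bisim_trace}, $\textnormal{ObsTr}(\mathcal{D}',s'_0) = \textnormal{ObsTr}(\mathcal{D},s_0) \supseteq \Sigma$, so for each $\tau_i \in \Sigma$ there is a unique (by determinism) execution trace in $\mathcal{D}'$ realising $\tau_i$; reading off its states gives a history $h_i$ whose hypothesised states lie in the relevant sets $comp(o_j)$ because $\mathcal{D}'$ is noiseless. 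Since $\mathcal{D}^{h_i}$ is a sub-domain of the deterministic $\mathcal{D}'$ it is deterministic, so $h_i$ survives the incremental checks and $h_i \in \textsc{Histories}(P,\tau_i)$. As all $h_i$ share the start state $s'_0$, the combination $H = \{h_1,\dots,h_n\}$ is examined by Algorithm~\ref{algo:L1}, and $\mathcal{D}^H$ is a sub-domain of $\mathcal{D}'$. I would then show $\mathcal{D}^H = \mathcal{D}'$: each state and each transition of $\mathcal{D}'$ lies on a simple path from $s'_0$ of length below $2^{|P|}$, which I extend by universal applicability to a run of length $2^{2|P|}$; its observation trace lies in $\Sigma$, hence equals some $\tau_i$, and by determinism the chosen $h_i$ \emph{is} that run, so it contains the state/transition in question. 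Thus $\mathcal{D}^H = \mathcal{D}'$ is deterministic, passes the final check, and $\mathcal{D}' \in \mathbb{D}$.

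For the converse (soundness), I would fix $\mathcal{D}^H \in \mathbb{D}$ and aim at $\mathcal{D}^H \bisim \mathcal{D}$. First, $\Sigma \subseteq \textnormal{ObsTr}(\mathcal{D}^H,s^H_0)$: each $h_i \in H$ realises $\tau_i$ inside $\mathcal{D}^{h_i} \subseteq \mathcal{D}^H$, and determinism of $\mathcal{D}^H$ forces the run of $\tau_i$'s action sequence from $s^H_0$ to follow exactly that path, reproducing $\tau_i$. To invoke the DFTT property I must also certify that $\mathcal{D}^H$ is a genuine domain over $P$, i.e.\ that every action is universally applicable; this is the main obstacle, since it is entangled with the trace equivalence I am trying to prove. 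I would break the circularity with the key claim: \emph{every} observation trace of $\mathcal{D}^H$ of length at most $2^{2|P|}$ lies in $\textnormal{ObsTr}(\mathcal{D},s_0)$. To see this, take such a trace realised by an action sequence $\beta$, pad $\beta$ to length $2^{2|P|}$, and run it in $\mathcal{D}$ (possible by universal applicability of $\mathcal{D}$) to obtain some $\tau^\ast \in \Sigma$ whose action sequence begins with $\beta$; the history chosen for $\tau^\ast$ witnesses a run of that action sequence in $\mathcal{D}^H$, and determinism of $\mathcal{D}^H$ forces the length-$\ell$ prefix of this run to coincide with the given trace, which is therefore a prefix of $\tau^\ast$ and hence in $\textnormal{ObsTr}(\mathcal{D},s_0)$.

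With the claim established, universal applicability of $\mathcal{D}^H$ follows: for a reachable state $s$, reached by a simple path $\alpha$ with $|\alpha| < 2^{|P|}$, and any action $a$, the claim places the observation trace of $\alpha$ into $\textnormal{ObsTr}(\mathcal{D},s_0)$; extending by $a$ and padding gives a trace of length $2^{2|P|}$ in $\Sigma \subseteq \textnormal{ObsTr}(\mathcal{D}^H,s^H_0)$, whose run in the deterministic $\mathcal{D}^H$ passes through $s$ after $\alpha$ and then takes $a$, so $T^H(s,a)$ is defined. Hence $\mathcal{D}^H$ is a deterministic domain over $P$ with $\Sigma \subseteq \textnormal{ObsTr}(\mathcal{D}^H,s^H_0)$; since $\Sigma$ contains a DFTT for $\mathcal{D}$ (Proposition~\ref{prop:DFTT_size}), condition (ii) of the DFTT definition gives $\mathcal{D}^H \bisim \mathcal{D}$, which completes the soundness inclusion and the proof.
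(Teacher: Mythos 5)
Your proof is correct, and while your completeness direction (every domain bisimilar to $\mathcal{D}$ appears in the output $\mathbb{D}$) is essentially the paper's own argument---read a history off the unique run of each $\tau_i$ in $\mathcal{D}'$, note that all determinism checks pass because each $\mathcal{D}^{h_i}$ is a substructure of the deterministic $\mathcal{D}'$, then use simple-path reachability plus padding to $2^{2|P|}$ actions to force $\mathcal{D}^H=\mathcal{D}'$---your soundness direction takes a genuinely different route. The paper proves soundness by exhibiting an explicit bisimulation: it relates the real state $s^i_j$ visited at step $j$ of the execution trace of $\tau_i$ to the hypothesised state $\bar{s}^i_j$ of the chosen history $h_i$, and verifies the observational, forth and back conditions directly; its key device (Claim 1 there) is that determinism of $T^H$ propagates agreement of hypothesised states along shared action prefixes from $s^H_0$. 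You instead prove $\Sigma\subseteq\textnormal{ObsTr}(\mathcal{D}^H,s^H_0)$, certify that $\mathcal{D}^H$ is a bona fide domain over $P$ (universal applicability, via your padding/prefix-uniqueness claim), and then obtain bisimilarity as a black-box consequence of the tell-tale property of $\Sigma$ (Proposition~\ref{prop:DFTT_size} together with condition (ii) of the DFTT definition). Your prefix-uniqueness claim is the trace-level counterpart of the paper's Claim 1, so both proofs ultimately rest on the same two pillars---determinism of the output and completeness of $\Sigma$ up to $2^{2|P|}$ actions---but they package them differently. The paper's route is self-contained and shows concretely how hypothesised states track real states; your route is more modular, actually putting Proposition~\ref{prop:DFTT_size} to work (the paper proves it but never deploys it in this lemma's proof), and it surfaces a point the paper handles only implicitly while verifying the forth condition: that the algorithm's output must be shown to have a total transition function before bisimilarity with a domain (or the DFTT property, which quantifies over genuine domains) can even be invoked.
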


\begin{proof} Throughout the proof, let $\tau_1,\dots, \tau_k$ be an enumeration of $\Sigma$, let $\mathcal{H}_i$ denote the output of $\textsc{Histories}(P,\tau_i)$, and denote each $\tau_i$ as follows: \[\tau_i = (Obs(s^i_0),a^i_0,Obs(s^i_{1}), \dots, a^i_n, Obs(s^i_{n+1})).\]
The execution trace associated with $\tau_i$ will be denoted $\varepsilon_i$.

($\subseteq$) We show first that $\textsc{Domains}(P,\Sigma)\subseteq \{\mathcal{D}' \mid \mathcal{D}\bisim \mathcal{D}'\}$. Let $\mathcal{D}^H \in \textsc{Domains}(P,\Sigma)$, $H=\{h_1,\dots, h_k\}$. We refer to the components of $D^H$ as $S^H$, $A^H$, etc. We show that $\mathcal{D}^H \bisim \mathcal{D}$. Note that each $h_i\in H$ is a history for $\tau_i$, i.e.~$h_i\in \mathcal{H}_i$. Note that $h_i$ can thus be written as follows:
\[h_{i}= (\bar{s}^i_0,Obs(s^i_0),a^i_0,\bar{s}^i_1,Obs(s^i_{1}), \dots, a^i_n,\bar{s}^i_{n+1}, Obs(s^i_{n+1}))\]
%\[h_{i}=(\bar{s}^i_j, Obs(s^i_j), a^i_j,\bar{s}^i_{j+1},Obs(s^i_{j+1}))_{0 \leq j \leq n}\]
where each $\bar{s}^i_j$ is some state from $comp(o^i_j)$.
Define a relation $Z\subseteq S\times S^H$ by \[\text{$s Z s'$  iff $s=s^i_j$ and $s'=\bar{s}^i_j$ for some $i\in \{1,\dots,k\}$ and some $j\in \{0,\dots,n\}$.}\]
We claim that $Z$ is a bisimulation between $\mathcal{D}$ and $\mathcal{D}^h$, %\tobo{A more correct mathematical statement is that $Z$ is a bisimulation between $\mathcal{D}$ and $\mathcal{D}^h$. Then you use the notion exactly as it was defined and you don't invent a new way to talk about it that might confuse the reader (cf one of my earlier comments).}
i.e.~$Z$ satisfies (i) $s_0Zs^H_0$, (ii) Observational indistinguishability, (iii) Forth, and (iv) Back.
\begin{description}
    \item[$\textnormal{(i)}$] Since each $\tau_i\in \textnormal{ObsTr}(\mathcal{D},s_0)$, $s^1_0=s^2_0=\dots=s^k_0=s_0$. By definition of $\mathcal{D}^H$, $\bar{s}^1_0=\bar{s}^2_0=\dots = \bar{s}^k_0 = s^H_0$ (line 8, Algorithm \ref{algo:L1}). Thus, by definition of $Z$, $s_0 Z s^H_0$.
    \item[$\textnormal{(ii)}$] Let $s^i_jZ\bar{s}^i_j$. From lines 6 and 11 of Algorithm \ref{algo:histories} and the definition of $\mathcal{D}^{h_i}$, it follows that $Obs(s^i_j)=Obs^{h_i}(\bar{s}^i_j)$. Since $\mathcal{D}^H$ is deterministic (line 10, Algorithm \ref{algo:L1}) $Obs^H(\bar{s}^i_j)=Obs^{h_i}(\bar{s}^i_j)$, since $Obs^H = \bigcup_i Obs^{h_i}$ (line 9, Algorithm \ref{algo:L1}). Hence $Obs(s^i_j)=Obs^H(\bar{s}^i_j)$.
    \item[$\textnormal{(iii)}$] Suppose that $sZs'$ and $T(s,a)=t$. As $sZs'$, $s=s^i_j$ and $s'=\bar{s}^i_j$ for some $i\in\{1,\dots, k\}$ and some $j\in\{0,\dots, n\}$. 
    Consider an execution trace of the form
    \[\varepsilon = (s_0,a^i_0,\dots, a^i_{j-1},s, a, t, \dots)\]
   with $n$ actions. Note that $\varepsilon\in\textnormal{Tr}(\mathcal{D},s_0)$, so there is some $\ell\in\{1,\dots,k\}$ such that $\varepsilon=\varepsilon_\ell$. It thus follows that $t=s^\ell_{j+1}$ and that $\tau_\ell \in \textnormal{ObsTr}(\mathcal{D},s_0)$. 
   
    \noindent \emph{Claim 1.} $\bar{s}^i_m = \bar{s}^\ell_m$ implies $\bar{s}^i_{m+1} = \bar{s}^\ell_{m+1}$, for $0\leq m\leq j-1$.
    
    \smallskip
    Suppose for contradiction that for some $m$, $\bar{s}^i_m = \bar{s}^\ell_m$ but $\bar{s}^i_{m+1} \neq \bar{s}^\ell_{m+1}$. From lines 6 and 11 of Algorithm \ref{algo:histories}, we can see that then $(\bar{s}^i_m,a^i_m,\bar{s}^i_{m+1})\in T^{h_i}$ and $(\bar{s}^\ell_m,a^i_m,\bar{s}^\ell_{m+1})\in T^{h_\ell}$. By definition of $\mathcal{D}^H$ (line 9, Algorithm \ref{algo:L1}),
    $(\bar{s}^i_m,a^i_m,\bar{s}^i_{m+1}),(\bar{s}^\ell_m,a^i_m,\bar{s}^\ell_{m+1})\in T^H$. But then, since $\bar{s}^i_{m+1} \neq \bar{s}^\ell_{m+1}$,  $T^H$ is not deterministic, which gives contradiction. This completes the proof of Claim 1.
    
    \medskip
    
    Note that $\bar{s}^i_0=\bar{s}^\ell_0 = s^H_0$, by definition of $\mathcal{D}^H$. From Claim 1, we then get $\bar{s}^i_j = \bar{s}^\ell_j$. From lines 6 and 11 of Algorithm \ref{algo:histories}, we get that $T^{h_\ell}(\bar{s}^i_j,a)=\bar{s}^\ell_{j+1}$. By definition of $\mathcal{D}^H$, $T^{H}(\bar{s}^i_j,a)=\bar{s}^\ell_{j+1}$. As $s'=\bar{s}^i_j$ and $t=s^\ell_{j+1}$, letting $t'=\bar{s}^\ell_{j+1}$ we get: there is a $t'\in S^H$ s.t. $T^H(s',a)=t'$ and $t Z t'$.

    \item[$\textnormal{(iv)}$] Let $sZs'$ and $T^H(s',a)=t'$. As $sZs'$, $s=s^i_j$ and $s'=\bar{s}^i_j$ for some $i\in\{1,\dots, k\}$ and some $j\in\{0,\dots, n\}$. 
    
    From $T^H(s',a)=t'$, by definition of $T^H$, it follows that there is some $T^{h_\ell}$ such that $s'=\bar{s}^\ell_m$, $t'=\bar{s}^\ell_{m+1}$ and $T^{h_\ell}(s',a)=t'$. As $\bar{s}^i_j=s'=\bar{s}^\ell_m$, we get $T^H(\bar{s}^i_j,a)=\bar{s}^\ell_{m+1}$. As actions are universally applicable, there is some $t\in S$ such that $T(s,a)=t$. Consider an execution trace of the form 
    \[\varepsilon = (s_0,a^i_0,\dots, a^i_{j-1},s, a, t, \dots)\]
   with $n$ actions. Note that $\varepsilon\in\textnormal{Tr}(\mathcal{D},s_0)$, so there is some $p\in\{1,\dots,k\}$ such that $\varepsilon=\varepsilon_p$. It thus follows that $t=s^p_{j+1}$ and $\tau_p \in \textnormal{ObsTr}(\mathcal{D},s_0)$.  From Claim 1, we get $\bar{s}^i_j=\bar{s}^p_j$, so from lines 6 and 11 of Algorithm \ref{algo:histories}, we get $T^{h_p}(\bar{s}^i_j,a)= \bar{s}^p_{j+1}$. By definition of $T^H$, we get $T^{H}(\bar{s}^i_j,a)= \bar{s}^p_{j+1}$. But since we already know that $T^H(\bar{s}^i_j,a)=\bar{s}^\ell_{m+1}$ and that $T_H$ is deterministic, we get $\bar{s}^p_{j+1}=\bar{s}^\ell_{m+1}$. Since $s=s^i_j$, $t=s^p_{j+1}$  and$t'=\bar{s}^\ell_{m+1}$, we get: $T(s,a)=t$ and $tZ t'$.

\end{description}

($\supseteq$) We show now that $\textsc{Domains}(P,\Sigma)\supseteq \{\mathcal{D}' \mid \mathcal{D}\bisim \mathcal{D}'\}$. By Lemma \ref{lemma:bisim_trace}, this is equivalent to showing $\textsc{Domains}(P,\Sigma)\supseteq \{((\states',\actions',\transmod',s'_0),\obs', \obsmod')\mid \textnormal{ObsTr}(\mathcal{D},s_0) = \textnormal{ObsTr}(\mathcal{D'},t_0)\}$, which is what we will do. Assume that  $((\states',\actions',\transmod',t_0),\obs, \obsmod)$ satisfies $\textnormal{ObsTr}(\mathcal{D},s_0) = \textnormal{ObsTr}(\mathcal{D'},t_0)$. Let $\tau_i\in \Sigma$. Since $\tau_i\in\textnormal{ObsTr}(\mathcal{D},s_0)$,$\tau_i\in\textnormal{ObsTr}(\mathcal{D}',s'_0)$. Since $s^i_0\in comp(s^i_0)$ and $s^i_1\in comp(s^i_1)$, looking at line 6 of Algorithm \ref{algo:histories}, we can see that at step $0$ of $\textsc{Histories}(P, \tau_i)$,  history $h=(s^i_0,Obs(s_0),a^i_0,s^i_{1},Obs(s_1))$ is created. At each step $j>0$, since $s^i_j\in comp(s^i_j)$ and $s^i_{j+1}\in comp(s^i_{j+1})$, looking at lines 12-13 of Algorithm \ref{algo:histories}, we can see that the extension \[h'=(s^i_0,Obs(s_0),a^i_0,s^i_{1},Obs(s_1)\dots,s^j,Obs(s^i_j),a^i_j,s^i_{j+1},Obs(s^i_{j+1}))\] of $h$ is such that $h'\in H_j$ (line 15 of Algorithm \ref{algo:histories}). For $\tau_i$, let $h_i$ denote the history \[h_i=(s^i_0,Obs(s_0),a^i_0,s^i_{1},Obs(s_1)\dots,s^i_{|S|},Obs(s^i_{|S|}),a^i_{|S|},s^i_n,Obs(s^i_n)).\] As $h_i$ is one of the histories created at the last step of the iteration in $\textsc{Histories}(P, \tau_i)$, $h_i$ is a member of the output of Algorithm \ref{algo:histories} (line 21). This process takes place for each $\tau_i\in \Sigma$ when $\textsc{Histories}(P, \tau_i)$ is executed, so the output of each call of $\textsc{Histories}(P, \tau_k)$ includes the history $h_i$, for $1\leq i\leq k$. Note that all $h_i$ start in the same state $s_0$. Moreover, since $\Sigma$ includes all observation traces with $2^{2|P|}$ actions, if $\mathcal{D}'$ has $2^{|P|}$ states, each state in $\mathcal{D}'$ is reached by one such observation trace. i.e.~for each transition $(s,a,t)\in T'$, there is some observation trace $\tau_i \in \Sigma$ is of the form $(\dots, Obs'(s),a,Obs'(t), \dots)$. Hence, $D^{h_i}$ has $(s,a,t)\in T^{h_i}$, $Obs^{h_i}(s)=Obs'(s)$ and $Obs^{h_i}(t)=Obs'(t)$. Thus, by construction, $\mathcal{D}^{\{h_1,\dots,h_n\}}$ will then contain all and only the states, transitions and state observations of $\mathcal{D}'$, which means that $\mathcal{D}^{\{h_1,\dots,h_n\}}= \mathcal{D}'$. And as $\mathcal{D}'$ is deterministic (since trace equivalence is defined for deterministic domains), this means that $\mathcal{D}^{\{h_1,\dots,h_n\}}$ is in the output of $\textsc{Domains}(P,\Sigma)$.
\end{proof}
Combining Lemma \ref{theorem:identif_trace} and Lemma \ref{lemma:bisim_trace}, we then immediately get that $\textsc{Domains}(P,\Sigma)$ returns  the set of domains over $P$ that are trace equivalent to $\mathcal{D}$.

%\begin{corollary} With input $P$ and a sound and complete set $\Sigma$ of observation traces for $\mathcal{D}$,  $\textsc{Domains}(P,\Sigma)$ returns the set of domains over $P$ that are trace equivalent to $\mathcal{D}$.
%\end{corollary}
%\begin{proof}
%Follows directly from Proposition \ref{theorem:identif_trace} and Lemma \ref{lemma:bisim_trace}.
%\end{proof}

\begin{theorem}\label{cor:L_1}
Consider the learner that on input $P$ (set of propositions) and $\Sigma$ (set of observation traces) returns the synchronous composition of the domains computed by running $\textsc{Domains}(P,\Sigma)$  (Algorithm \ref{algo:L1}). This learner outputs a domain that is behaviourally correct with respect to implicit knowledge about
%The learning algorithm $L_1(P,\Sigma)$ applied to a sound and complete set $\Sigma$ of observation traces for $\mathcal{D}$, outputs a domain that is behaviourally correct with respect to implicit knowledge about
$\mathcal{D}$.
\end{theorem}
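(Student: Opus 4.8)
The plan is to derive the theorem almost directly from Lemma~\ref{theorem:identif_trace} together with Definition~\ref{definition:behav_equiv_domain} and the definition of behavioural correctness with respect to implicit knowledge; all the substantive work has already been done in the lemma. As in the lemma's hypothesis, I would first make explicit the standing assumption (stated informally just before the algorithms) that $\Sigma$ is a sound and complete set of observation traces for $\mathcal{D}$ from its initial state. Without this the lemma does not apply, and with it the remaining argument is essentially a matter of unwinding definitions.

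The first step is to invoke Lemma~\ref{theorem:identif_trace} to conclude that $\textsc{Domains}(P,\Sigma)$ returns exactly the set $\mathbb{D}$ of all domains over $P$ that are bisimilar to $\mathcal{D}$. Before forming the synchronous composition I would verify that this $\mathbb{D}$ meets the premises of Definition~\ref{definition:behav_equiv_domain}, namely that it is a \emph{finite} set of domains that are all \emph{mutually} bisimilar. Finiteness follows from the fact that there are only finitely many domains over the finite data $P$, $A$, $\obs$ (the same observation used in the proof of Proposition~\ref{prop:DFTT_exists}). Mutual bisimilarity follows because $\bisim$ is an equivalence relation on domains---symmetry by reversing a bisimulation, transitivity by composing two---so any two members of $\mathbb{D}$, each being bisimilar to $\mathcal{D}$, are bisimilar to one another; and $\mathbb{D}$ is nonempty since $\mathcal{D}\bisim\mathcal{D}$.

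With $\mathbb{D}$ recognised as the set of \emph{all} domains bisimilar to $\mathcal{D}$, Definition~\ref{definition:behav_equiv_domain} states verbatim that the synchronous composition of $\mathbb{D}$ \emph{is} the behavioural equivalence domain induced by $\mathcal{D}$. Hence the learner's output coincides with that domain. Since every domain is isomorphic to itself via the identity map, the output is in particular isomorphic to the behavioural equivalence domain induced by $\mathcal{D}$, which is precisely the condition for being behaviourally correct with respect to implicit knowledge about $\mathcal{D}$. This closes the argument.

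I do not expect any genuine obstacle at the level of this theorem: the real difficulty---showing that stitching histories together and discarding the non-deterministic candidates yields exactly the bisimulation class of $\mathcal{D}$---is discharged inside Lemma~\ref{theorem:identif_trace}, and the passage from trace equivalence to bisimilarity is handled by Lemma~\ref{lemma:bisim_trace}. The only points requiring care here are the two routine well-definedness checks above (finiteness and mutual bisimilarity of $\mathbb{D}$), which ensure that the synchronous composition is defined and literally matches the construction named in Definition~\ref{definition:behav_equiv_domain}.
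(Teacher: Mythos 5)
Your proof is correct and follows essentially the same route as the paper's: invoke Lemma~\ref{theorem:identif_trace} to conclude that $\textsc{Domains}(P,\Sigma)$ returns exactly the bisimulation class of $\mathcal{D}$, then observe that by Definition~\ref{definition:behav_equiv_domain} the synchronous composition of that class \emph{is} the behavioural equivalence domain, so the output is (trivially isomorphic to, hence) behaviourally correct with respect to implicit knowledge. Your added well-definedness checks (finiteness and mutual bisimilarity of the returned set) and the explicit flagging of the sound-and-complete assumption on $\Sigma$ are routine refinements that the paper's terser proof leaves implicit.
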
 
\begin{proof} From Lemma \ref{theorem:identif_trace}, we know that $\textsc{Domains}(P,\Sigma)$ returns the set of deterministic domains over $P$ that are bisimilar to $\mathcal{D}$, $\{\mathcal{D}' \mid \mathcal{D}\bisim \mathcal{D}'\}$. The learner then takes this set of domains and returns their synchronous composition. Since, by definition, the behavioural equivalence domain induced by $\mathcal{D}$ is the synchronous composition of 
$\{\mathcal{D}' \mid \mathcal{D}\bisim \mathcal{D}'\}$, the output of the learner is exactly the behavioural equivalence domain induced by $\mathcal{D}$.
\end{proof}

\section{Related work \label{sec:related}}

This work builds upon the framework by Bolander and Gierasimczuk~\cite{bolander2015learning,bolander2017learning}, where two basic learnability criteria for actions were studied: finite identifiability (conclusively inferring a representation of the correct action in finite time) and identifiability in the limit (inconclusive convergence to a representation of the right action). It has been shown that deterministic actions are finitely identifiable, while arbitrary (non-deterministic) actions are only identifiable in the limit, in the fully observable setting. Moreover, the paper presents exact learning algorithms for deterministic actions that produce DEL event models as output. Our work continues this line of research, since it aims at exact learning, it bounds the size of the required input, and it uses DEL as the action representation language (at least for the first of our learners). The main difference is that the present work considers learning in partially, rather than fully observable, domains. 

Our techniques are inspired by the tools of the theory of inductive inference (see, e.g., \cite{JORS99,OJMW97}). The condition of behaviourally correct learning is closely related to the classical behaviourally correct learning of recursive functions (\cite{Barzdin:1974wn}, see also \cite{Case:1983vs}). The capacity to extract non-explicit knowledge, created by requiring the learned structures to be deterministic, bears close resemblance to the increased power of learning when transitioning from the more general recursive language (set) learning \cite{gold1967language} to recursive function learning (for a comparison of the two in the context of BC-learning see \cite{Jain:2004tj}). The present paper is by no means the first to transfer techniques from inductive inference to the domain of DEL. A link was introduced in \cite{Gie09a,Gierasimczuk09}, where it was shown that finite identification \cite{mukouchi1992characterization,lange1992types} can be modelled in Public Announcement Logic \cite{Pla89}, and that the elimination process of learning by erasing \cite{Lange:1996aa} can be seen as iterated upgrade of dynamic doxastic logic \cite{Ben04}. The revival of finite identification resulted in designing new types of learners, such as preset learners and fastest learners, and gave new insights into the complexity of obtaining definite finite tell tales (DFTTs, also used our present paper) \cite{Gierasimczuk:2010aa,gierasimczuk2012complexity}. Some of those results were later used to investigate properties of finite identification from complete data \cite{Jongh:2019aa,Sandoval:2020ut}.
The more general approach of identifiability in the limit and its connections to doxastic upgrades allowed comparing the learning power of various belief revision methods \cite{BGS2019}, and gave topological characterisations of learnability \cite{BGS2015} followed by an introduction of a dynamic logic for learning theory (DLLT, see \cite{Baltag:2019aa}).  

There is a rich literature on learning symbolic action models from experience. Research on action learning began in the late 1980s and early 1990s, with systems such as LIVE \cite{shen1989rule}, EXPO \cite{gil1992acquiring} and OBSERVER \cite{wang1996learning}, which learned actions represented as STRIPS-like rules in fully observable domains. Since then, the literature has grown steadily, including some relatively recent overview papers~\cite{jimenez2012,arora2018}. Recent work can be compared along several dimensions, such as: the type of observations available to the learner (full, partial, or noisy); the type of actions learned (conditional, unconditional, stochastic, etc.); the methods used (inductive logic programming, neural networks, satisfiability techniques, etc.); and the learning guarantees provided by the learning algorithm (approximate or exact learning). 

We discuss those works that are most directly related to the results of this paper.

In recent years, several works have appeared that can learn action descriptions in partially observable environments \cite{shahaf2006learning,amir2008learning,yang2005learning,yang2007learning,zhuo2010learning,mourao2012learning,zhuo2009learning,molineaux2014learning,cresswell2013acquiring,mccluskey2009action,AINETO2019}. In these works, partial observability is induced by selecting at random $n<|P|$ propositional symbols to observe, for each state in the learning input. Each observation of a state $s$ in the learning input is subjected to this process independently, so what is observed about $s$ each time it is visited can be different. In other words, partial observability is unsystematic and observations can be thought of as random subsamples of the full state observation. No attempt is therefore made to \emph{learn} the observation function, as it is just modelled as a random corruption process. The treatment of partial observability in these papers is thus very different from ours. We assume that there is a domain-specific observation function, which is in fact deterministic, and consider the problem of learning \emph{both} the transition and observation functions. We don't only want to learn as much as possible about the underlying transition system; we also want to learn about our own observational limits (and hence the observational limits of other agents in the same state).

Besides treating partial observability differently, most of these works aim at approximate learning: their algorithms are experimentally evaluated, based on an error function, and typically offer no upper bounds on estimation errors. In contrast, we have learning goals that are exact in nature, and prove that the goals are guaranteed to be reached given certain inputs. Amongst learners for partially observable domains, work of
Amir, Chang and Shahaf~\cite{amir2008learning,shahaf2006learning} does present exact algorithms for identifying the effects and preconditions of deterministic actions in partially observable domains. The algorithms take an observation trace as input and return a set of deterministic action models that could have led to those observations. The output of the algorithms is exact in the sense of producing all and only those action models that could have led to these observations. However, as observations are random, no attempt is made to learn the observation function, and no bound is given in terms of the number of observations needed to reach their learning goal. Our work differs from theirs in three respects: learning about the observation function, representing actions using DEL, and characterising and bounding the number of observations needed to reach our learning goals.\footnote{We require sound and complete sets of observations for explicit learning, and sound and complete sets of observation traces for implicit learning.}

\section{Final remarks and future work\label{sec:conclusion}}

We conclude by reflecting on the assumptions made in this paper and exploring some avenues for future research.

\paragraph{Multi-agent learning} We have assumed in this paper that the environment involves a single agent. As we mentioned in Section \ref{sec:DEL}, the ultimate goal of the line of research introduced here is to be able to generalise to the multi-agent case, where a learner might end up learning not only what an action does and what is observed, but also what other agents will observe about such an action, and the knowledge or beliefs they will adopt as a result. Learning even just what is \emph{explicitly} known by others is hard, since it requires knowing what they are directly observing and how they are observing it. While an agent knows what it observes, it often doesn't know exactly what others observe. Sometimes common knowledge of what each agent observes can be achieved in a collaborative setting by a process such as \emph{joint attention} \cite{moore2014joint}, or by communicating what you observe. But in a non-cooperative setting or one in which communication is imperfect, the problem becomes highly non-trivial. Learning what others can observe, and what they believe or know as a result, is crucial for Theory of Mind reasoning~\cite{premack1978does} and epistemic planning~\cite{bolander2011epistemic} in unknown domains, and thus remains a key goal for future work.

\paragraph{Situated proactive learning}
We have assumed that the learner for implicit knowledge has access to all possible observation traces from the initial state $s_0$. These traces may have been generated by an expert agent or teacher that knows how to traverse the state space to produce such traces. For a situated, proactive learner, i.e.~a learner that has to gather such traces starting at $s_0$, the task of generating the traces is non-trivial and in some cases impossible. If the graph of the transition function is strongly connected (so that it is possible to get from every state to every other state through a sequence of actions), then all such observation traces can be generated from the initial state in one run through the graph, if the agent has a way of recognising the initial state each time it is revisited (it might not, due to the observational limitations). If the agent does not have a way of recognising the initial state, it is less clear how it would be able to explore the unknown graph of the transition function and reach a point in which it's certain that all possible traces have been produced. In some cases, doing so is impossible, e.g. the agent might get stuck in a `loopy' state from which all outgoing edges are loops. A solution to this might be to allow restarts as in reinforcement learning.

 The question of how to collect all observation traces as a situated proactive learner is closely related to the problem of \emph{exploring an unknown graph}~\cite{Deng:1999aa,albers2000exploring,PANAITE1999}.

\paragraph{Relaxing domain assumptions}
We have focused on deterministic domains, in which actions are always applicable and every state is reachable from the initial state $s_0$. In some cases, these assumptions may not be the most natural. Dropping some of them and generalising the learners in the paper is therefore a possible direction for future research. In some cases, e.g. in the learning algorithm for implicit knowledge, it may be possible to deal with non-deterministic domains by dropping the requirement that only deterministic domains are produced via Algorithms \ref{algo:histories} and \ref{algo:L1}. The algorithms would then produce a larger set of domains matching the observation traces, which includes non-deterministic ones.

\paragraph{Computational complexity}
 We have bounded the \emph{sample complexity} of our learning algorithms. The sample complexity of a learning algorithm is the size of the input required in order to achieve its learning goal. In the case of explicit knowledge, we have bounded the sample complexity of $\textsc{Learner}(P,A,\sigma)$ by the number of transitions in the system (i.e.~the size of a sound and complete set of observations). In the case of implicit knowledge, we have bounded the sample complexity of the learner of Theorem~\ref{cor:L_1} by the number of observation traces with $2^{2|P|}$ actions starting from $s_0$, i.e.~the size of a sound and complete set of observation traces).  On the other hand, we have left time and space complexity issues as future work. These complexities will of course depend on the implementation details. For example, the pseudo-code in Algorithm \ref{algo:histories} should not be implemented by creating the histories explicitly. There is a lot of repeated structure in histories, which can be avoided. Note that simply implementing the set of histories as paths in a tree, as depicted in Figure \ref{fig:histories}, would already yield an improvement, over storing each history separately. %All histories $h_1,\dots, h_n$ with the same starting state $s_0$ generated by the algorithm correspond to paths starting at the tree rooted at $s_0$. Thus, there is no need to store $s_0$ $n$ times, one per history. In fact, each internal node of a tree that reaches $m$ leaves occurs in the $m$ history names labelling these leaves in Figure \ref{fig:histories}; so storing the histories as separate objects creates unnecessary duplication. 
 This tree implementation is straightforward; more advanced and space-efficient ones may be possible, e.g. using action models, or some other compact encoding of domains.  

 \paragraph{A more compact representation of the learning output}
 The algorithm presented for implicit knowledge produces the behavioural equivalence domain as output. This domain can of course be very large, as its state space has size $O(2^{|P|})$. We leave for future work the task of learning the behavioural equivalence domain in a representation that is possibly more compact, such as a collection of DEL event models. This would then match the output we provided in the case of explicit knowledge. Ideally, we would compute one set of DEL event models that capture both implicit and explicit knowledge via two distinct indistinguishability relations. This would allow learners to reason about both their implicit and explicitly knowledge and how they are interrelated. An even more compact representation could possibly be obtained with first-order DEL (FODEL) action schemas such as those of Liberman et al.~\cite{liberman2020}, or with the succinct event models of Charrier and Schwarzentruber~\cite{charrier2017}. Since FODEL action schemas are represented with variables from first-order logic, learning them would require a non-trivial extension of existing methods. In a domain represented with first-order logic, the agent could possibly be observing ground atoms. It would then have to \emph{generalise} from these ground observation to achieve the level of abstraction characteristic of action schemas. Perhaps a procedure inspired in \emph{least general generalisation} \cite{plotkin1970note}, widely used for generalising first-order clauses in Inductive Logic Programming, could be adapted for this. Several algorithms for learning less expressive action models, such as those of the Planning Domain Definition Language (PDDL), also perform some type of generalisation, and could provide inspiration. 

\section{Acknowledgements}
Nina Gierasimczuk's research was funded by the Polish National Science Centre Grant 2015/19/B/HS1/03292.
%\newpage

\printbibliography

\end{document}